\documentclass{article}

 \usepackage[main, final]{neurips_2026}

\usepackage[utf8]{inputenc} %
\usepackage[T1]{fontenc}    %
\usepackage{hyperref}       %

\hypersetup{
  colorlinks = true,
  linkcolor  = red!70!black,    %
  citecolor  = green!45!black,  %
  urlcolor   = blue!70!black    %
}

\usepackage{url}            %
\usepackage{booktabs}       %
\usepackage{amsfonts}       %
\usepackage{amsthm}
\usepackage{amsmath}
\usepackage{amssymb}
\usepackage{nicefrac}       %
\usepackage{microtype}      %
\usepackage[table]{xcolor}       %
\usepackage{enumitem}
\usepackage{nicefrac}
\usepackage{mathtools}
\usepackage{multirow}
\usepackage{graphicx}
\usepackage{subcaption}
\usepackage{threeparttable}
\usepackage{wrapfig}
\usepackage{etoc}

\usepackage{changes}
\definechangesauthor[color=teal]{A}
\definechangesauthor[color=cyan]{B}
\setdeletedmarkup{\textcolor{black}{\textbf{[}\textit{#1}\textbf{]}}}

\newtheorem{theorem}{Theorem}
\newtheorem{corollary}{Corollary}
\newtheorem{proposition}{Proposition}
\newtheorem{lemma}{Lemma}

\newtheorem{definition}{Definition}

\newenvironment{lemma_num}[2]{%
    \renewcommand{\thelemma}{#1}%
    \begin{lemma}[#2]%
}{%
    \end{lemma}%
    \addtocounter{lemma}{-1}%
}

\newenvironment{theorem_num}[2]{%
    \renewcommand{\thetheorem}{#1}%
    \begin{theorem}[#2]%
}{%
    \end{theorem}%
    \addtocounter{theorem}{-1}%
}

\newcommand{\Sd}{\mathbb{S}^{d-1}}

\newcommand{\inner}[2]{\left\langle #1,#2\right\rangle}

\usepackage{array}
\usepackage{adjustbox}

\definecolor{simocolor}{RGB}{220, 20, 60}      %
\definecolor{niccocolor}{RGB}{30, 144, 255}  
\usepackage{soul}

\definecolor{slerpblue}{RGB}{235,244,255}
\definecolor{blockgray}{RGB}{245,245,245}
\newcommand{\slerprow}{\rowcolor{slerpblue}}

\newcommand{\rotcol}[1]{\rotatebox[origin=c]{90}{\strut #1}}

\definecolor{mygreen}{RGB}{0,150,0}
\definecolor{myred}{RGB}{200,0,0}
\newcommand{\compyes}{\,{\color{mygreen}\textbf{\checkmark}}}
\newcommand{\compno}{\,{\color{myred}\textbf{\(\times\)}}}

\newcommand{\plainaccur}[1]{\hspace{-8pt}#1}

\title{Spherical Interpolation for Backward-Compatible Multimodal Representations}

\author{%
  Simone Ricci$^{1,2}$\thanks{Corresponding author: \texttt{simone.ricci@unifi.it}.} \quad
  Niccolò Biondi$^{3}$ \quad
  Federico Pernici$^{1,2}$\\
  \\
  $^{1}$DINFO (Department of Information Engineering), University of Florence, Italy \quad \\
  $^{2}$MICC (Media Integration and Communication Center) \\
  $^{3}$University of Trento, Italy \\
}

\begin{document}

\maketitle

\begin{abstract}
Contrastive vision-language models map visual and textual representations into a shared normalized embedding space, making cosine similarity the natural metric for cross-modal retrieval. A practical challenge arises during model upgrades: independently trained models generally produce incompatible representation spaces, so replacing a deployed model typically requires recomputing embeddings for the entire gallery, which is prohibitively expensive at scale.
Orthogonal post-hoc alignment can partially mitigate this problem by mapping new-model queries into the old-model gallery space. However, because independently trained models can differ in fine-grained representation structure, the orthogonal alignment remains approximate, leaving a residual angular discrepancy between the old-model query and the aligned new-model query.
We study whether interpolation along the spherical geodesic between these two normalized query representations can improve retrieval without re-indexing the gallery.
We characterize when this path contains an interior query direction closer to an idealized retrieval-optimal direction than either endpoint, and connect this characterization to Recall@$K$ through a local margin-based certification result.
Experiments across multiple benchmarks and model families show that post-alignment spherical interpolation improves over orthogonal alignment alone, recovering backward-compatibility in most evaluated settings.
Consistent with our geometric characterization, per-query oracle analysis shows that retrieval-favorable interior points occur frequently in practice. Code is available at \url{https://github.com/miccunifi/SLERP_backward_compatibility}.
\end{abstract}

\etocdepthtag.toc{mtmain}   %

\section{Introduction}

Contrastive vision-language models (VLMs) have become a standard foundation for multimodal retrieval and zero-shot classification \cite{radford2021learning, jia2021scaling_up_visual_and_vision_language, zhai2023sigmoid, singh2022flava, tschannen2025siglip}. By mapping images and text into a shared normalized embedding space, these models leverage cosine similarity for cross-modal retrieval and prompt-based classification. 
As the ecosystem of pretrained models grows through public model releases and model hubs \cite{wolf2019huggingface, marcel2010torchvision, rw2019timm}, a practical question arises: how can a deployed retrieval system benefit from a stronger model without recomputing all embeddings produced by the old one?

This question is particularly important in large-scale retrieval systems. In many production deployments, galleries containing millions or billions of items have already been encoded and indexed, making index reconstruction after every model update computationally prohibitive \cite{johnson2017billion_scale_similarity_search, shen2020towards_backward_compatible, meng2021learning_compatible_embeddings, ramanujan2022forward, jaecklefastfill}.
Moreover, re-indexing may be infeasible when the original raw data are no longer accessible due to privacy constraints, storage limitations, or data retention policies \cite{zhang2022towards_universal_backward_compatible, biondi2024stationary, price2019privacy}. In such settings, backward-compatible upgrades that preserve the existing index may be the only viable option.
Even when re-indexing is feasible, progressive upgrades create a migration period requiring compatibility with the existing gallery.

This compatibility challenge is increasingly common as advances in architectures, optimization, and training data drive frequent model upgrades \cite{touvron2023llama, gunasekar2023textbooks, biderman2023pythia, raffel2023building, yadav2024survey}.
Yet such updates can compromise backward-compatibility \cite{shen2020towards_backward_compatible} and alter the behavior of downstream applications in undesirable ways \cite{shen2020towards_backward_compatible, yan2021positive, meng2021learning_compatible_embeddings, biondi2024stationary, echterhoff2024muscle, bansal2019beyond, riccimargin, biondi2026stationary}.
Independently trained models rarely produce directly comparable representations \cite{li2015convergent}, so directly replacing a query encoder can break compatibility with an existing gallery. Backward-compatible training \cite{shen2020towards_backward_compatible} addresses this issue by imposing restrictive constraints on the new-model training that can degrade its performance \cite{zhou2023bt2, ricci2024backward}. 
A recent alternative is post-hoc alignment, which decouples model improvement from compatibility constraints by training the new model independently and restoring compatibility afterward through a lightweight mapping between representation spaces \cite{ramanujan2022forward, jaecklefastfill, ricci2025orthogonality}.
This approach is motivated by the manifold hypothesis and related views on latent-space convergence, which suggest that functionally similar models may differ largely by simple transformations of a shared latent structure \cite{fefferman2016testing, huh2024platonic_representation_hypothesis, maiorca2023latent, fumero2024latent, ricci2025orthogonality}.

For contrastive VLMs, orthogonal Procrustes provides a natural alignment map: embeddings are normalized and compared by cosine similarity, so the resulting orthogonal transformation preserves inner products and therefore the image--text geometry underlying retrieval and zero-shot classification \cite{gupta2026canonicalizing_multimodal_contrastive}.
Moreover, agreement of the multimodal similarity kernel between independently trained contrastive VLMs identifies a single orthogonal map shared across image and text encoders, enabling cross-modal transfer of an alignment estimated from one modality \cite{gupta2026canonicalizing_multimodal_contrastive}.
Related unimodal literature further supports orthogonal transformations over unconstrained linear maps, particularly when the new model is more informative, as an isometry preserves rather than distorts its discriminative geometry \cite{maiorca2023latent, wu2022generalized, ricci2025orthogonality}.
However, independent training can induce representation differences that are not captured by a single global isometry.
Orthogonal post-hoc alignment is therefore approximate: the aligned new-model query embedding will not, in general, coincide with the embedding that the old model would have produced for the same input, especially when models share coarse semantic structure but differ in fine-grained organization \cite{koepke2026cave}.
Consistent with this observation, Fig.~\ref{fig:residual} shows that a non-negligible residual angular mismatch remains after orthogonally aligning SigLIP2 to CLIP ViT-B/32, both on the alignment dataset and under zero-shot transfer.

\begin{figure}[t]
    \centering

    \begin{subfigure}[t]{0.49\columnwidth}
        \centering
        \includegraphics[width=0.49\linewidth]{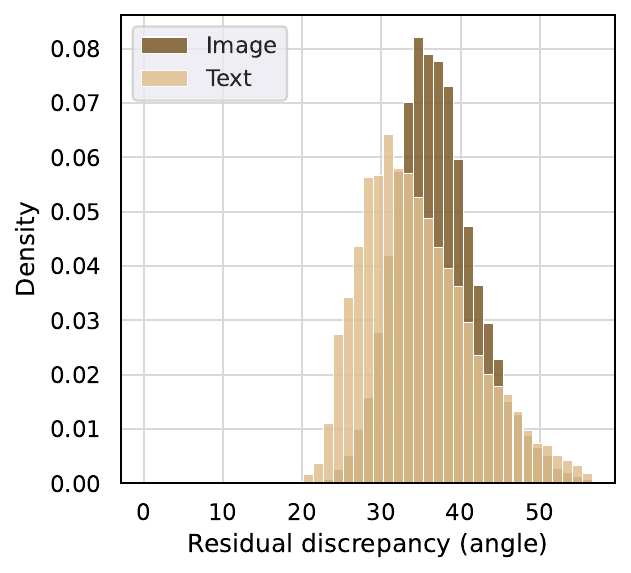}
        \hfill
        \includegraphics[width=0.49\linewidth]{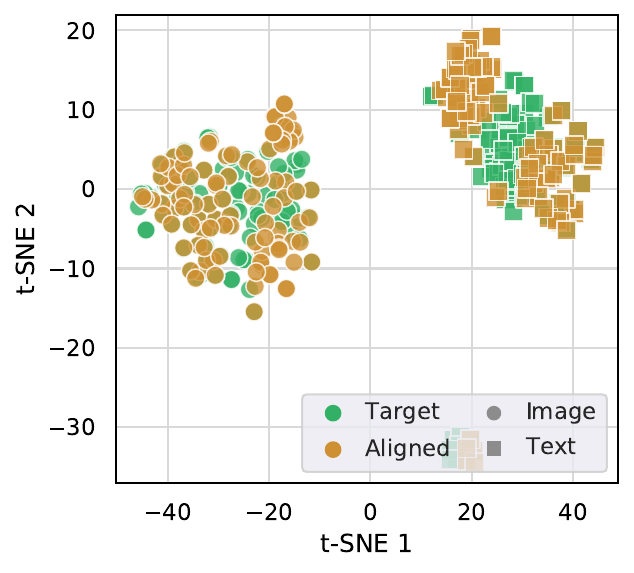}
        \caption{CC3M validation split, used to estimate the orthogonal map.}
        \label{fig:cc3m_residual_gap}
    \end{subfigure}
    \hfill
    \begin{subfigure}[t]{0.49\columnwidth}
        \centering
        \includegraphics[width=0.49\linewidth]{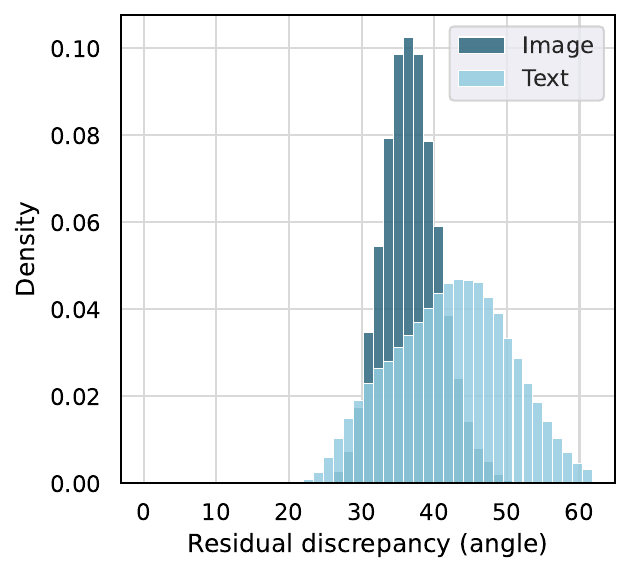}
        \hfill
        \includegraphics[width=0.49\linewidth]{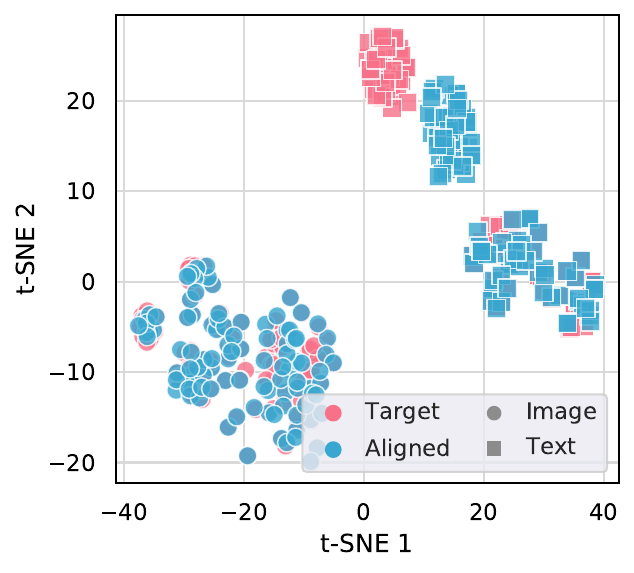}
        \caption{Flickr30k dataset, used to evaluate the estimated orthogonal map in a zero-shot setting.}
        \label{fig:flickr_residual_gap}
    \end{subfigure}

    \caption{Residual discrepancy after orthogonal alignment from SigLIP2 to CLIP ViT-B/32. We show two complementary views: the distribution of residual angular mismatch between paired embeddings from the two models, and a 2D t-SNE projection of the corresponding feature spaces. 
    The remaining angular mismatch after alignment suggests that the orthogonal map leaves a residual cross-model gap, motivating a post-alignment interpolation strategy.}
    \label{fig:residual}
\end{figure}

Motivated by the residual angular discrepancy, we treat the old-model query and the aligned new-model query as endpoints in a shared normalized representation space.
Rather than assuming that either endpoint is optimal for retrieval, we ask whether an intermediate direction along the spherical geodesic connecting them can be more effective against the old-model gallery.
Because retrieval is based on cosine similarity between normalized embeddings, the corresponding spherical interpolation is the minor geodesic between the two query directions. We parameterize this path using spherical linear interpolation (SLERP) \cite{shoemake1985animating_rotation}, which traverses the geodesic at constant angular speed and therefore gives the interpolation weight a consistent geometric interpretation across queries.
We formalize this intuition by considering an idealized retrieval-optimal query direction, used only as an analytical reference, and characterizing when the interpolation geodesic contains an interior point with a smaller angular gap to this direction than either endpoint. Geometrically, this occurs exactly when the normalized projection of the retrieval-optimal direction onto the interpolation plane lies in the relative interior of the arc; otherwise, the optimum along the arc is attained at an endpoint. We then connect this angular characterization to top-$K$ retrieval through a margin perturbation bound and a local Recall@$K$ certification result.

We propose an approach that requires no model retraining or gradient-based optimization: the deployed gallery remains unchanged, while the new-model query is mapped into the old-model space by orthogonal Procrustes alignment and then interpolated with the old-model query using a fixed weight selected on the alignment support set.
Empirically, we evaluate backward-compatible cross-modal retrieval across multiple benchmarks and model families. Orthogonal alignment provides a useful shared coordinate system but often fails to achieve backward-compatibility on its own, whereas post-alignment spherical interpolation improves over orthogonal alignment and frequently restores compatibility.
Controlled ablations show that endpoint combination accounts for most of the mean retrieval improvement, while support-set weight selection primarily improves compatibility robustness. The weight selected on the same CC3M alignment support set used to estimate the Procrustes map transfers across retrieval datasets, achieving performance close to the per-dataset optimum without target-test tuning. A per-query oracle analysis, reported only as an upper bound, shows that retrieval-favorable interior positions occur frequently in practice; flip-rate analysis further reveals that the retrieval gains arise from a favorable balance between positive and negative retrieval flips. Finally, the approach remains effective when the gallery is re-indexed and in zero-shot classification.

Our main contributions are:
\begin{itemize}
    \item We geometrically characterize when spherical interpolation between old-model and aligned new-model queries contains an interior direction closer to an idealized retrieval-optimal direction than either endpoint, and connect this characterization to top-$K$ retrieval.

    \item We introduce a query-side approach for contrastive VLM upgrades that combines orthogonal Procrustes alignment with spherical interpolation, requiring neither model retraining nor gradient-based optimization,  using a fixed interpolation weight selected on the alignment support set, while leaving the deployed gallery unchanged.

    \item We validate the approach across CLIP, SigLIP1, and SigLIP2 on Flickr30k, COCO2014, and NoCaps, showing gains over orthogonal alignment and frequent backward-compatibility recovery. Ablations show that endpoint combination drives most of the improvement, while support-set weight selection transfers across datasets without target-test tuning.
\end{itemize}

\section{Related Work}

\textbf{Backward-compatible representation learning.}
Backward-compatible training (BCT) \cite{shen2020towards_backward_compatible} formalized how to upgrade a retrieval model without re-encoding a deployed gallery by requiring embeddings from the new model to remain directly comparable with those of the old one. Subsequent work extended this paradigm through class- and prototype-level alignment \cite{meng2021learning_compatible_embeddings}, neighborhood-level compatibility constraints \cite{wu2022neighborhood}, open-set and universal formulations \cite{zhang2022towards_universal_backward_compatible}, regression-aware hot-refresh updates \cite{zhang2022hot_refresh_regression_free}, basis expansion \cite{zhou2023bt2}, stationary representations \cite{biondi2023cores, biondi2024stationary, biondi2023cl2r}, and orthogonal transformation layers \cite{ricci2024backward}.
In particular, stationary representations learned with $d$-Simplex fixed classifiers~\cite{pernici2021regular}, the geometry that also emerges under neural collapse~\cite{papyan2020prevalence}, provably satisfy the inequality constraints of the formal compatibility definition in unimodal retrieval~\cite{biondi2026stationary}.
Other works relax the uniform compatibility requirement through selective compatibility \cite{zhang2023darwinian}, hyperbolic embeddings constrained by entailment cones \cite{bui2025learning}, or perturbed prototype targets that preserve discriminative structure while maintaining interoperability \cite{zhou2026prototype}. Recent work also shows that hyperspherical simplex representations derived for classifier outputs can be inherently backward-compatible and satisfy the formal definition of backward compatibility on average \cite{anonymous2025hyperspherical}. Related lifelong formulations have been studied in image-to-image retrieval and person re-identification, where compatibility avoids re-indexing across sequential updates \cite{cai2023l2r,DBLP:conf/cvpr/CuiZWZP24,biondi2023cl2r,biondi2024stationary}.
However, most of this literature addresses unimodal model upgrades. For vision-language models, XBT \cite{jang2025towards} extends backward-compatible learning to cross-modal retrieval by learning a projection module, pretraining it on text, injecting Gaussian noise, and fine-tuning the new VLM with LoRA using both text and image data. While effective, this requires a dedicated training pipeline with large text and image--text corpora. In contrast, our approach requires no model retraining or gradient-based optimization: it estimates an orthogonal Procrustes map from a small alignment support set and performs query-side interpolation without modifying either model or re-indexing the gallery.

\textbf{Model interpolation.}
Interpolation has been studied extensively in parameter space through mode connectivity, weight averaging, model soups, permutation-aware merging, and task arithmetic \cite{garipov2018loss_surfaces_mode_connectivity,izmailov2018averaging_weights,wortsman2022model_soups,ainsworth2023git_re_basin,yadav2023ties_merging,ortizjimenez2023task_arithmetic_tangent_space,tao2024task_arithmetic_one_shot_federated}.
Recent works also use spherical interpolation, but for different objects and objectives.
In zero-shot composed image retrieval, \cite{jang2024spherical} uses SLERP to combine image and text embeddings into a composed query, together with text-anchored tuning. WARP \cite{rame2024warp}, by contrast, applies SLERP in policy weight space to merge independently fine-tuned language-model policies, improving reward while controlling deviation from a reference policy.
Unlike these works, we apply spherical interpolation on the query side for backward-compatible retrieval, interpolating between the old-model query and the Procrustes-aligned new-model query for the same input. We characterize when the resulting minor geodesic contains an interior direction closer to an idealized retrieval-optimal direction than either endpoint, and connect this geometry to Recall@$K$ through margin-based guarantees.

We provide an extended discussion of post-hoc alignment, cross-model correspondence, and the geometry of contrastive vision-language representations in Appendix~\ref{app:related}.

\section{Geometric Characterization of Spherical Interpolation for Compatible Retrieval}
\label{sec:geometry_to_retrieval}

We consider a deployed cross-modal retrieval system whose gallery is fixed and encoded by an old contrastive VLM
$\phi_{\mathrm{old}}:\mathcal{X}\to\mathbb{S}^{d-1}$,
where
$\mathbb{S}^{d-1}:=\{z\in\mathbb{R}^d:\|z\|=1\}$,
and $\mathcal{X}$ denotes the input space, which may contain either images or text.
Suppose that an independently trained new model
$\phi_{\mathrm{new}}:\mathcal{X}\to\mathbb{S}^{d-1}$
becomes available.
Our goal is to use the new model on the query side while keeping the gallery indexed by the old model.

Let $\mathcal{A}=\{x_i\}_{i=1}^{N_a}$ be an alignment support set, disjoint from the gallery set
$\mathcal{G}=\{g_j\}_{j=1}^{N_g}$
and the query set
$\mathcal{Q}=\{x_k\}_{k=1}^{N_q}$.
For each $x_i\in\mathcal{A}$, define
$u_i:=\phi_{\mathrm{old}}(x_i)$ and
$\bar v_i:=\phi_{\mathrm{new}}(x_i)$,
and stack these embeddings row-wise into
$U,\bar V\in\mathbb{R}^{N_a\times d}$.
We estimate the new-to-old alignment map by solving the orthogonal Procrustes problem:
\begin{equation}
R^\star
=
\arg\min_{R^\top R=I}
\|\bar V R-U\|_F^2 .
\end{equation}
If
$\bar V^\top U=P\Sigma Q^\top$
is its singular value decomposition, the closed-form solution is
$R^\star=PQ^\top$.
Since $R^\star$ is orthogonal, the aligned representation preserves the inner-product geometry of the new model while expressing it in the old-model coordinate system.
For clarity, we develop the main analysis in the equal-dimensional case
$\phi_{\mathrm{old}},\phi_{\mathrm{new}}:\mathcal{X}\to\mathbb{R}^d$;
the extension to unequal-dimensional model pairs using rectangular Procrustes is described in Appendix~\ref{app:procrustes}.

For any query $x\in\mathcal{Q}$, define
\begin{equation}
u(x):=\phi_{\mathrm{old}}(x)\in\mathbb{S}^{d-1},
\qquad
v(x):=\phi_{\mathrm{new}}(x)R^\star\in\mathbb{S}^{d-1},
\end{equation}
as the old-model query and the aligned new-model query, respectively.
Thus, $u(x)$ and $v(x)$ are two normalized representations of the same input, both directly comparable with the old-model gallery.
Because the alignment is approximate for independently trained models, these query directions generally remain distinct, as illustrated in Fig.~\ref{fig:residual}.

We now characterize when interpolation between $u(x)$ and $v(x)$ can yield a direction closer to a retrieval-favorable direction than either endpoint.
For normalized embeddings $a,b\in\Sd$, define the spherical angle
$\angle(a,b):=\arccos\bigl(\langle a,b\rangle\bigr)\in[0,\pi]$.
For a fixed query $x\in\mathcal{Q}$, let
$\theta:=\angle(u(x),v(x))\in(0,\pi)$.\footnote{The condition $0<\theta<\pi$ excludes the coincident case $u(x)=v(x)$ and the antipodal case $\cos(\theta)=-1$, for which the minor geodesic is not uniquely defined.}
Since the analysis is pointwise, we omit the dependence on $x$ and write $u$, $v$, and $\theta$.
The problem then reduces to interpolation between two unit vectors along their minor geodesic.

\begin{definition}[SLERP]
For $u,v\in\Sd$ with $\theta=\angle(u,v)\in(0,\pi)$, the SLERP query at interpolation weight $\alpha\in[0,1]$ is
\begin{equation}
q_\alpha
:=
\operatorname{slerp}(u,v;\alpha)
:=
\frac{\sin((1-\alpha)\theta)}{\sin\theta}\,u
+
\frac{\sin(\alpha\theta)}{\sin\theta}\,v .
\end{equation}
\end{definition}

SLERP traces the unique minor geodesic from $u$ to $v$ at constant angular speed:
\begin{equation}
\angle(u,q_\alpha)=\alpha\theta,\qquad
\angle(q_\alpha,v)=(1-\alpha)\theta,\qquad
\angle(q_\alpha,q_\beta)=|\alpha-\beta|\,\theta .
\end{equation}
Thus, $\alpha$ has a direct geometric interpretation as the normalized angular displacement from the old-model query toward the aligned new-model query.
NLERP \cite{dam1998quaternions} traces the same minor geodesic up to a monotone reparameterization and therefore attains the same optimal directions under continuous weight selection.
We use SLERP because its constant-angular-speed parameterization gives the interpolation weight a consistent geometric meaning across queries; Appendix~\ref{app:nlerp} discusses both interpolations.

To formalize whether interpolation moves toward a retrieval-favorable direction, let
$q^\ast\in\Sd$
denote an idealized retrieval-optimal query direction, used only as an analytical reference.
For an interpolated query $q_\alpha$, define the angular retrieval gap
\begin{equation}
G(\alpha):=\angle(q_\alpha,q^\ast),
\qquad \alpha\in[0,1].
\end{equation}
A strict interior angular improvement occurs if there exists
$\alpha^\ast\in(0,1)$ such that
\begin{equation}
G(\alpha^\ast)<\min\{G(0),G(1)\}.
\end{equation}

Since every $q_\alpha$ lies on the minor geodesic joining $u$ and $v$, the analysis reduces to the two-dimensional interpolation plane
$\operatorname{span}(u,v)$.
We use the orthonormal basis
\begin{equation}
e_1:=u,
\qquad
e_2:=
\frac{v-\langle u,v\rangle u}
     {\|v-\langle u,v\rangle u\|}
=
\frac{v-\cos\theta\,u}{\sin\theta},
\end{equation}
so that
\begin{equation}
u=e_1,
\qquad
v=\cos\theta\,e_1+\sin\theta\,e_2,
\qquad
q_\alpha=\cos(\alpha\theta)e_1+\sin(\alpha\theta)e_2.
\end{equation}
Hence, only the projection of $q^\ast$ onto the interpolation plane affects the variation of $G(\alpha)$ along the geodesic.

\begin{lemma}[Decomposition relative to the interpolation plane; proof in Appendix~\ref{sec:proof_lemma}]
\label{lem:decomp}
Let $q^\ast\in\Sd$.
There exist unique vectors
$p\in\operatorname{span}(u,v)$ and
$w_\perp\perp\operatorname{span}(u,v)$ such that
$q^\ast=p+w_\perp$.
Let $\rho:=\|p\|$.
Then $\rho\in[0,1]$ and
$\|w_\perp\|^2=1-\rho^2$.
If $\rho>0$, there exists a unique
$\psi\in(-\pi,\pi]$ such that
\begin{equation}
p=\rho(\cos\psi\,e_1+\sin\psi\,e_2).
\end{equation}
\end{lemma}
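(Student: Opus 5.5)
The plan is to use the orthonormal basis $\{e_1,e_2\}$ of $\operatorname{span}(u,v)$ introduced just before the statement, and build everything from the orthogonal projection onto this plane. Since $0<\theta<\pi$, we have $\sin\theta>0$, so $e_2$ is well defined and $\{e_1,e_2\}$ is orthonormal. Existence comes from setting
\[
p:=\langle q^\ast,e_1\rangle e_1+\langle q^\ast,e_2\rangle e_2, \qquad w_\perp:=q^\ast-p .
\]
Then $p\in\operatorname{span}(u,v)$, and $\langle w_\perp,e_i\rangle=\langle q^\ast,e_i\rangle-\langle q^\ast,e_i\rangle=0$ for $i=1,2$. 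Hence $w_\perp$ is orthogonal to the whole plane.

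For uniqueness, suppose $q^\ast=p+w_\perp=p'+w_\perp'$ with $p,p'$ in the plane and $w_\perp,w_\perp'$ orthogonal to it. Then $p-p'=w_\perp'-w_\perp$ lies both in the plane and in its orthogonal complement. It is therefore orthogonal to itself, so it vanishes.

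Next I will apply Pythagoras. Since $\langle p,w_\perp\rangle=0$,
\[
1=\|q^\ast\|^2=\|p\|^2+\|w_\perp\|^2=\rho^2+\|w_\perp\|^2 .
\]
This gives $\|w_\perp\|^2=1-\rho^2$. Because $\|w_\perp\|^2\ge 0$ and $\rho\ge0$, it also gives $\rho\in[0,1]$.

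For the angular form, assume $\rho>0$ and write $p=a e_1+b e_2$ with $a^2+b^2=\rho^2$. The pair $(a/\rho,\,b/\rho)$ is a point on the unit circle. The map $\psi\mapsto(\cos\psi,\sin\psi)$ restricted to $(-\pi,\pi]$ is a bijection onto the unit circle, so there is exactly one $\psi\in(-\pi,\pi]$ with $\cos\psi=a/\rho$ and $\sin\psi=b/\rho$; explicitly, $\psi=\operatorname{atan2}(b,a)$. This yields $p=\rho(\cos\psi\,e_1+\sin\psi\,e_2)$. Uniqueness of $\psi$ also uses that the coordinates $(a,b)$ of $p$ in the orthonormal basis are themselves unique.

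No step is a genuine obstacle. The only points needing care are the nondegeneracy $\sin\theta>0$, which makes $e_2$ and the plane well defined, and the choice of the half-open interval $(-\pi,\pi]$, which is what makes $\psi$ unique.
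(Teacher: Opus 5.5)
Your proposal is correct and follows essentially the same route as the paper. Both arguments take the orthogonal projection onto $\operatorname{span}(u,v)$, apply Pythagoras to get $\rho\in[0,1]$ and $\|w_\perp\|^2=1-\rho^2$, and read off the unique angle $\psi\in(-\pi,\pi]$ from the coordinates of the unit vector $p/\rho$ in the orthonormal basis $(e_1,e_2)$. The only difference is cosmetic: you write the projection explicitly in coordinates and prove uniqueness directly (a vector lying in both the plane and its orthogonal complement is zero), whereas the paper cites uniqueness of the orthogonal projection.
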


Here, $\rho$ measures the magnitude of the component of $q^\ast$ in the interpolation plane, while $\psi$ gives its angular coordinate from $u$.
The effect of interpolation is therefore determined by the position of the normalized in-plane projection relative to the arc $[0,\theta]$.

\begin{figure}[t]
    \centering
    \begin{subfigure}[t]{0.23\columnwidth}
        \centering
        \includegraphics[width=0.90\linewidth]{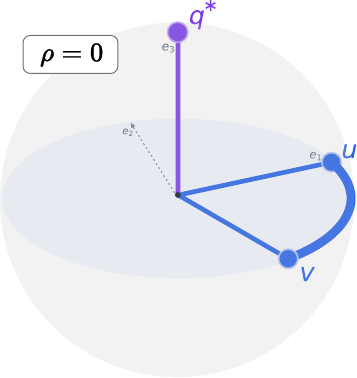}
        \caption{$p = 0$.}
        \label{fig:arc-rho-zero}
    \end{subfigure}
    \hfill
    \begin{subfigure}[t]{0.23\columnwidth}
        \centering
        \includegraphics[width=0.95\linewidth]{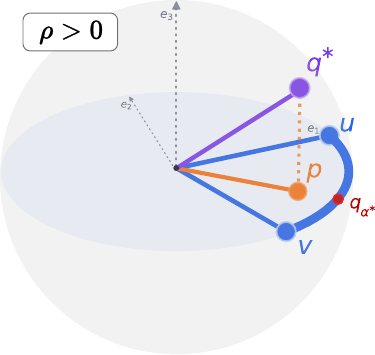}
        \caption{$p$ lies before the arc.}
        \label{fig:arc-before}
    \end{subfigure}
    \hfill
    \begin{subfigure}[t]{0.23\columnwidth}
        \centering
        \includegraphics[width=0.90\linewidth]{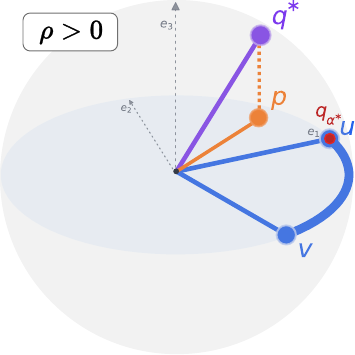}
        \caption{$p$ lies beyond the arc.}
        \label{fig:arc-beyond}
    \end{subfigure}
    \hfill
    \begin{subfigure}[t]{0.23\columnwidth}
        \centering
        \includegraphics[width=0.95\linewidth]{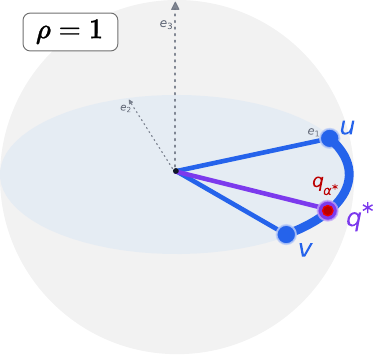}
        \caption{$p$ lies on the arc.}
        \label{fig:arc-interior}
    \end{subfigure}
    \caption{Geometry of SLERP in the interpolation plane. The old query $u$ and aligned new query $v$ define a minor geodesic arc. Let $p$ be the projection of the retrieval-optimal direction $q^\ast$ onto this plane, and let $\rho=\|p\|$. The effect of interpolation is determined by the normalized in-plane projection $p/\rho$ when $\rho>0$. A strict interior improvement occurs exactly when $p/\rho$ lies in the relative interior of the arc; when $\rho=0$, the angular gap is constant along the arc.}
    \label{fig:theorem}
\end{figure}

\begin{theorem}[Angular retrieval gap along the SLERP arc; proof in Appendix~\ref{sec:proof_theo_retrieval_gap}]
\label{thm:main}
Let $q_\alpha=\operatorname{slerp}(u,v;\alpha)$ with
$\theta=\angle(u,v)\in(0,\pi)$,
and let $\rho$ and $\psi$ be defined as in
Lemma~\ref{lem:decomp} with respect to
$e_1=u$ and
$e_2=(v-\cos\theta\,u)/\sin\theta$.
Then:
\begin{enumerate}[label=(\roman*)]
    \item If $\rho>0$, then
    \begin{equation}
    \langle q_\alpha,q^\ast\rangle
    =
    \rho\cos(\alpha\theta-\psi),
    \qquad
    G(\alpha)
    =
    \arccos\bigl(\rho\cos(\alpha\theta-\psi)\bigr).
    \end{equation}

    \item If $\rho=0$, then
    \begin{equation}
    G(\alpha)\equiv\pi/2,
    \qquad
    \forall\alpha\in[0,1].
    \end{equation}

    \item If $\rho>0$, the minimizers of $G(\alpha)$ over $\alpha\in[0,1]$
    coincide with the minimizers of the circular distance
    $d_{\mathrm{circ}}(\alpha\theta,\psi)$ over $\alpha\in[0,1]$,
    where
    \begin{equation}
    d_{\mathrm{circ}}(\tau,\psi)
    :=
    \min_{k\in\mathbb{Z}}|\tau-\psi-2\pi k|.
    \end{equation}
    Equivalently, every minimizer satisfies
    \begin{equation}
    \alpha^\ast=\frac{\tau^\ast}{\theta},
    \qquad
    \tau^\ast\in
    \arg\min_{\tau\in[0,\theta]}
    d_{\mathrm{circ}}(\tau,\psi).
    \end{equation}
\end{enumerate}

For $\rho>0$, a strict interior improvement
\begin{equation}
G(\alpha^\ast)<\min\{G(0),G(1)\}
\end{equation}
holds if and only if the normalized in-plane projection of $q^\ast$
lies strictly in the relative interior of the minor arc from $u$ to $v$.
\end{theorem}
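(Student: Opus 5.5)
The plan is to compute the inner product $\langle q_\alpha,q^\ast\rangle$ explicitly using Lemma~\ref{lem:decomp}, and then reduce every claim to monotonicity of $\arccos$ and of the cosine as a function of circular distance.

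\textbf{Parts (i) and (ii).} Write $q^\ast=p+w_\perp$. Since $q_\alpha\in\operatorname{span}(u,v)$ and $w_\perp\perp\operatorname{span}(u,v)$, we get $\langle q_\alpha,q^\ast\rangle=\langle q_\alpha,p\rangle$.

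\begin{itemize}
\item If $\rho>0$, substitute $q_\alpha=\cos(\alpha\theta)e_1+\sin(\alpha\theta)e_2$ and $p=\rho(\cos\psi\,e_1+\sin\psi\,e_2)$. Orthonormality of $e_1,e_2$ and the cosine difference formula give $\rho\cos(\alpha\theta-\psi)$. Applying $\arccos$ yields $G$; this is well defined because $|\rho\cos(\cdot)|\le 1$.
\item If $\rho=0$, then $p=0$, so the inner product vanishes identically and $G\equiv\pi/2$.
\end{itemize}

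\textbf{Part (iii).} Since $\arccos$ is strictly decreasing on $[-1,1]$ and $\rho>0$, minimizing $G(\alpha)$ is equivalent to maximizing $\cos(\alpha\theta-\psi)$. The cosine is $2\pi$-periodic and even, so $\cos(\tau-\psi)=\cos\bigl(d_{\mathrm{circ}}(\tau,\psi)\bigr)$. The circular distance takes values in $[0,\pi]$, and cosine is strictly decreasing there. Hence the maximizers of $\cos(\tau-\psi)$ are exactly the minimizers of $d_{\mathrm{circ}}(\tau,\psi)$. The substitution $\tau=\alpha\theta$ is a bijection $[0,1]\to[0,\theta]$, which gives the stated form of $\alpha^\ast$.

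\textbf{Final equivalence.} The normalized projection $p/\rho$ corresponds to the angle $\psi$, and the relative interior of the minor arc corresponds to $\{\tau e: \tau\in(0,\theta)\}$, i.e.\ $\psi\in(0,\theta)$ (recall $\theta<\pi$). By (iii) it suffices to show the following: $f(\tau):=d_{\mathrm{circ}}(\tau,\psi)$ has a value on $(0,\theta)$ strictly below $\min\{f(0),f(\theta)\}$ if and only if $\psi\in(0,\theta)$.

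\emph{Sufficiency.} If $\psi\in(0,\theta)$, then $f(\psi)=0$. Moreover $f(0)>0$ and $f(\theta)>0$, since $\psi\neq 0$ and $\psi\neq\theta$ modulo $2\pi$ (both lie in $(-\pi,\pi]$, and $\theta<\pi$). Taking $\alpha^\ast=\psi/\theta$ gives the strict improvement.

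\emph{Necessity.} This is the main obstacle: one must rule out an interior minimum when $\psi\notin(0,\theta)$. The approach is as follows.

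\begin{itemize}
\item Work on the circle and set $\psi'=\psi+2\pi$ when $\psi<0$, so that $\psi'\in[\theta,2\pi)$, i.e.\ $\psi'$ lies in the complementary arc. For $\tau\in[0,\theta]$, $f(\tau)=\min\{|\tau-\psi'|,\;2\pi-|\tau-\psi'|\}$.
\item On $[0,\theta]$ this equals $\min\{\psi'-\tau,\;2\pi-\psi'+\tau\}$.
\item This is the minimum of a decreasing linear function and an increasing linear function of $\tau$. It is therefore concave, hence quasi-concave, so on $[0,\theta]$ it attains its minimum at an endpoint.
\item Consequently, for every $\tau\in(0,\theta)$, $f(\tau)\ge\min\{f(0),f(\theta)\}$.
\end{itemize}

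The cases $\psi=0$ or $\psi=\theta$ are covered by the same argument, where the minimum value $0$ is attained at an endpoint.

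Transferring back through the strictly monotone map $d\mapsto\arccos(\rho\cos d)$, no interior $\alpha$ can beat both endpoints. This establishes the equivalence.
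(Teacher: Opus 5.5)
Your proof is correct. Parts (i)--(iii) and the sufficiency half of the final equivalence follow the paper's argument essentially as written. You phrase sufficiency through $d_{\mathrm{circ}}$, while the paper phrases it as $\cos(\tau_0-\psi)=1$ against strictly smaller endpoint values, which amounts to the same thing.

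The genuine difference is in the necessity half. The paper stays with $f(\tau)=\cos(\tau-\psi)$ and uses a first-order argument. An interior maximizer on $[0,\theta]$ would satisfy $-\sin(\tau-\psi)=0$, so $\tau\in\psi+\pi\mathbb{Z}$. The odd multiples are minima of the cosine, and the even ones are excluded by hypothesis, so the maximum sits at an endpoint. You instead unwrap $\psi$ to a representative $\psi'$ on the complementary arc. You then compute $d_{\mathrm{circ}}(\tau,\psi)=\min\{\psi'-\tau,\,2\pi-\psi'+\tau\}$ on $[0,\theta]$. As a minimum of two affine functions this is concave, so its minimum over $[0,\theta]$ is attained at an endpoint.

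Your version is calculus-free and makes the geometry explicit: it is the distance to a point off the arc, measured both ways around the circle. It also keeps the whole proof in the $d_{\mathrm{circ}}$ language of (iii). The paper's version works uniformly with all lifts $\psi+2\pi k$, so it never has to choose a representative.

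That choice is the one place to tidy. For $\psi=0$ your rule gives $\psi'=0\notin[\theta,2\pi)$, and there the displayed formula fails. Take $\psi'=2\pi$ in that case, or simply note, as you do, that $f(0)=0$ already rules out a strict interior improvement.
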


Theorem~\ref{thm:main} shows that the endpoint separation $\theta$ alone does not determine whether interpolation yields an interior angular improvement.
The decisive quantity is the position of the normalized in-plane projection of $q^\ast$ relative to the minor arc.
If this direction lies in the relative interior of the arc, the geodesic contains an interior query closer to $q^\ast$ than either endpoint; otherwise, the optimum along the arc is attained at an endpoint.
Appendix~\ref{app:retrieval_margin} connects this angular characterization to Recall@$K$ by instantiating $q^\ast$ as a top-$K$-margin-optimal direction.
A margin perturbation bound then specifies when angular proximity to this direction locally certifies Recall@$K$ success.
Thus, Theorem~\ref{thm:main} characterizes when interpolation improves the angular retrieval gap, while the margin result determines when this improvement is sufficient for the discrete top-$K$ retrieval event.

\section{Experimental Results}
\label{sec:experiments}

We evaluate whether post-alignment spherical interpolation improves backward-compatible retrieval after mapping the new model into the old-model coordinate system with orthogonal Procrustes alignment.
Because the alignment is obtained in closed form from a  singular value decomposition (Sec.~\ref{sec:geometry_to_retrieval}), we refer to it as SVD throughout this paper.
The main experiments focus on cross-modal retrieval with a fixed old-model gallery, while Appendix~\ref{sec:classification} reports an auxiliary zero-shot classification evaluation with fixed old-model text prototypes.

\subsection{Experimental Setup}
\label{sec:setup}

We evaluate the proposed approach across CLIP, SigLIP1, and SigLIP2 vision-language models.
To cover both same-family and cross-family upgrades, we use OpenAI CLIP ViT-B/32 and ViT-L/14~\cite{radford2021learning}, CLIP ViT-H/14~\cite{cherti2023reproducible}, SigLIP1 ViT-SO400M-14~\cite{zhai2023sigmoid}, and SigLIP2 ViT-SO400M-14~\cite{tschannen2025siglip}.
These produce embeddings of 512 (CLIP ViT-B/32), 768 (CLIP ViT-L/14), 1024 (CLIP ViT-H/14), and 1152 (SigLIP1/2) dimensions, covering equal- and unequal-dimensional model pairs.
Including SigLIP-family models allows us to assess whether the residual post-alignment discrepancy and the gains from spherical interpolation extend beyond CLIP-family representation geometry to models trained with different objectives.
All checkpoints are obtained from the OpenCLIP repository~\cite{ilharco_gabriel_2021_5143773}.

For each old--new model pair, we estimate the Procrustes map from
12,637 available image--text pairs from the CC3M~\cite{sharma2018conceptual}
validation split, which serves as the alignment support set.
We consider three support-modality choices: text-only, image-only, and joint image--text support, where the joint setting stacks embeddings from both modalities before solving the Procrustes problem.
Motivated by~\cite{gupta2026canonicalizing_multimodal_contrastive}, these variants test whether a single modality is sufficient to estimate the orthogonal map and whether combining modalities yields more stable cross-modal transfer.

We evaluate backward-compatible retrieval, comparing the training-based XBT baseline~\cite{jang2025towards}, three Procrustes support-modality variants, and SLERP applied after each alignment.
Old- and new-model performance are included as reference points.
For SLERP, \(\alpha=0\) corresponds to the old-model query, while \(\alpha=1\) corresponds to the SVD-aligned new-model query, i.e., the SVD baseline.
For each old--new model pair, the Procrustes map is estimated and the interpolation weight \(\hat{\alpha}\) is selected on the same CC3M alignment support set.
Separate weights are selected for image-to-text (I2T) and text-to-image (T2I) retrieval by maximizing CC3M Recall@1 over
\(\alpha\in\{0,0.1,\ldots,1.0\}\).
These model-pair- and retrieval-direction-specific weights are then fixed for
evaluation on three cross-modal retrieval benchmarks, following the large-scale
protocol of XBT~\cite{jang2025towards}: the full Flickr30k dataset~\cite{young2014image} (the union of the Karpathy splits~\cite{karpathy2015deep}), and the validation splits of COCO 2014~\cite{lin2014microsoft} and NoCaps~\cite{agrawal2019nocaps}.
We additionally report the benchmark-specific oracle weight \(\alpha^\star\), selected directly on each test benchmark solely as an upper-bound reference.

\newcommand{\pairref}{$\blacktriangleright$~}

\newcommand{\pairrow}{\rowcolor{teal!15}}
\newcommand{\oldrow}{\rowcolor{gray!8}}
\newcommand{\newrow}{\rowcolor{gray!8}}

\begin{table*}[t]
\caption{
Cross-modal backward-compatible retrieval for same-family model upgrades.
For each dataset, the \(\alpha\) column reports the interpolation weights for
I2T/T2I retrieval.
The Sup.\ column indicates the support modality used to estimate the Procrustes map: text-only (T), image-only (I), or joint image--text (I+T).
The weight \(\hat{\alpha}\) is selected on CC3M and fixed
across all test datasets, whereas \(\alpha^\star\) is selected directly on each
test dataset and is reported only as an oracle upper bound.
Checkmarks indicate backward compatibility; bold and underlined values denote
the best and second-best \(\mathrm{new}\!\to\!\mathrm{old}\) results, respectively.
}
\label{tab:all_datasets_r1}
\centering
\footnotesize
\begin{adjustbox}{width=\textwidth}
\begin{tabular}{lcccc ccc ccc}
\toprule
& &
\multicolumn{3}{c}{Flickr30k} &
\multicolumn{3}{c}{COCO} &
\multicolumn{3}{c}{NoCaps} \\
\cmidrule(lr){3-5}
\cmidrule(lr){6-8}
\cmidrule(lr){9-11}
Method & Sup.
& $\alpha$ & I2T@1 & T2I@1
& $\alpha$ & I2T@1 & T2I@1
& $\alpha$ & I2T@1 & T2I@1 \\
\midrule

\pairrow
\multicolumn{11}{l}{
\textbf{CLIP ViT-L/14 $\rightarrow$ CLIP ViT-B/32}
\hspace{3pt}\textit{(same model family)}}\\

\oldrow
\textbf{\pairref Old model (CLIP ViT-B/32)} & --
& -- & \plainaccur{40.62} & \plainaccur{21.73}
& -- & \plainaccur{28.76} & \plainaccur{14.47}
& -- & \plainaccur{71.29} & \plainaccur{45.24} \\

XBT~\cite{jang2025towards} & --
& -- & 42.47\compyes & 22.38\compyes
& -- & 30.73\compyes & \textbf{15.55}\compyes
& -- & \underline{75.02}\compyes & \textbf{48.02}\compyes \\

SVD & T
& -- & 42.89\compyes & 21.49\compno
& -- & 30.27\compyes & 14.03\compno
& -- & 69.22\compno & 43.92\compno \\

\slerprow
\hspace{5pt}+SLERP ($\hat{\alpha}$) & T
& 0.7/0.5 & \underline{48.00}\compyes & \underline{23.33}\compyes
& 0.7/0.5 & \underline{33.40}\compyes & 15.26\compyes
& 0.7/0.5 & 73.98\compyes & 46.59\compyes \\

\slerprow
\hspace{5pt}+SLERP ($\alpha^\star$) & T
& 0.5/0.5 & \textbf{48.61}\compyes & \underline{23.33}\compyes
& 0.5/0.4 & \textbf{33.84}\compyes & 15.27\compyes
& 0.4/0.4 & \textbf{75.62}\compyes & 46.65\compyes \\

SVD & I
& -- & 41.17\compyes & 19.99\compno
& -- & 28.71\compno & 13.11\compno
& -- & 65.96\compno & 42.16\compno \\

\slerprow
\hspace{5pt}+SLERP ($\hat{\alpha}$) & I
& 0.7/0.4 & 46.30\compyes & 23.00\compyes
& 0.7/0.4 & 31.77\compyes & 15.14\compyes
& 0.7/0.4 & 71.40\compyes & 46.49\compyes \\

\slerprow
\hspace{5pt}+SLERP ($\alpha^\star$) & I
& 0.5/0.4 & 47.39\compyes & 23.00\compyes
& 0.5/0.3 & 32.51\compyes & 15.15\compyes
& 0.4/0.3 & 73.78\compyes & 46.51\compyes \\

SVD & I+T
& -- & 42.44\compyes & 21.80\compyes
& -- & 29.95\compyes & 14.16\compno
& -- & 69.13\compno & 44.41\compno \\

\slerprow
\hspace{5pt}+SLERP ($\hat{\alpha}$) & I+T
& 0.7/0.4 & 47.09\compyes & \textbf{23.44}\compyes
& 0.7/0.4 & 33.05\compyes & 15.30\compyes
& 0.7/0.4 & 73.56\compyes & \underline{46.98}\compyes \\

\slerprow
\hspace{5pt}+SLERP ($\alpha^\star$) & I+T
& 0.5/0.5 & 47.77\compyes & \textbf{23.44}\compyes
& 0.6/0.5 & 33.32\compyes & \underline{15.35}\compyes
& 0.5/0.4 & 74.78\compyes & \underline{46.98}\compyes \\

\newrow
\textbf{\pairref New model (CLIP ViT-L/14)} & --
& -- & \plainaccur{48.72} & \plainaccur{28.27}
& -- & \plainaccur{34.33} & \plainaccur{18.68}
& -- & \plainaccur{73.36} & \plainaccur{47.84} \\

\midrule

\pairrow
\multicolumn{11}{l}{
\textbf{SigLIP2 $\rightarrow$ SigLIP1}
\hspace{3pt}\textit{(same model family)}}\\

\oldrow
\textbf{\pairref Old model (SigLIP1)} & --
& -- & \plainaccur{58.09} & \plainaccur{39.40}
& -- & \plainaccur{46.99} & \plainaccur{30.88}
& -- & \plainaccur{86.20} & \plainaccur{64.18} \\

SVD & T
& -- & 45.46\compno & 46.28\compyes
& -- & 36.30\compno & 30.53\compno
& -- & 75.00\compno & 64.64\compyes \\

\slerprow
\hspace{5pt}+SLERP ($\hat{\alpha}$) & T
& 0.3/0.4 & \underline{58.63}\compyes & 45.99\compyes
& 0.3/0.4 & 46.76\compno & 32.50\compyes
& 0.3/0.4 & 85.87\compno & 66.59\compyes \\

\slerprow
\hspace{5pt}+SLERP ($\alpha^\star$) & T
& 0.2/0.7 & \textbf{58.70}\compyes & \textbf{47.59}\compyes
& 0.1/0.5 & 47.18\compyes & \underline{32.56}\compyes
& 0.1/0.5 & 86.49\compyes & \underline{66.72}\compyes \\

SVD & I
& -- & 50.66\compno & 41.78\compyes
& -- & 40.96\compno & 28.18\compno
& -- & 80.00\compno & 62.69\compno \\

\slerprow
\hspace{5pt}+SLERP ($\hat{\alpha}$) & I
& 0.2/0.3 & 58.59\compyes & 43.84\compyes
& 0.2/0.3 & \textbf{47.30}\compyes & 32.02\compyes
& 0.2/0.3 & \textbf{86.69}\compyes & 66.08\compyes \\

\slerprow
\hspace{5pt}+SLERP ($\alpha^\star$) & I
& 0.3/0.6 & 58.61\compyes & 45.12\compyes
& 0.2/0.4 & \textbf{47.30}\compyes & 32.06\compyes
& 0.2/0.5 & \textbf{86.69}\compyes & 66.38\compyes \\

SVD & I+T
& -- & 50.19\compno & 46.39\compyes
& -- & 39.41\compno & 30.78\compno
& -- & 77.93\compno & 64.98\compyes \\

\slerprow
\hspace{5pt}+SLERP ($\hat{\alpha}$) & I+T
& 0.2/0.5 & 58.62\compyes & 46.87\compyes
& 0.2/0.5 & \underline{47.24}\compyes & \textbf{32.67}\compyes
& 0.2/0.5 & 86.49\compyes & \textbf{67.02}\compyes \\

\slerprow
\hspace{5pt}+SLERP ($\alpha^\star$) & I+T
& 0.2/0.7 & 58.62\compyes & \underline{47.58}\compyes
& 0.2/0.5 & \underline{47.24}\compyes & \textbf{32.67}\compyes
& 0.1/0.6 & \underline{86.60}\compyes & \textbf{67.02}\compyes \\

\newrow
\textbf{\pairref New model (SigLIP2)} & --
& -- & \plainaccur{69.31} & \plainaccur{51.29}
& -- & \plainaccur{51.06} & \plainaccur{35.04}
& -- & \plainaccur{89.18} & \plainaccur{69.82} \\

\bottomrule
\end{tabular}
\end{adjustbox}
\end{table*}

\subsection{Evaluation Protocol}
\label{sec:protocol}

Following~\cite{shen2020towards_backward_compatible,jang2025towards},
we consider an updated system empirically backward-compatible if, given a
performance metric \(M\), the new-to-old performance strictly exceeds the
old-to-old performance, i.e.,
\begin{equation}
\label{eq:empirical_compatibility_general}
    M_{\mathrm{new}\to\mathrm{old}}
    >
    M_{\mathrm{old}\to\mathrm{old}} .
\end{equation}
Here, \(M_{\mathrm{old}\to\mathrm{old}}\) evaluates old-model queries against
the fixed old-model gallery, whereas \(M_{\mathrm{new}\to\mathrm{old}}\)
evaluates the updated query representations against the same old-model gallery.
We use Recall@\(K\), with \(K\in\{1,5,10\}\), as the retrieval metric \(M\)
and report results for both image-to-text and text-to-image retrieval.

Inspired by~\cite{yan2021positive,jaecklefastfill}, we further analyze
positive and negative retrieval flips relative to the old-to-old system.
For each query \(i\in\mathcal{Q}\), let
\(c_i^{\mathrm{old}\to\mathrm{old}}\in\{0,1\}\) and
\(c_i^{\mathrm{new}\to\mathrm{old}}\in\{0,1\}\)
indicate whether retrieval is successful under the corresponding system,
where success means that at least one relevant gallery item appears among
the top-\(K\) retrieved items.
The positive-flip rate (\(\mathrm{PFR}\)) and negative-flip rate
(\(\mathrm{NFR}\)) are defined as
\begin{align}
\mathrm{PFR} &=
\frac{1}{|\mathcal{Q}|}\sum_{i\in\mathcal{Q}}
\mathbf{1}\!\left[c_i^{\mathrm{old}\to\mathrm{old}}=0,\;
                  c_i^{\mathrm{new}\to\mathrm{old}}=1\right], \label{eq:pfr}\\
\mathrm{NFR} &=
\frac{1}{|\mathcal{Q}|}\sum_{i\in\mathcal{Q}}
\mathbf{1}\!\left[c_i^{\mathrm{old}\to\mathrm{old}}=1,\;
                  c_i^{\mathrm{new}\to\mathrm{old}}=0\right]. \label{eq:nfr}
\end{align}
Thus, positive flips correspond to queries that become successful after
the upgrade, whereas negative flips correspond to queries that become unsuccessful.
Since Recall@\(K\) is the average of these binary success indicators, its change relative to the old-to-old system satisfies:
\begin{equation}
\Delta \mathrm{R@}K
=
\mathrm{PFR}-\mathrm{NFR}. 
\end{equation}

\begin{table*}[t]
\caption{
Cross-modal backward-compatible retrieval for cross-family model upgrades.
For each dataset, the \(\alpha\) column reports the interpolation weights for
I2T/T2I retrieval.
The Sup.\ column indicates the support modality used to estimate the Procrustes map: text-only (T), image-only (I), or joint image--text (I+T).
The weight \(\hat{\alpha}\) is selected on CC3M and fixed
across all test datasets, whereas \(\alpha^\star\) is selected directly on each
test dataset and is reported only as an oracle upper bound.
Checkmarks indicate backward compatibility; bold and underlined values denote
the best and second-best \(\mathrm{new}\!\to\!\mathrm{old}\) results, respectively.
}
\label{tab:all_datasets_r1_siglip}
\centering
\footnotesize
\begin{adjustbox}{width=\textwidth}
\begin{tabular}{lcccc ccc ccc}
\toprule
& &
\multicolumn{3}{c}{Flickr30k} &
\multicolumn{3}{c}{COCO} &
\multicolumn{3}{c}{NoCaps} \\
\cmidrule(lr){3-5}
\cmidrule(lr){6-8}
\cmidrule(lr){9-11}
Method & Sup.
& $\alpha$ & I2T@1 & T2I@1
& $\alpha$ & I2T@1 & T2I@1
& $\alpha$ & I2T@1 & T2I@1 \\
\midrule

\pairrow
\multicolumn{11}{l}{
\textbf{SigLIP2 $\rightarrow$ CLIP ViT-B/32}
\hspace{3pt}\textit{(cross-family)}}\\

\oldrow
\textbf{\pairref Old model (CLIP ViT-B/32)} & --
& -- & \plainaccur{40.62} & \plainaccur{21.73}
& -- & \plainaccur{28.76} & \plainaccur{14.47}
& -- & \plainaccur{71.29} & \plainaccur{45.24} \\

SVD & $\mathrm{T}$
& -- & 42.80\compyes & 23.56\compyes
& -- & 32.33\compyes & 15.62\compyes
& -- & 72.16\compyes & 46.41\compyes \\

\slerprow
\hspace{5pt}+SLERP ($\hat{\alpha}$) & $\mathrm{T}$
& 0.6/0.5 & 51.01\compyes & 25.79\compyes
& 0.6/0.5 & \textbf{37.35}\compyes & 17.18\compyes
& 0.6/0.5 & \underline{78.56}\compyes & 50.59\compyes \\

\slerprow
\hspace{5pt}+SLERP ($\alpha^\star$) & $\mathrm{T}$
& 0.5/0.6 & \underline{51.34}\compyes & 25.83\compyes
& 0.6/0.6 & \textbf{37.35}\compyes & 17.27\compyes
& 0.5/0.6 & \textbf{78.80}\compyes & 50.61\compyes \\

SVD & $\mathrm{I}$
& -- & 48.01\compyes & 23.41\compyes
& -- & 32.28\compyes & 15.20\compyes
& -- & 72.42\compyes & 46.47\compyes \\

\slerprow
\hspace{5pt}+SLERP ($\hat{\alpha}$) & $\mathrm{I}$
& 0.7/0.4 & 51.33\compyes & \underline{28.57}\compyes
& 0.7/0.4 & 34.86\compyes & \underline{18.79}\compyes
& 0.7/0.4 & 76.18\compyes & \underline{53.86}\compyes \\

\slerprow
\hspace{5pt}+SLERP ($\alpha^\star$) & $\mathrm{I}$
& 0.6/0.5 & \textbf{51.40}\compyes & \textbf{28.88}\compyes
& 0.6/0.5 & \underline{35.05}\compyes & \textbf{18.97}\compyes
& 0.5/0.5 & 77.62\compyes & \textbf{54.37}\compyes \\

SVD & $\mathrm{I{+}T}$
& -- & 36.25\compno & 23.93\compyes
& -- & 28.16\compno & 15.71\compyes
& -- & 66.24\compno & 48.66\compyes \\

\slerprow
\hspace{5pt}+SLERP ($\hat{\alpha}$) & $\mathrm{I{+}T}$
& 0.4/0.5 & 47.97\compyes & 27.09\compyes
& 0.4/0.5 & 34.81\compyes & 17.82\compyes
& 0.4/0.5 & 77.40\compyes & 53.04\compyes \\

\slerprow
\hspace{5pt}+SLERP ($\alpha^\star$) & $\mathrm{I{+}T}$
& 0.4/0.6 & 47.97\compyes & 27.10\compyes
& 0.5/0.6 & 34.86\compyes & 17.84\compyes
& 0.4/0.6 & 77.40\compyes & 53.23\compyes \\

\newrow
\textbf{\pairref New model (SigLIP2)} & --
& -- & \plainaccur{69.31} & \plainaccur{51.29}
& -- & \plainaccur{51.06} & \plainaccur{35.04}
& -- & \plainaccur{89.18} & \plainaccur{69.82} \\

\midrule

\pairrow
\multicolumn{11}{l}{
\textbf{SigLIP1 $\rightarrow$ CLIP ViT-H/14}
\hspace{3pt}\textit{(cross-family; new model not uniformly stronger than old model)}}\\

\oldrow
\textbf{\pairref Old model (CLIP ViT-H/14)} & --
& -- & \plainaccur{59.38} & \plainaccur{43.07}
& -- & \plainaccur{43.50} & \plainaccur{28.56}
& -- & \plainaccur{84.27} & \plainaccur{63.53} \\

SVD & $\mathrm{T}$
& -- & 60.50\compyes & 29.91\compno
& -- & 41.82\compno & 23.18\compno
& -- & 81.40\compno & 56.13\compno \\

\slerprow
\hspace{5pt}+SLERP ($\hat{\alpha}$) & $\mathrm{T}$
& 0.5/0.1 & \underline{64.81}\compyes & 43.38\compyes
& 0.5/0.1 & \textbf{46.56}\compyes & 28.79\compyes
& 0.5/0.1 & \underline{85.89}\compyes & 63.89\compyes \\

\slerprow
\hspace{5pt}+SLERP ($\alpha^\star$) & $\mathrm{T}$
& 0.6/0.2 & \textbf{65.10}\compyes & \underline{43.44}\compyes
& 0.5/0.2 & \textbf{46.56}\compyes & 28.88\compyes
& 0.4/0.2 & \textbf{86.00}\compyes & \underline{63.96}\compyes \\

SVD & $\mathrm{I}$
& -- & 48.93\compno & 29.13\compno
& -- & 26.43\compno & 23.21\compno
& -- & 70.31\compno & 55.05\compno \\

\slerprow
\hspace{5pt}+SLERP ($\hat{\alpha}$) & $\mathrm{I}$
& 0.0/0.3 & 59.38\compno & 43.25\compyes
& 0.0/0.3 & 43.50\compno & \textbf{29.04}\compyes
& 0.0/0.3 & 84.27\compno & 63.80\compyes \\

\slerprow
\hspace{5pt}+SLERP ($\alpha^\star$) & $\mathrm{I}$
& 0.4/0.2 & 61.19\compyes & \textbf{43.50}\compyes
& 0.2/0.3 & 44.00\compyes & \textbf{29.04}\compyes
& 0.2/0.2 & 84.93\compyes & 63.91\compyes \\

SVD & $\mathrm{I{+}T}$
& -- & 58.54\compno & 30.03\compno
& -- & 34.47\compno & 23.58\compno
& -- & 77.40\compno & 56.60\compno \\

\slerprow
\hspace{5pt}+SLERP ($\hat{\alpha}$) & $\mathrm{I{+}T}$
& 0.1/0.1 & 61.09\compyes & 43.36\compyes
& 0.1/0.1 & 44.33\compyes & 28.80\compyes
& 0.1/0.1 & 85.04\compyes & 63.87\compyes \\

\slerprow
\hspace{5pt}+SLERP ($\alpha^\star$) & $\mathrm{I{+}T}$
& 0.6/0.2 & 64.74\compyes & 43.43\compyes
& 0.4/0.3 & \underline{45.65}\compyes & \underline{28.93}\compyes
& 0.3/0.3 & 85.76\compyes & \textbf{64.00}\compyes \\

\newrow
\textbf{\pairref New model (SigLIP1)} & --
& -- & \plainaccur{58.09} & \plainaccur{39.40}
& -- & \plainaccur{46.99} & \plainaccur{30.88}
& -- & \plainaccur{86.20} & \plainaccur{64.18} \\

\bottomrule
\end{tabular}
\end{adjustbox}
\end{table*}

\subsection{Compatibility Results}
\label{sec:results}

Tables~\ref{tab:all_datasets_r1} and~\ref{tab:all_datasets_r1_siglip} report cross-modal compatible retrieval results for same-family and cross-family model pairs, respectively.
Across 90 evaluations---five model pairs, three support modalities, three datasets, and both retrieval directions---CC3M-selected SLERP satisfies the Recall@1 compatibility criterion in Eq.~\ref{eq:empirical_compatibility_general} in 85 cases, compared with 45 for SVD alone.
The additional model-pair results included in this aggregate, together with the full Recall@1/5/10 results, are reported in Appendix~\ref{sec:app_detailed_performance}.
SVD thus provides an effective post-hoc alignment but does not consistently satisfy the compatibility criterion, particularly for cross-family pairs.
This is consistent with the residual angular discrepancy in Fig.~\ref{fig:residual}: Procrustes alignment maps new representations into the old-model coordinate system but does not fully resolve the cross-model mismatch.

For same-family upgrades, our approach remains competitive with available XBT results without any specific compatibility training.
In the harder SigLIP1 \(\rightarrow\) CLIP ViT-H/14 setting, where the new model is not uniformly stronger than the old one, SLERP still achieves compatibility in most cases.
These results suggest that the gains arise from combining old and SVD-aligned new representations, rather than solely from a stronger new encoder. Fig.~\ref{fig:t2i_slerp_curves} illustrates this behavior for T2I Recall@1, with performance often maximized at an interior interpolation weight.
The CC3M-selected \(\hat{\alpha}\) closely tracks the dataset-specific oracle \(\alpha^\star\), showing that a single support-set-selected weight transfers to Flickr30k, COCO, and NoCaps without test-set tuning.

Appendix~\ref{sec:app_slerp_arcs} reports the corresponding I2T curves, which show the same pattern.
Across different support set modalities, text-only alignment is the most stable choice overall. 
Image-only support, as in~\cite{gupta2026canonicalizing_multimodal_contrastive}, and joint support remain competitive in individual cases, but text embeddings appear to provide cleaner semantic anchors for estimating the old--new alignment. 
Moreover, Appendix~\ref{sec:per_query_oracle} provides a complementary per-query oracle analysis on Flickr30k I2T retrieval for CLIP ViT-L/14 \(\rightarrow\) CLIP ViT-B/32.
Among queries achieving Recall@1 at some evaluated interpolation weight, \(97.65\%\) are assigned an interior oracle weight. The oracle uses query-level test labels and is therefore non-deployable, it provides observable evidence consistent with the geometric characterization in Sec.~\ref{sec:geometry_to_retrieval}: retrieval-favorable query representations frequently occur in the interior of the interpolation arc rather than at either endpoint.
Appendix~\ref{sec:reindexing} further shows that SLERP often improves over the new model even when re-indexing is allowed, indicating that the benefit of interpolation is not limited to the compatibility-only setting.

Additionally, in Appendix~\ref{sec:app_validation_budget} we evaluate the setting in which a small subset from the target test distribution is available for estimating the SLERP weight. The results show that even a modest subset is sufficient to select a reliable dataset-specific interpolation weight.
We further show in Appendix~\ref{sec:app_baseline_ablation} that the gains are not specific to SLERP: a fixed normalized midpoint, support-set selected NLERP, and score interpolation yield comparable average improvements. We use SLERP for its constant-angular-speed parameterization, which gives the interpolation weight a consistent geometric interpretation across queries.

\textbf{Flip analysis.}
Fig.~\ref{fig:slerp-flips} decomposes the T2I Recall@1 change along the interpolation path into positive and negative flips relative to old-to-old retrieval.
Since \(\Delta\mathrm{R@1}=\mathrm{PFR}-\mathrm{NFR}\), the optimal interpolation weight is determined by the balance between the two rates rather than by positive flips alone.
Moving from the old-model query toward the aligned new-model query initially corrects more old-model failures than it introduces regressions; beyond the optimum, negative flips grow faster than positive ones and the gain decreases.
For CLIP ViT-H/14 \(\rightarrow\) CLIP ViT-B/32, where SVD alignment alone already exceeds the old model, the gain peaks close to the SVD endpoint (\(\alpha^\star=0.7\)). For SigLIP1 \(\rightarrow\) CLIP ViT-H/14, where the SVD-aligned query induces a high negative-flip rate, the gain peaks close to the old-model endpoint (\(\alpha^\star=0.2\)), and interpolation retains most old-model successes while still recovering part of the new-model gains.
In both cases, the CC3M-selected \(\hat{\alpha}\) lies near the peak.
Appendix~\ref{sec:app_flips} extends the analysis to I2T retrieval and NoCaps dataset.

\begin{figure*}[t]
    \centering
    \begin{subfigure}[b]{0.7\linewidth}
        \centering
        \includegraphics[width=\linewidth]{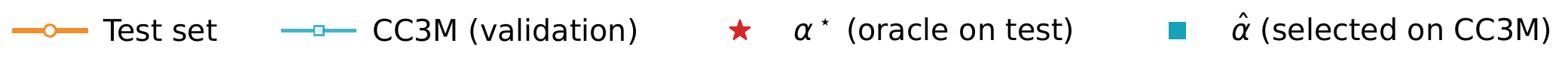}
    \end{subfigure}
    \hspace{-10pt}
    \begin{subfigure}[b]{0.49\linewidth}
        \centering
        \includegraphics[width=0.49\linewidth]{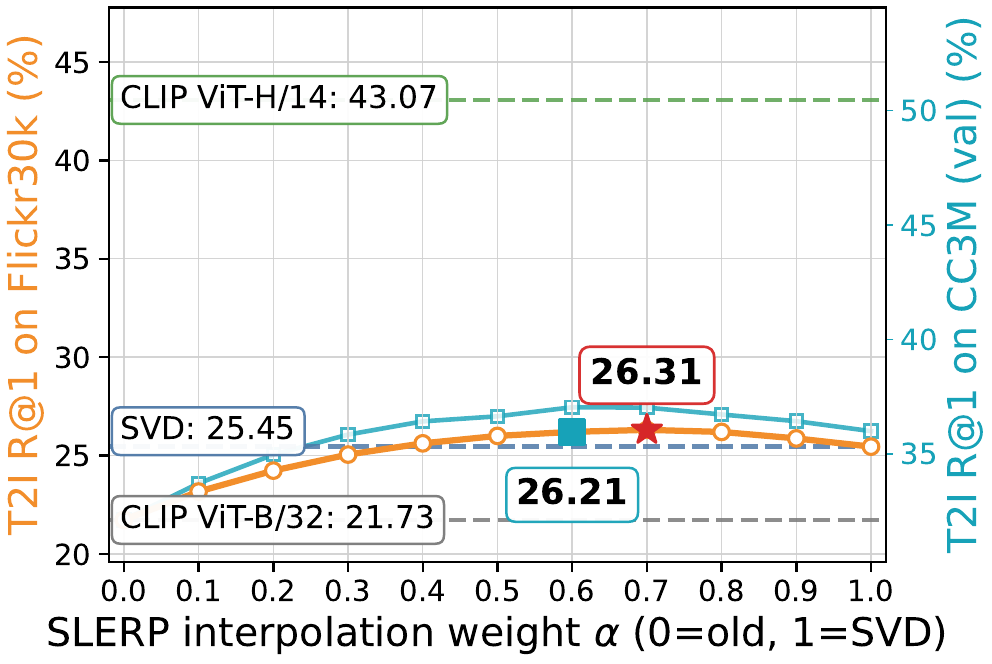}
        \hfill
        \includegraphics[width=0.49\linewidth]{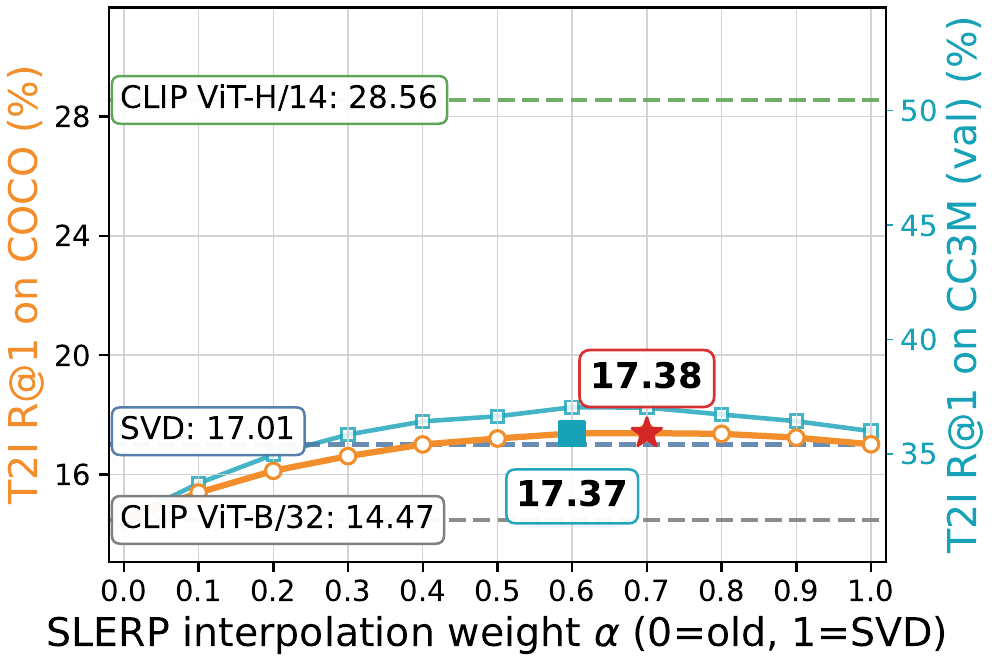}
        \caption{CLIP ViT-H/14 $\rightarrow$ CLIP ViT-B/32}
    \end{subfigure}
    \hspace{-3pt}
    \begin{subfigure}[b]{0.49\linewidth}
        \centering
        \includegraphics[width=0.49\linewidth]{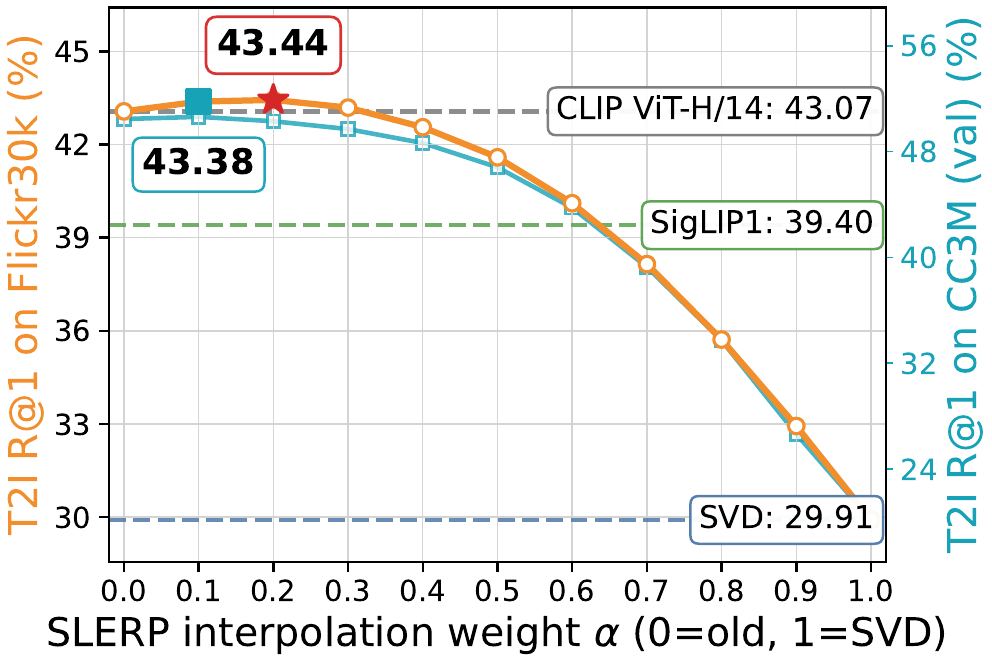}
        \hfill
        \includegraphics[width=0.49\linewidth]{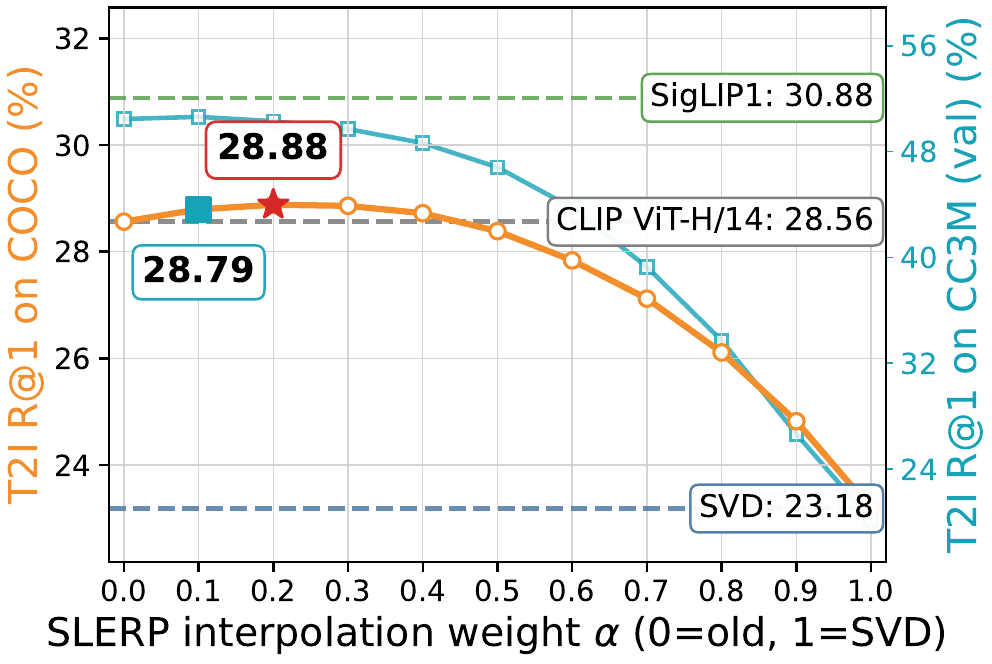}
        \caption{SigLIP1 $\rightarrow$ CLIP ViT-H/14}
    \end{subfigure}
    \caption{
    Text-to-image Recall@1 along the SLERP interpolation path for Flickr30k (left) and COCO (right).
    The image gallery is encoded by the old model, while text queries interpolate between the old-model query (\(\alpha=0\)) and the SVD-aligned new-model query (\(\alpha=1\)).
    Markers indicate the CC3M-selected \(\hat{\alpha}\) and the dataset-specific oracle \(\alpha^\star\); dashed lines report the old-model, new-model, and SVD performance.
    }
    \label{fig:t2i_slerp_curves}
\end{figure*}

\begin{figure*}[t]
    \centering
    \begin{subfigure}[b]{0.7\linewidth}
        \centering
        \includegraphics[width=\linewidth]{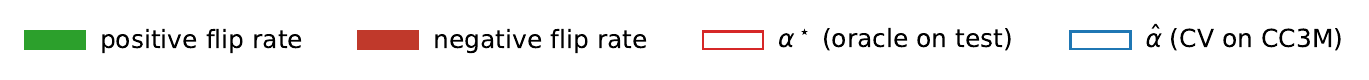}
    \end{subfigure}
    \hspace{-10pt}
    \begin{subfigure}[b]{0.485\textwidth}
        \centering
        \includegraphics[width=\textwidth]{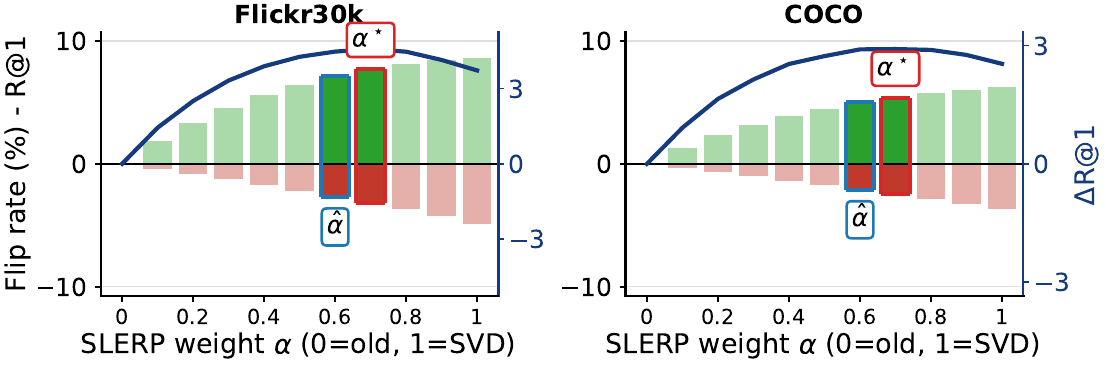}
        \caption{CLIP ViT-H/14 $\rightarrow$ CLIP ViT-B/32}
        \label{fig:t2i-r1-h14-b32}
    \end{subfigure}
    \begin{subfigure}[b]{0.485\textwidth}
        \centering
        \includegraphics[width=\textwidth]{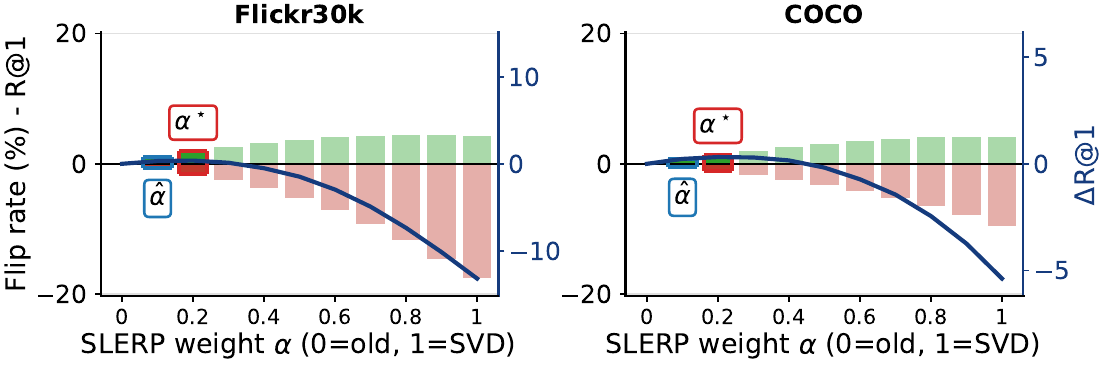}
        \caption{SigLIP1 $\rightarrow$ CLIP ViT-H/14}
        \label{fig:t2i-r1-siglip-h14}
    \end{subfigure}
    \caption{
    Flip-rate trade-off for T2I Recall@1.
    Bars show positive and negative flip rates relative to the old-to-old evaluation.
    The blue curve reports \(\Delta\mathrm{R@1}\).
    Here \(\alpha=0\) corresponds to old-model queries and \(\alpha=1\) to SVD-aligned new-model queries.
    Red and blue markers denote the dataset-specific oracle \(\alpha^\star\) and the CC3M-selected weight \(\hat{\alpha}\), respectively.
    }
    \label{fig:slerp-flips}
\end{figure*}

\section{Conclusion}\label{sec:conclusion}

We studied the problem of upgrading contrastive vision-language retrieval systems without re-encoding the deployed gallery. 
Motivated by the residual angular discrepancy left by orthogonal post-hoc alignment, we proposed a query-side approach that first maps new-model queries into the old-model representation space through orthogonal Procrustes alignment and then interpolates them with the corresponding old-model queries along the spherical geodesic.
Our analysis characterizes when this interpolation path contains an interior direction closer to an idealized retrieval-optimal direction than either endpoint, and connects this angular improvement to Recall@$K$ through a local margin-based sufficient condition.
Empirically, we validated the method across CLIP, SigLIP1, and SigLIP2 upgrades, covering same-family and cross-family pairs and Flickr30k, COCO, and NoCaps retrieval benchmarks. 
SLERP improves over SVD alignment in nearly all settings, often restores empirical backward-compatibility, and frequently matches or surpasses training-based compatibility baselines while requiring no compatibility training. 
The support-set selected interpolation weights transfer reliably across datasets, and additional evaluations show that the same mechanism also extends to re-indexed retrieval and zero-shot classification.
Overall, these results show that residual post-alignment geometry contains useful retrieval signal and that post-alignment interpolation provides a simple and effective mechanism for backward-compatible vision-language model upgrades.

\textbf{Limitations.}
SLERP also involves practical trade-offs, discussed in Appendix~\ref{app:limit}: the interpolation weight must be selected on a support set, query-side inference requires both endpoint embeddings, and performance remains bounded by the quality and complementarity of the available representations.
In our experiments, these trade-offs are favorable, suggesting that spherical interpolation provides a simple, robust approach for upgrading contrastive vision-language retrieval systems while preserving backward-compatibility. Moreover, it provides a compatibility mechanism during model migration, allowing the deployed gallery to remain usable while full re-indexing is deferred or performed offline, as described in Appendix~\ref{app:migration}.

\begin{ack}
\textbf{Funding:} This work was partially supported by research funds from the Department of Information Engineering, University of Florence, and by the EU Horizon Europe projects ELIAS (No.~101120237) and ELLIOT (No.~101214398), by the FIS project GUIDANCE (No.~FIS2023-03251).\\
\textbf{Competing interests:} The authors declare no competing interests.
\end{ack}

{
\small
\bibliographystyle{unsrt}
\bibliography{bib}
}

\clearpage
\appendix
\etocdepthtag.toc{mtappendix}   %

\begingroup
\etocsettagdepth{mtmain}{none}          %
\etocsettagdepth{mtappendix}{subsection}%
\etocsettocstyle{\section*{Appendix Contents}\vspace{0.5em}}{}
\tableofcontents      %
\endgroup

\clearpage

\section{Additional Related Work}
\label{app:related}

\textbf{Post-hoc alignment and cross-model correspondence.}
A complementary line of work studies whether the latent spaces of independently trained models can be aligned after training, without modifying the original model weights. Classical manifold alignment and Procrustes methods provide the foundation for this approach by seeking simple correspondences between representation spaces, either from paired anchors or by preserving the geometric structure of the underlying data manifolds~\cite{schonemann1966generalized,grave2019unsupervised,wang2009general,DBLP:conf/icml/WangM08,Moschella2022-yf}. The central assumption is that independently trained models may encode similar semantic structure in different coordinate systems; consequently, a lightweight post-hoc transformation can often make their representations comparable. Recent work supports this view in latent representation spaces, showing that useful cross-model correspondences can be recovered through semantic latent translation and latent functional maps~\cite{maiorca2023latent,fumero2024latent}. These results suggest that exact equality between representation spaces is not required for interoperability---a transformation that preserves the task-relevant structure shared across models can be sufficient.
This perspective is consistent with broader evidence that independently trained models can develop similar representation geometry across architectures and modalities~\cite{li2015convergent,huh2024platonic_representation_hypothesis}. However, such convergence is generally approximate rather than exact, and recent results indicate that cross-modal agreement may weaken at realistic scales and in many-to-many settings~\cite{koepke2026cave}. Most closely related to our setting, \cite{gupta2026canonicalizing_multimodal_contrastive} shows that independently trained contrastive VLMs can often be approximately canonicalized by a single orthogonal map shared across modalities, estimated from a small anchor set and without retraining either model. Recent work further extends this viewpoint beyond pairwise alignment by constructing a shared orthogonal reference space across multiple models and applying a retrieval-oriented correction~\cite{achara2026multi}. These works support the assumption that a new-model query can be mapped into the coordinate system of the deployed gallery model with post-hoc orthogonal alignment. Our work focuses on the angular discrepancy after this alignment step: even in a common coordinate system, embeddings of the same input may differ due to architecture, training data (e.g., label noise and long-tailed class distributions~\cite{ricci2023meta}), optimization, or inductive bias, and may therefore induce different rankings over the same fixed gallery. We study this residual discrepancy directly by characterizing when spherical interpolation between the old-model query and the aligned new-model query improves retrieval.

A related line of work combines independently trained embedding models directly in parameter space. In particular, \cite{li2024improving} studies model merging for general text embeddings by searching over combinations of task vectors, including SLERP-based interpolation; unlike our setting, this operates in model-parameter space and assumes models with the same architecture, whereas we interpolate query representations after cross-model alignment.

\textbf{Geometry of contrastive vision-language spaces.}
Our analysis builds on work showing that contrastive VLMs learn a normalized image--text embedding space in which cross-modal comparisons are performed by cosine similarity, but whose geometry is not fully homogeneous or modality-invariant~\cite{radford2021learning,wang2020understanding}.
Prior work shows that image and text embeddings can occupy separated regions of the representation space, leading to modality gaps~\cite{liang2022mind_the_gap,DBLP:conf/iclr/SchrodiHA0B25}. Other studies identify intra-modal misalignment within individual encoders~\cite{mistretta2025cross_the_gap}, as well as spectral decompositions into approximately isotropic shared components and anisotropic modality-specific directions~\cite{magistri2026isoclip}. Recent work further suggests that InfoNCE-trained representations may exhibit approximately Gaussian embedding distributions~\cite{betser2026infonce}.
Together, these findings indicate that global comparability does not imply full geometric equivalence: a common coordinate system may preserve cross-modal similarity while still retaining model-specific or modality-specific structure. This perspective helps explain why an alignment estimated from one modality can transfer to the other~\cite{gupta2026canonicalizing_multimodal_contrastive}, while also leaving residual angular discrepancies after alignment. In our setting, this residual angular structure is precisely the object of interest, and it motivates interpolation between the old-model query and the aligned new-model query.

\section{Extension to Unequal Embedding Dimensions via Rectangular Procrustes}
\label{app:procrustes}

We describe how the alignment obtained by solving the Procrustes problem used in the main text extends when the old model and the new model have different embedding dimensions. Let the old model, which defines the deployed gallery space, produce normalized embeddings in $\mathbb{R}^{d_{\mathrm{old}}}$, and let the new model produce normalized embeddings in $\mathbb{R}^{d_{\mathrm{new}}}$. Given an alignment support set, denote by
\begin{equation}
  U\in\mathbb{R}^{N_a\times d_{\mathrm{old}}},
  \qquad
  \bar V\in\mathbb{R}^{N_a\times d_{\mathrm{new}}}
\end{equation}
the row-stacked old-model and new-model embeddings, respectively. The goal is to construct an orthogonal alignment map
\begin{equation}
  R\in\mathbb{R}^{d_{\mathrm{new}}\times d_{\mathrm{old}}}
\end{equation}
such that $\bar V R$ gives new-model embeddings expressed in the old-model coordinate system. We refer to $\bar V R$ as the new aligned representation.

The construction is based on the support-set cross-covariance matrix
\begin{equation}
  C := \bar V^\top U \in \mathbb{R}^{d_{\mathrm{new}}\times d_{\mathrm{old}}}.
\end{equation}
When $d_{\mathrm{new}}=d_{\mathrm{old}}$, this reduces to the square orthogonal Procrustes problem used in the main text. When $d_{\mathrm{new}}\neq d_{\mathrm{old}}$, we use a semi-orthogonal Procrustes factor obtained by maximizing the paired agreement between old-model and new-model embeddings of the support set:
\begin{equation}
  R^\star
  \in
  \arg\max_{R\in\mathcal{O}_{d_{\mathrm{new}},d_{\mathrm{old}}}}
  \operatorname{tr}(R^\top C),
\end{equation}
where
\begin{equation}
  \mathcal{O}_{d_{\mathrm{new}},d_{\mathrm{old}}}
  :=
  \begin{cases}
    \{R\in\mathbb{R}^{d_{\mathrm{new}}\times d_{\mathrm{old}}}: RR^\top=I_{d_{\mathrm{new}}}\},
    & d_{\mathrm{new}}\le d_{\mathrm{old}},\\[1mm]
    \{R\in\mathbb{R}^{d_{\mathrm{new}}\times d_{\mathrm{old}}}: R^\top R=I_{d_{\mathrm{old}}}\},
    & d_{\mathrm{new}}\ge d_{\mathrm{old}}.
  \end{cases}
\end{equation}
Thus, the alignment is always a partial isometry from the new-model representation space to the old-model gallery space. The two unequal-dimensional cases have different geometric interpretations.

First consider the case $d_{\mathrm{new}}<d_{\mathrm{old}}$. Since the new-model representation has lower dimension than the old-model representation, it can be embedded isometrically into the old-model gallery space. In this case, the trace maximization above is equivalent to the rectangular least-squares Procrustes problem
\begin{equation}
  R^\star
  =
  \arg\min_{RR^\top=I_{d_{\mathrm{new}}}}
  \|\bar V R-U\|_F^2 ,
\end{equation}
because the constraint $RR^\top=I_{d_{\mathrm{new}}}$ makes $\|\bar V R\|_F^2=\|\bar V\|_F^2$ independent of $R$.

Let
\begin{equation}
  C=\bar V^\top U=P\Sigma Q^\top
\end{equation}
be a thin singular value decomposition, with
\begin{equation}
  P\in\mathbb{R}^{d_{\mathrm{new}}\times d_{\mathrm{new}}},
  \qquad
  Q\in\mathbb{R}^{d_{\mathrm{old}}\times d_{\mathrm{new}}}.
\end{equation}
The rectangular Procrustes solution is
\begin{equation}
  R^\star = P Q^\top \in \mathbb{R}^{d_{\mathrm{new}}\times d_{\mathrm{old}}}.
\end{equation}
Because
\begin{equation}
  R^\star R^{\star\top}
  =
  P Q^\top Q P^\top
  =
  I_{d_{\mathrm{new}}},
\end{equation}
the map preserves all inner products within the new-model space. Indeed, for any $x,y\in\mathbb{R}^{d_{\mathrm{new}}}$,
\begin{equation}
  \langle xR^\star,yR^\star\rangle
  =
  xR^\star R^{\star\top}y^\top
  =
  xy^\top
  =
  \langle x,y\rangle .
\end{equation}
Thus, when $d_{\mathrm{new}}<d_{\mathrm{old}}$, the new aligned representation is an isometric embedding of the new-model representation into the old-model gallery space. Norms, angles, inner products, and cosine similarities between new-model embeddings are preserved exactly.

We now consider the case $d_{\mathrm{new}}>d_{\mathrm{old}}$. In this setting, the new-model representation has higher dimension than the old-model gallery space, so no linear map from $\mathbb{R}^{d_{\mathrm{new}}}$ to $\mathbb{R}^{d_{\mathrm{old}}}$ can preserve all inner products. The rectangular Procrustes map should therefore be interpreted as a partial isometry: it extracts the $d_{\mathrm{old}}$-dimensional component of the new-model representation that is maximally aligned with the old-model gallery space.

Write the full singular value decomposition of the cross-covariance as
\begin{equation}
  C
  =
  \bar V^\top U
  =
  [P\;N]
  \begin{bmatrix}
    \Sigma\\
    0
  \end{bmatrix}
  Q^\top,
\end{equation}
where
\begin{equation}
  P\in\mathbb{R}^{d_{\mathrm{new}}\times d_{\mathrm{old}}},
  \qquad
  N\in\mathbb{R}^{d_{\mathrm{new}}\times(d_{\mathrm{new}}-d_{\mathrm{old}})},
  \qquad
  Q\in\mathbb{R}^{d_{\mathrm{old}}\times d_{\mathrm{old}}}.
\end{equation}
Here, the columns of $P$ span the $d_{\mathrm{old}}$-dimensional subspace of the new-model representation that is aligned with the old-model gallery space, while the columns of $N$ span its orthogonal complement. The rectangular Procrustes map is
\begin{equation}
  R^\star = P Q^\top \in \mathbb{R}^{d_{\mathrm{new}}\times d_{\mathrm{old}}}.
\end{equation}
It satisfies
\begin{equation}
  R^{\star\top}R^\star=I_{d_{\mathrm{old}}},
  \qquad
  R^\star R^{\star\top}=PP^\top,
  \qquad
  PP^\top+NN^\top=I_{d_{\mathrm{new}}}.
\end{equation}
Therefore, any new-model embedding $\bar v\in\mathbb{R}^{d_{\mathrm{new}}}$ admits the orthogonal decomposition
\begin{equation}
  \bar v
  =
  \bar v PP^\top
  +
  \bar v NN^\top .
\end{equation}
The first term is the component of $\bar v$ that is visible after alignment to the old-model gallery space, while the second term is the residual component that lies outside the old model-aligned subspace.

Equivalently, one can represent the new-model embedding in an augmented space as
\begin{equation}
  \tilde v
  :=
  (\bar vR^\star,\bar vN)
  \in
  \mathbb{R}^{d_{\mathrm{old}}}\times\mathbb{R}^{d_{\mathrm{new}}-d_{\mathrm{old}}},
\end{equation}
and represent each old-model gallery embedding $u\in\mathbb{R}^{d_{\mathrm{old}}}$ as
\begin{equation}
  \tilde u := (u,0).
\end{equation}
This augmented representation preserves the full geometry of the new-model space. Indeed, for any new-model embeddings $\bar v,\bar w\in\mathbb{R}^{d_{\mathrm{new}}}$,
\begin{equation}
  \langle \tilde v,\tilde w\rangle
  =
  \langle \bar vR^\star,\bar wR^\star\rangle
  +
  \langle \bar vN,\bar wN\rangle
  =
  \bar v(PP^\top+NN^\top)\bar w^\top
  =
  \langle \bar v,\bar w\rangle .
\end{equation}
At the same time, the residual component is inert for retrieval against the deployed old-model gallery, since
\begin{equation}
  \langle \tilde v,\tilde u\rangle
  =
  \langle \bar vR^\star,u\rangle .
\end{equation}
Thus, when $d_{\mathrm{new}}>d_{\mathrm{old}}$, the projected vector $\bar vR^\star$ is the only component of the new aligned representation that affects cross-model retrieval in the old-model gallery space.
The residual block $\bar vN$ preserves the remaining new-model geometry, but it has zero interaction with old-model gallery embeddings represented as $(u,0)$.

In practice, retrieval can be performed directly in the old-model gallery space using the projected new aligned embedding $\bar vR^\star$. When $d_{\mathrm{new}}>d_{\mathrm{old}}$, this projection is not generally unit-norm:
\begin{equation}
  \|\bar vR^\star\|_2^2
  =
  \bar vPP^\top \bar v^\top
  \le
  \|\bar v\|_2^2 .
\end{equation}
Multiplying all scores for a fixed query by a positive scalar does not change the induced ranking, so this norm reduction does not affect nearest-neighbor retrieval by itself. However, the SLERP analysis requires both endpoints to lie on the old-model unit sphere. We therefore normalize the projected new aligned query before interpolation:
\begin{equation}
  v(x)
  :=
  \frac{\phi_{\mathrm{new}}(x)R^\star}
       {\|\phi_{\mathrm{new}}(x)R^\star\|_2},
  \qquad
  \text{provided }
  \|\phi_{\mathrm{new}}(x)R^\star\|_2>0 .
\end{equation}
For $d_{\mathrm{new}}\le d_{\mathrm{old}}$, the alignment is norm-preserving, so this normalization is redundant. With this convention, the old-model query
\begin{equation}
  u(x):=\phi_{\mathrm{old}}(x)\in\mathbb{S}^{d_{\mathrm{old}}-1}
\end{equation}
and the new aligned-model query
\begin{equation}
  v(x)\in\mathbb{S}^{d_{\mathrm{old}}-1}
\end{equation}
both lie on the same unit sphere in the old-model coordinate system. Consequently, the SLERP characterization in the main text applies without modification.

\section{Relationship Between SLERP and NLERP}
\label{app:nlerp}

This appendix clarifies the relationship between spherical linear interpolation (SLERP) and normalized linear interpolation (NLERP) in our setting. 
Both methods can interpolate between the same two normalized query endpoints: the old-model query embedding \(u\) and the SVD-aligned new-model query embedding \(v\), with \(\theta=\angle(u,v)\in(0,\pi)\). 
The difference is not the set of directions that can be reached, but the way in which the interpolation path is parameterized.

Recall that SLERP is defined as
\begin{equation}
q^{\mathrm{slerp}}_{\alpha}
=
\frac{\sin((1-\alpha)\theta)}{\sin\theta}u
+
\frac{\sin(\alpha\theta)}{\sin\theta}v,
\qquad \alpha\in[0,1].
\end{equation}
By contrast, NLERP first takes a Euclidean convex combination of the two endpoints and then renormalizes it:
\begin{equation}
q^{\mathrm{nlerp}}_{\beta}
=
\frac{(1-\beta)u+\beta v}
{\|(1-\beta)u+\beta v\|_2},
\qquad \beta\in[0,1].
\end{equation}
Thus, NLERP also remains on the unit sphere, but its interpolation weight does not correspond to constant angular motion along the arc.

To make the relationship explicit, use the same orthonormal basis as in Sec.~\ref{sec:geometry_to_retrieval},
\begin{equation}
e_1=u,\qquad 
e_2=\frac{v-\cos\theta\,u}{\sin\theta},
\end{equation}
so that \(v=\cos\theta\,e_1+\sin\theta\,e_2\). 
The unnormalized NLERP numerator can then be written as
\begin{equation}
(1-\beta)u+\beta v
=
\bigl(1-\beta+\beta\cos\theta\bigr)e_1
+
\beta\sin\theta\,e_2 .
\end{equation}
Therefore, after normalization, the NLERP point has angular coordinate
\begin{equation}
\tau(\beta)
=
\operatorname{atan2}\!\left(
\beta\sin\theta,\,
1-\beta+\beta\cos\theta
\right)
\in[0,\theta],
\end{equation}
and
\begin{equation}
q^{\mathrm{nlerp}}_{\beta}
=
\cos(\tau(\beta))e_1+\sin(\tau(\beta))e_2
=
q^{\mathrm{slerp}}_{\tau(\beta)/\theta}.
\end{equation}
Moreover,
\begin{equation}
\frac{d\tau}{d\beta}
=
\frac{\sin\theta}
{\|(1-\beta)u+\beta v\|_2^2}
>0,
\end{equation}
so \(\tau(\beta)\) is strictly increasing on \([0,1]\). 
Consequently, NLERP traces exactly the same minor geodesic arc as SLERP, but under a monotone reparameterization. 
Conversely, the inverse mapping from a SLERP weight \(\alpha\) to the NLERP weight that reaches the same direction is
\begin{equation}
\beta(\alpha)
=
\frac{\sin(\alpha\theta)}
{\sin((1-\alpha)\theta)+\sin(\alpha\theta)} .
\end{equation}

This equivalence has an immediate implication for continuous weight selection. 
For any retrieval objective that depends only on the resulting query direction, optimizing over \(\alpha\in[0,1]\) with SLERP and optimizing over \(\beta\in[0,1]\) with NLERP search over the same set of unit vectors. 
Thus, in the continuous setting, the two methods have the same optimal directions. 
The distinction is instead geometric and practical: SLERP moves at constant angular speed, since \(\angle(u,q^{\mathrm{slerp}}_\alpha)=\alpha\theta\), whereas NLERP moves non-uniformly along the same arc. 
Using SLERP therefore makes the interpolation weight directly interpretable as a normalized angular displacement from the old-model query toward the SVD-aligned new-model query.

Fig.~\ref{fig:nlerp} empirically compares SLERP and NLERP on Flickr30k and COCO retrieval after SVD alignment. 
The two curves are nearly identical across the interpolation path and achieve their best performance in the same interior region of the arc. 
The small visible differences are consistent with the reparameterization above: a uniform grid in the NLERP weight \(\beta\) is not a uniform grid in angular position, and therefore does not sample exactly the same directions as a uniform grid in the SLERP weight \(\alpha\). 
Overall, this comparison supports our use of SLERP not because it accesses a different family of query representations, but because it provides a geometrically canonical parameterization of the same interpolation arc.

\begin{figure}[t]
    \centering

    \begin{subfigure}[t]{0.49\columnwidth}
        \centering
        \includegraphics[width=0.9\linewidth]{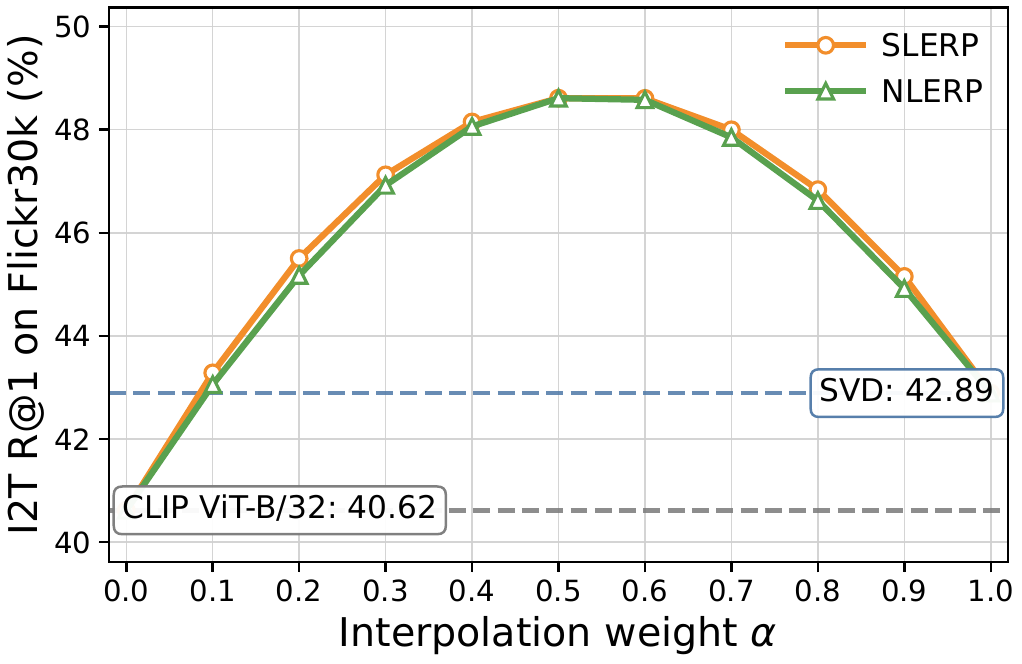}

        \vspace{0.15cm}

        \includegraphics[width=0.93\linewidth]{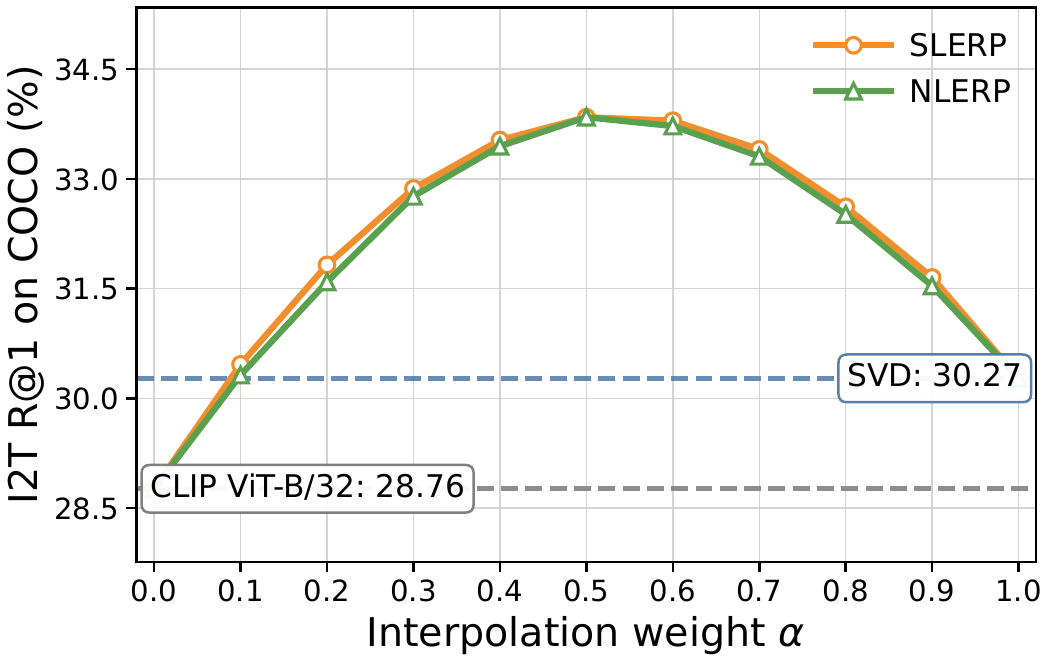}

        \caption{CLIP ViT-L/14 $\rightarrow$ CLIP ViT-B/32.}
        \label{fig:nlerp_clip_to_clip}
    \end{subfigure}
    \hfill
    \begin{subfigure}[t]{0.49\columnwidth}
        \centering
        \includegraphics[width=0.93\linewidth]{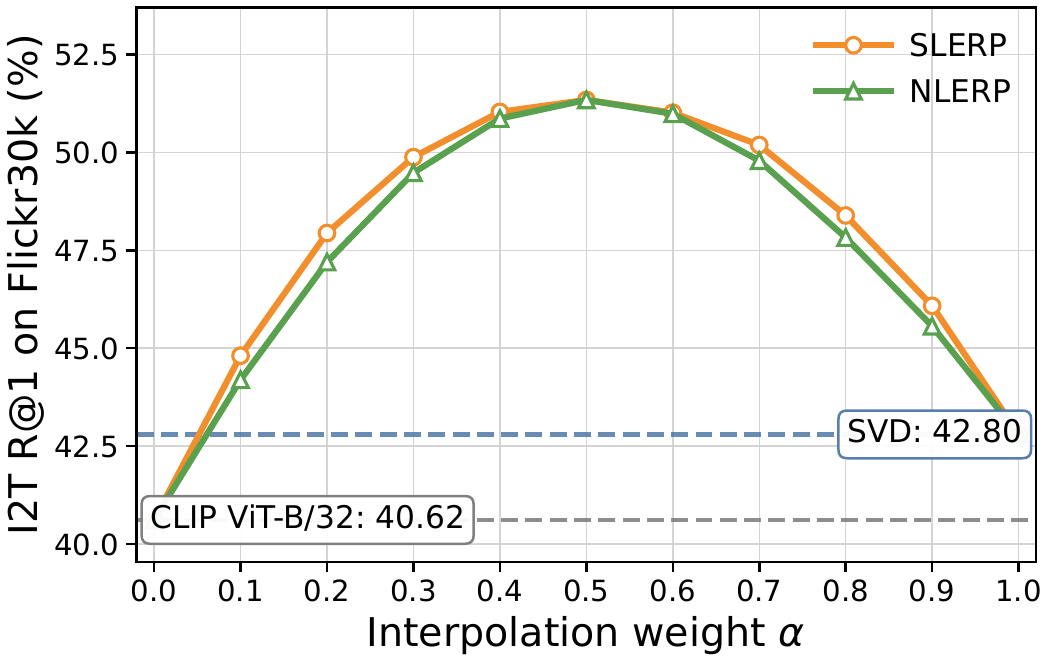}

        \vspace{0.15cm}

        \includegraphics[width=0.9\linewidth]{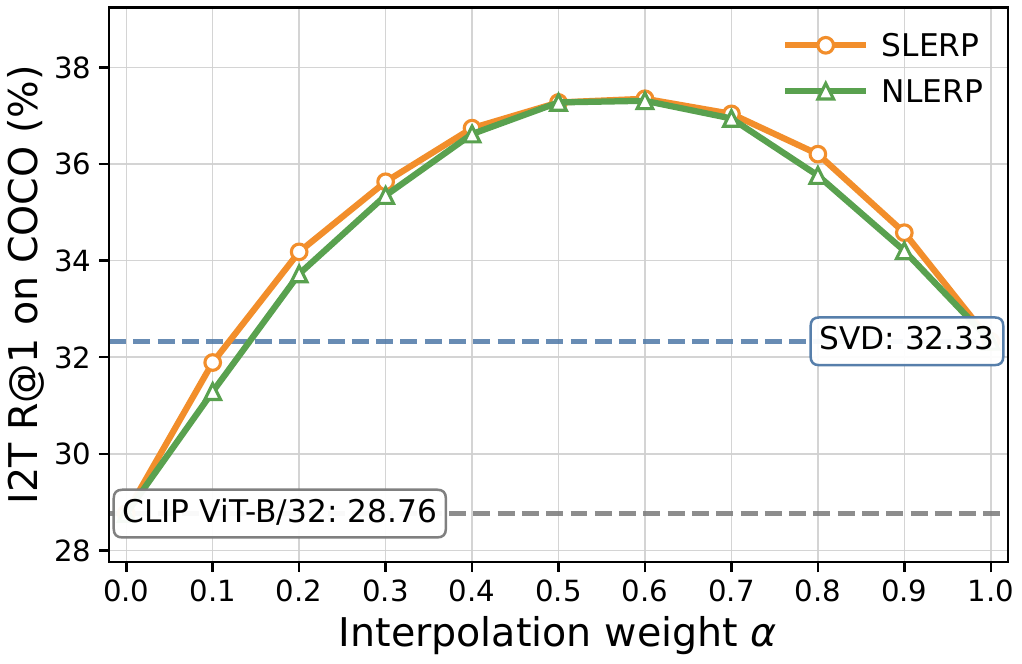}

        \caption{SigLIP2 $\rightarrow$ CLIP ViT-B/32.}
        \label{fig:nlerp_siglip_to_clip}
    \end{subfigure}

    \caption{
    Empirical comparison of SLERP and NLERP for text-only image-to-text retrieval after SVD alignment. 
    The columns correspond to the two source-target model pairs: CLIP ViT-L/14 $\rightarrow$ CLIP ViT-B/32 on the left and SigLIP2 $\rightarrow$ CLIP ViT-B/32 on the right. 
    The rows correspond to the two retrieval datasets: Flickr30k on the top row and COCO on the bottom row. 
    Across Flickr30k and COCO, and for both source-model pairs, the two interpolation schemes trace nearly identical retrieval curves and attain their best performance in the same interior region of the interpolation path. 
    The small discrepancies arise because a uniform grid in the NLERP weight does not correspond to a uniform grid in angular position along the shared geodesic arc. 
    This supports using SLERP as a geometrically canonical parameterization rather than as a different family of query representations.
    }
    \label{fig:nlerp}
\end{figure}

\section{Decomposition of the Retrieval-Optimal Direction} \label{sec:proof_lemma}

\begin{lemma_num}{\ref{lem:decomp}}{Decomposition relative to the interpolation plane}
Let $q^\ast\in\Sd$.
There exist unique vectors
$p\in\operatorname{span}(u,v)$ and
$w_\perp\perp\operatorname{span}(u,v)$ such that
$q^\ast=p+w_\perp$.
Let $\rho:=\|p\|$.
Then $\rho\in[0,1]$ and
$\|w_\perp\|^2=1-\rho^2$.
If $\rho>0$, there exists a unique
$\psi\in(-\pi,\pi]$ such that
\begin{equation}
p=\rho(\cos\psi\,e_1+\sin\psi\,e_2).
\end{equation}
\end{lemma_num}

\begin{proof}
Let $\mathcal P:=\operatorname{span}(u,v)$. Since $\mathcal P$ is a linear
subspace of $\mathbb R^d$, the orthogonal projection of $q^\ast$ onto
$\mathcal P$ is unique. Denote this projection by $p$ and define
\begin{equation}
  w_\perp := q^\ast - p .
\end{equation}
Then $p\in\mathcal P$, $w_\perp\perp\mathcal P$, and
\begin{equation}
  q^\ast = p+w_\perp .
\end{equation}
The uniqueness of the orthogonal projection also gives the uniqueness of this
decomposition.

Because $p$ and $w_\perp$ are orthogonal, Pythagoras gives
\begin{equation}
  \|q^\ast\|^2 = \|p\|^2 + \|w_\perp\|^2 .
\end{equation}
Since $q^\ast\in\Sd$, we have $\|q^\ast\|=1$. Therefore, with
$\rho:=\|p\|$, it follows that $\rho\in[0,1]$ and
\begin{equation}
  \|w_\perp\|^2 = 1-\rho^2 .
\end{equation}

If $\rho>0$, then $p/\rho$ is a unit vector in the two-dimensional plane
$\mathcal P$. Since $(e_1,e_2)$ is an orthonormal basis of $\mathcal P$, there
exist unique coefficients $a,b\in\mathbb R$ such that
\begin{equation}
  \frac{p}{\rho}=a e_1+b e_2,
  \qquad
  a^2+b^2=1 .
\end{equation}
Hence there exists a unique angle $\psi\in(-\pi,\pi]$ such that
$a=\cos\psi$ and $b=\sin\psi$. Thus,
\begin{equation}
  p=\rho(\cos\psi\,e_1+\sin\psi\,e_2),
\end{equation}
which proves the claim.
\end{proof}

\section{Proof of Theorem \ref{thm:main}}
\label{sec:proof_theo_retrieval_gap}

\begin{theorem_num}{\ref{thm:main}}{Angular retrieval gap along the SLERP arc}
Let $q_\alpha=\operatorname{slerp}(u,v;\alpha)$ with
$\theta=\angle(u,v)\in(0,\pi)$, and let $\rho$ and $\psi$ be defined as in
Lemma~\ref{lem:decomp} with respect to the canonical basis
$e_1=u$ and $e_2=(v-\cos\theta\,u)/\sin\theta$. Then:
\begin{enumerate}[label=(\roman*)]
  \item If $\rho>0$, then $\quad
    \langle q_\alpha,q^\ast\rangle
    =
    \rho\cos(\alpha\theta-\psi),
    \qquad
    G(\alpha)
    =
    \arccos\bigl(\rho\cos(\alpha\theta-\psi)\bigr).
  $
  \item If $\rho=0$, then
  $\quad
    G(\alpha)\equiv \pi/2,
    \qquad
    \forall \alpha\in[0,1].
  $
  \item If $\rho>0$, the minimizers of $G(\alpha)$ over $\alpha\in[0,1]$
  coincide with the minimizers of the circular distance
  $d_{\mathrm{circ}}(\alpha\theta,\psi)$ over $\alpha\in[0,1]$,
  where $d_{\mathrm{circ}}(\tau,\psi):=\min_{k\in\mathbb{Z}}|\tau-\psi-2\pi k|$.
  Equivalently, every minimizer satisfies
  \begin{equation}
    \alpha^\ast=\frac{\tau^\ast}{\theta},
    \qquad
    \tau^\ast\in
    \arg\min_{\tau\in[0,\theta]}
    d_{\mathrm{circ}}(\tau,\psi).
  \end{equation}
\end{enumerate}
For $\rho>0$, a strict interior improvement $G(\alpha^\ast)<\min\{G(0),G(1)\}$
holds if and only if the normalized in-plane projection of $q^\ast$ lies strictly in the relative interior of the minor arc from $u$ to $v$.
\end{theorem_num}

\begin{proof}
Since $\theta=\angle(u,v)\in(0,\pi)$, the vectors $u$ and $v$ are distinct and
non-antipodal, and hence span a two-dimensional interpolation plane. Define the
orthonormal basis
\begin{equation}
  e_1:=u,
  \qquad
  e_2:=
  \frac{v-\inner{u}{v}u}{\|v-\inner{u}{v}u\|}
  =
  \frac{v-\cos\theta\,u}{\sin\theta}.
\end{equation}
Then
\begin{equation}
  u=e_1,
  \qquad
  v=\cos\theta\,e_1+\sin\theta\,e_2 .
\end{equation}
Using the definition of SLERP,
\begin{equation}
  q_\alpha
  =
  \frac{\sin((1-\alpha)\theta)}{\sin\theta}e_1
  +
  \frac{\sin(\alpha\theta)}{\sin\theta}
  \bigl(\cos\theta\,e_1+\sin\theta\,e_2\bigr).
\end{equation}
Since
\begin{equation}
  \sin((1-\alpha)\theta)
  =
  \sin\theta\cos(\alpha\theta)
  -
  \cos\theta\sin(\alpha\theta),
\end{equation}
the coefficient of $e_1$ becomes $\cos(\alpha\theta)$, and the coefficient of
$e_2$ becomes $\sin(\alpha\theta)$. Therefore
\begin{equation}
  q_\alpha
  =
  \cos(\alpha\theta)e_1+\sin(\alpha\theta)e_2 .
  \tag{A}
\end{equation}

Assume first that $\rho>0$. By Lemma~\ref{lem:decomp},
\begin{equation}
  q^\ast
  =
  \rho(\cos\psi\,e_1+\sin\psi\,e_2)+w_\perp,
  \qquad
  w_\perp\perp \operatorname{span}(u,v).
\end{equation}
Since $q_\alpha\in\operatorname{span}(u,v)$, the orthogonal component
$w_\perp$ does not contribute to the inner product. Hence, using (A),
\begin{align*}
  \inner{q_\alpha}{q^\ast}
  &=
  \inner{
    \cos(\alpha\theta)e_1+\sin(\alpha\theta)e_2
  }{
    \rho(\cos\psi\,e_1+\sin\psi\,e_2)+w_\perp
  } \\
  &=
  \rho\cos(\alpha\theta)\cos\psi
  +
  \rho\sin(\alpha\theta)\sin\psi \\
  &=
  \rho\cos(\alpha\theta-\psi).
\end{align*}
Since $G(\alpha)=\arccos(\inner{q_\alpha}{q^\ast})$, this proves (i).

If $\rho=0$, then $q^\ast=w_\perp$ is orthogonal to
$\operatorname{span}(u,v)$. Since every $q_\alpha$ lies in this plane,
\begin{equation}
  \inner{q_\alpha}{q^\ast}=0,
  \qquad
  \forall \alpha\in[0,1].
\end{equation}
Thus
\begin{equation}
  G(\alpha)=\arccos(0)=\frac{\pi}{2},
  \qquad
  \forall \alpha\in[0,1],
\end{equation}
which proves (ii).

We now prove (iii), again assuming $\rho>0$. Since $\arccos$ is strictly
decreasing on $[-1,1]$, minimizing $G(\alpha)$ is equivalent to maximizing
\begin{equation}
  \cos(\alpha\theta-\psi).
\end{equation}
For any $\tau\in\mathbb{R}$, the circular distance
\begin{equation}
  d_{\mathrm{circ}}(\tau,\psi)
  =
  \min_{k\in\mathbb{Z}}|\tau-\psi-2\pi k|
  =
  \arccos\bigl(\cos(\tau-\psi)\bigr),
\end{equation}
where the second equality holds because $\arccos(\cos(\cdot))$ computes the
minimal unsigned angular distance to the nearest multiple of $2\pi$.
Therefore maximizing $\cos(\alpha\theta-\psi)$ is equivalent to minimizing
$d_{\mathrm{circ}}(\alpha\theta,\psi)$. Setting $\tau=\alpha\theta$, so that
$\tau\in[0,\theta]$, every minimizer has the form
\begin{equation}
  \alpha^\ast=\frac{\tau^\ast}{\theta},
  \qquad
  \tau^\ast\in
  \arg\min_{\tau\in[0,\theta]}
  d_{\mathrm{circ}}(\tau,\psi).
\end{equation}
This proves (iii).

It remains to characterize when the minimum is attained by a point that strictly
improves over both endpoints. Let
\begin{equation}
  \Gamma
  :=
  \{\cos\tau\,e_1+\sin\tau\,e_2:\tau\in[0,\theta]\}
\end{equation}
be the minor arc from $u$ to $v$. Since $\rho>0$, the normalized in-plane
projection of $q^\ast$ is
\begin{equation}
  \frac{p}{\|p\|}
  =
  \cos\psi\,e_1+\sin\psi\,e_2 .
\end{equation}
Because $\theta<\pi$, the map
\begin{equation}
  \tau\mapsto \cos\tau\,e_1+\sin\tau\,e_2
\end{equation}
is injective on $[0,\theta]$. Hence
\begin{equation}
  \frac{p}{\|p\|}\in\operatorname{relint}(\Gamma)
\end{equation}
if and only if there exists an integer $k\in\mathbb{Z}$ such that
\begin{equation}
  \tau_0:=\psi+2\pi k\in(0,\theta).
  \tag{B}
\end{equation}

Suppose first that $p/\|p\|\in\operatorname{relint}(\Gamma)$. Then (B) holds
for some $\tau_0\in(0,\theta)$, and
\begin{equation}
  \cos(\tau_0-\psi)=1.
\end{equation}
Thus $\tau_0$ maximizes $\tau\mapsto \cos(\tau-\psi)$ over $[0,\theta]$.
Moreover, since $\tau_0$ lies strictly inside the interval and
$\theta<\pi$, neither endpoint can attain the same value. Therefore
\begin{equation}
  \cos(0-\psi)<1,
  \qquad
  \cos(\theta-\psi)<1.
\end{equation}
Let $\alpha^\ast=\tau_0/\theta\in(0,1)$. Using $\rho>0$ and the strict
monotonicity of $\arccos$, we obtain
\begin{equation}
  G(\alpha^\ast)
  =
  \arccos(\rho)
  <
  \min\{G(0),G(1)\}.
\end{equation}
Hence a strict interior improvement holds.

Conversely, suppose that
\begin{equation}
  \frac{p}{\|p\|}\notin\operatorname{relint}(\Gamma).
\end{equation}
Then no point of the form $\psi+2\pi k$ lies in $(0,\theta)$. Define
\begin{equation}
  f(\tau):=\cos(\tau-\psi),
  \qquad
  \tau\in[0,\theta].
\end{equation}
Any interior maximizer of $f$ must satisfy
\begin{equation}
  f'(\tau)=-\sin(\tau-\psi)=0,
\end{equation}
and hence $\tau=\psi+k\pi$ for some $k\in\mathbb{Z}$. Points of the form
$\psi+2\pi k$ are maxima of the cosine, whereas points of the form
$\psi+(2k+1)\pi$ are minima. Since, by assumption, no point
$\psi+2\pi k$ lies in the open interval $(0,\theta)$, $f$ has no interior
maximizer on $(0,\theta)$. Therefore its maximum over the compact interval
$[0,\theta]$ is attained at an endpoint:
\begin{equation}
  f(\tau)\le \max\{f(0),f(\theta)\},
  \qquad
  \forall \tau\in[0,\theta].
\end{equation}
Equivalently, for every $\alpha\in[0,1]$,
\begin{equation}
  \cos(\alpha\theta-\psi)
  \le
  \max\{\cos(0-\psi),\cos(\theta-\psi)\}.
\end{equation}
Multiplying by $\rho>0$ and using again that $\arccos$ is strictly decreasing,
we obtain
\begin{equation}
  G(\alpha)
  \ge
  \min\{G(0),G(1)\},
  \qquad
  \forall \alpha\in[0,1].
\end{equation}
Thus no interior point can strictly improve over both endpoints.

Combining the two directions, a strict interior improvement occurs if and only
if the normalized in-plane projection $p/\|p\|$ lies strictly in the relative
interior of the minor arc from $u$ to $v$.
\end{proof}

\section[Connection Between SLERP and Recall@K]{Connection Between SLERP and Recall@$K$}
\label{app:retrieval_margin}

This appendix provides the metric-level details connecting the geometric
SLERP characterization in Sec.~\ref{sec:geometry_to_retrieval} to
Recall@\(K\). We first define Recall@\(K\) for queries with multiple relevant
gallery items, then introduce a top-\(K\) margin, and finally prove that
angular proximity to a positive-margin direction certifies top-\(K\) retrieval
success.

\subsection[Recall@K with Multiple Relevant Items]{Recall@\(K\) with Multiple Relevant Items}

Let the deployed gallery be
\begin{equation}
  \mathcal G=\{y_j\}_{j=1}^{N_g},
  \qquad
  g_j:=\phi_{\mathrm{old}}(y_j)\in\Sd .
\end{equation}
For a unit query direction \(q\in\Sd\), the score of gallery item \(y_j\) is
\begin{equation}
  s_j(q):=\langle q,g_j\rangle .
\end{equation}
Since all embeddings are normalized, this is the cosine similarity used for
ranking.

For a query \(x\in\mathcal Q\), let
\begin{equation}
  P(x)\subseteq\{1,\ldots,N_g\}
\end{equation}
denote the set of relevant gallery indices, and let
\begin{equation}
  N(x):=\{1,\ldots,N_g\}\setminus P(x)
\end{equation}
denote the set of negative indices. We assume \(P(x)\neq\emptyset\) and
\(|N(x)|\ge K\). The set \(P(x)\) may contain multiple relevant items, as in
image-to-text retrieval with multiple captions per image or in retrieval
benchmarks where several gallery items are valid matches for the same query.

Let \(T_K(q)\subseteq\{1,\ldots,N_g\}\) be the set of indices of the \(K\)
highest-scoring gallery items under \(\{s_j(q)\}_{j=1}^{N_g}\), with ties
resolved by a fixed deterministic rule. The single-query Recall@\(K\) event
is
\begin{equation}
  \mathrm{Recall@}K(x;q)
  :=
  \mathbf 1\{T_K(q)\cap P(x)\neq\emptyset\}.
\end{equation}
Thus, with multiple relevant items, Recall@\(K\) is a hit indicator: it is
equal to one whenever at least one relevant gallery item appears among the
top \(K\) retrieved items.

Since the query direction is input-dependent, dataset-level Recall@\(K\) is
defined for a query rule \(h:\mathcal Q\to\Sd\):
\begin{equation}
  \mathrm{Recall@}K(h)
  :=
  \frac{1}{|\mathcal Q|}
  \sum_{x\in\mathcal Q}
  \mathrm{Recall@}K(x;h(x)).
\end{equation}
For SLERP with fixed interpolation weight \(\alpha\), the query rule is
\(h_\alpha(x)=q_\alpha(x)\), and therefore
\begin{equation}
  \mathrm{Recall@}K(\alpha)
  :=
  \frac{1}{|\mathcal Q|}
  \sum_{x\in\mathcal Q}
  \mathrm{Recall@}K(x;q_\alpha(x)).
\end{equation}

\subsection[Top-K Margin]{Top-\(K\) Margin}

For a query \(x\), define the best relevant score as
\begin{equation}
  s_x^+(q):=\max_{j\in P(x)} s_j(q).
\end{equation}
Let \(\tau^-_{K,x}(q)\) denote the \(K\)-th largest value among the negative
scores
\begin{equation}
  \{s_j(q):j\in N(x)\}.
\end{equation}
The top-\(K\) retrieval margin is
\begin{equation}
  m_{K,x}(q)
  :=
  s_x^+(q)-\tau^-_{K,x}(q).
\end{equation}

\begin{proposition}[Top-\(K\) margin and Recall@\(K\)]
\label{prop:margin_recall}
For any query \(x\) and unit query direction \(q\),
\begin{equation}
  m_{K,x}(q)>0
  \quad\Longrightarrow\quad
  \mathrm{Recall@}K(x;q)=1,
\end{equation}
and
\begin{equation}
  m_{K,x}(q)<0
  \quad\Longrightarrow\quad
  \mathrm{Recall@}K(x;q)=0.
\end{equation}
When \(m_{K,x}(q)=0\), the outcome depends on the deterministic tie-breaking
rule. Consequently, away from boundary ties, positivity of \(m_{K,x}(q)\) is
equivalent to Recall@\(K\) success.
\end{proposition}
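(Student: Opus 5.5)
The plan is a direct counting argument on the sorted score list; no earlier geometric result is needed. Fix \(x\) and \(q\), abbreviate \(s_j:=s_j(q)\), and let \(j^+\in P(x)\) attain \(s_x^+(q)\), so \(s_{j^+}=s_x^+(q)\). The only link between the scores and the top-\(K\) set is the defining property of \(T_K(q)\): it consists of \(K\) indices, and every index outside it has score at most the smallest score inside it.

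\emph{Case \(m_{K,x}(q)>0\).} I will argue by contradiction. Suppose \(T_K(q)\cap P(x)=\emptyset\), so \(T_K(q)\subseteq N(x)\) and \(j^+\notin T_K(q)\). Then every \(j\in T_K(q)\) satisfies \(s_j\ge s_{j^+}\). Since \(T_K(q)\) consists of \(K\) distinct negatives, each with score at least \(s_{j^+}\), the \(K\)-th largest negative score obeys \(\tau^-_{K,x}(q)\ge s_{j^+}=s_x^+(q)\). This means \(m_{K,x}(q)\le 0\), a contradiction. Hence some relevant item lies in \(T_K(q)\), and Recall@\(K\) equals one.

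\emph{Case \(m_{K,x}(q)<0\).} Every relevant score satisfies \(s_j\le s_x^+(q)<\tau^-_{K,x}(q)\). By definition of the \(K\)-th largest value, at least \(K\) negatives have score at least \(\tau^-_{K,x}(q)\); this uses the standing assumption \(|N(x)|\ge K\). Each of these negatives therefore strictly beats every relevant item. Now suppose some relevant \(j\) were in \(T_K(q)\). The top-\(K\) property would then give \(s_j\ge s_i\) for every \(i\notin T_K(q)\). Because \(T_K(q)\) has only \(K\) slots and one is taken by \(j\), at least one of the \(K\) dominating negatives lies outside \(T_K(q)\) while having a strictly larger score than \(s_j\), a contradiction. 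So \(T_K(q)\cap P(x)=\emptyset\), and Recall@\(K\) equals zero.

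\emph{Zero margin and the equivalence.} When \(m_{K,x}(q)=0\), a relevant item ties with the \(K\)-th negative, and whether it enters \(T_K(q)\) is decided by the tie-breaking rule. A simple instance realising either outcome suffices to justify the remark. The final claim follows by combining the two implications: away from \(m=0\), \(m>0\) holds exactly when Recall@\(K\) equals one.

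The main obstacle is only bookkeeping: the \(K\)-th largest value must be read with multiplicity, so that repeated scores count separately, and the top-\(K\) set must be characterized correctly when ties are present. I will state that characterization explicitly at the outset so both cases rest on it.
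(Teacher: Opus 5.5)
Your proposal is correct and follows essentially the same route as the paper: in each sign case you compare the best relevant score $s_x^+(q)$ with the $K$-th largest negative score $\tau^-_{K,x}(q)$ by direct counting, and you leave the zero-margin case to the tie-breaking rule. Your contradiction arguments rest on the defining property of $T_K(q)$ and read the order statistic with multiplicity, so they handle ties more carefully than the paper's brief phrasing (``at most $K-1$ negatives can score above the best relevant item''), but the underlying idea is the same.
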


\begin{proof}
If \(m_{K,x}(q)>0\), then
\begin{equation}
  s_x^+(q)>\tau^-_{K,x}(q).
\end{equation}
Thus the best relevant item scores strictly above the \(K\)-th largest
negative score. Hence at most \(K-1\) negatives can score above the best
relevant item, so at least one relevant item must appear among the top \(K\)
gallery items. Therefore
\begin{equation}
  \mathrm{Recall@}K(x;q)=1.
\end{equation}

If \(m_{K,x}(q)<0\), then
\begin{equation}
  s_x^+(q)<\tau^-_{K,x}(q).
\end{equation}
Thus the \(K\)-th largest negative score is strictly above the score of every
relevant item. Hence at least \(K\) negatives outrank all relevant items, so
no relevant item can appear in the top \(K\). Therefore
\begin{equation}
  \mathrm{Recall@}K(x;q)=0.
\end{equation}

If \(m_{K,x}(q)=0\), the best relevant score coincides with the \(K\)-th
largest negative score. In this boundary case, whether a relevant item is
included in \(T_K(q)\) depends on the fixed tie-breaking rule.
\end{proof}

\subsection{Retrieval-Relevant Optimal Direction}

The geometric analysis in the main text is formulated with respect to a
optimal direction \(q^\ast\in\Sd\). To specialize this direction to
Recall@\(K\), we choose a direction that maximizes the top-\(K\) margin:
\begin{equation}
  q^\star_K(x)
  \in
  \arg\max_{q\in\Sd} m_{K,x}(q).
\end{equation}
Such a maximizer exists. Indeed, each score \(s_j(q)=\langle q,g_j\rangle\)
is continuous in \(q\). The maximum over finitely many relevant scores is
continuous, and the \(K\)-th order statistic over finitely many negative
scores is also continuous. Hence \(m_{K,x}\) is continuous. Since \(\Sd\) is
compact, the maximum is attained.

The maximizer \(q^\star_K(x)\) need not be unique; any selected maximizer can
serve as a retrieval-relevant optimal direction. With this choice, define
the angular gap
\begin{equation}
  G_{K,x}(\alpha)
  :=
  \angle(q_\alpha(x),q^\star_K(x)).
\end{equation}
For each fixed query \(x\), Theorem~\ref{thm:main} applies with
\begin{equation}
  u=u(x),
  \qquad
  v=v(x),
  \qquad
  q^\ast=q^\star_K(x).
\end{equation}
Therefore, the SLERP arc contains an interior point closer to the
top-\(K\)-margin-optimal direction than either endpoint exactly when the
normalized in-plane projection of \(q^\star_K(x)\) lies strictly in the
relative interior of the minor arc joining \(u(x)\) and \(v(x)\).

\subsection[Lipschitz Stability of the Top-K Margin]{Lipschitz Stability of the Top-\(K\) Margin}

\begin{lemma}[Top-\(K\) margin perturbation bound]
\label{lem:margin_lipschitz}
For every query \(x\in\mathcal Q\), every \(K\ge 1\), and every pair of unit
query directions \(q,q'\in\Sd\),
\begin{equation}
  |m_{K,x}(q)-m_{K,x}(q')|
  \le
  2\|q-q'\|
  =
  4\sin\!\left(\frac{\angle(q,q')}{2}\right)
  \le
  2\,\angle(q,q').
\end{equation}
\end{lemma}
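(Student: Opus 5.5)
The approach is to bound how much each building block of the margin $m_{K,x}(q)=s_x^+(q)-\tau^-_{K,x}(q)$ can move when $q$ is replaced by $q'$, and then combine the two pieces with the triangle inequality. The equality and the last inequality in Lemma~\ref{lem:margin_lipschitz} are then elementary facts about unit vectors.

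\textbf{Step 1 (individual scores).} For every gallery index $j$, Cauchy--Schwarz together with $\|g_j\|=1$ gives
\[
|s_j(q)-s_j(q')| = |\langle q-q',g_j\rangle| \le \|q-q'\| =: \delta .
\]

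\textbf{Step 2 (best relevant score).} Let $j^\ast$ attain the maximum defining $s_x^+(q)$. Then
\[
s_x^+(q) = s_{j^\ast}(q) \le s_{j^\ast}(q')+\delta \le s_x^+(q')+\delta .
\]
Exchanging the roles of $q$ and $q'$ gives $|s_x^+(q)-s_x^+(q')|\le\delta$.

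\textbf{Step 3 (the $K$-th negative score).} This is the only step needing slightly more care. It suffices to show that the $K$-th order statistic is $1$-Lipschitz in the sup-norm over the finite negative set $N(x)$, which has $|N(x)|\ge K$. Suppose every negative score satisfies $s_j(q')\ge s_j(q)-\delta$. Take the $K$ negatives achieving the top $K$ values under $q$. Each of them scores at least $\tau^-_{K,x}(q)-\delta$ under $q'$. Hence at least $K$ negatives are at or above that level under $q'$, which gives $\tau^-_{K,x}(q')\ge\tau^-_{K,x}(q)-\delta$. The reverse inequality follows by symmetry, so $|\tau^-_{K,x}(q)-\tau^-_{K,x}(q')|\le\delta$. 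Alternatively, one can use the characterization $\tau^-_K=\max_{|S|=K}\min_{j\in S}s_j$, a max-min of $1$-Lipschitz functions. The triangle inequality applied to Steps 2 and 3 then yields $|m_{K,x}(q)-m_{K,x}(q')|\le 2\delta$.

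\textbf{Step 4 (chord and arc length).} For unit vectors $q,q'$ with angle $\varphi=\angle(q,q')$,
\[
\|q-q'\|^2 = 2-2\cos\varphi = 4\sin^2(\varphi/2),
\]
so $\|q-q'\| = 2\sin(\varphi/2)$. Therefore $2\|q-q'\| = 4\sin(\varphi/2)$. Since $\sin t\le t$ for $t\ge0$, this is at most $2\varphi$, which completes the chain of bounds.
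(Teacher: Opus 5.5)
Your proof is correct and follows the paper's argument step for step. The steps are the per-score Cauchy--Schwarz bound, the $1$-Lipschitz property of the maximum over relevant items and of the $K$-th largest negative score, the triangle inequality, and the chord identity $\|q-q'\|=2\sin(\angle(q,q')/2)\le\angle(q,q')$; your Step 3 also gives an explicit justification of the order-statistic bound, which the paper only asserts.
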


\begin{proof}
For every gallery item \(g_j\in\Sd\),
\begin{equation}
  |s_j(q)-s_j(q')|
  =
  |\langle q-q',g_j\rangle|
  \le
  \|q-q'\|\,\|g_j\|
  =
  \|q-q'\|.
\end{equation}
Thus every score changes by at most \(\|q-q'\|\).

The maximum over relevant scores preserves this bound, so
\begin{equation}
  |s_x^+(q)-s_x^+(q')|
  \le
  \|q-q'\|.
\end{equation}
Similarly, if every negative score changes by at most \(\delta\), then the
\(K\)-th largest negative score changes by at most \(\delta\). Taking
\(\delta=\|q-q'\|\), we obtain
\begin{equation}
  |\tau^-_{K,x}(q)-\tau^-_{K,x}(q')|
  \le
  \|q-q'\|.
\end{equation}
Therefore
\begin{equation}
  |m_{K,x}(q)-m_{K,x}(q')|
  \le
  |s_x^+(q)-s_x^+(q')|
  +
  |\tau^-_{K,x}(q)-\tau^-_{K,x}(q')|
  \le
  2\|q-q'\|.
\end{equation}
Finally, for unit vectors,
\begin{equation}
  \|q-q'\|
  =
  2\sin\!\left(\frac{\angle(q,q')}{2}\right)
  \le
  \angle(q,q'),
\end{equation}
which proves the result.
\end{proof}

\subsection[Local Certification of Recall@K]{Local Certification of Recall@\(K\)}

\begin{corollary}[Local Recall@\(K\) certification]
\label{cor:recall_certification}
Let
\begin{equation}
  m^\star_{K,x}
  :=
  m_{K,x}(q^\star_K(x)).
\end{equation}
Assume \(m^\star_{K,x}>0\). Then \(0<m^\star_{K,x}\le 2\), and for any
SLERP query \(q_\alpha(x)\),
\begin{equation}
  G_{K,x}(\alpha)
  <
  2\arcsin\!\left(\frac{m^\star_{K,x}}{4}\right)
  \quad\Longrightarrow\quad
  \mathrm{Recall@}K(x;q_\alpha(x))=1.
\end{equation}
A simpler sufficient condition is
\begin{equation}
  G_{K,x}(\alpha)<\frac{m^\star_{K,x}}{2}.
\end{equation}
\end{corollary}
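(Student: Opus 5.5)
The plan is to combine Lemma~\ref{lem:margin_lipschitz} with Proposition~\ref{prop:margin_recall}, taking $q=q_\alpha(x)$ and $q'=q^\star_K(x)$.

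\textbf{Bound $0<m^\star_{K,x}\le 2$.} Positivity is the hypothesis. For the upper bound, note that every score $s_j(q)=\langle q,g_j\rangle$ lies in $[-1,1]$, since $q$ and $g_j$ are unit vectors (Cauchy--Schwarz). Hence $s_x^+(q)\le 1$ and $\tau^-_{K,x}(q)\ge -1$, so $m_{K,x}(q)\le 2$ for every $q$. In particular $m^\star_{K,x}\le 2$. This guarantees $m^\star_{K,x}/4\in(0,1/2]$, so the threshold $2\arcsin(m^\star_{K,x}/4)$ is well defined and positive.

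\textbf{Main certification.} Write $G=G_{K,x}(\alpha)=\angle(q_\alpha,q^\star_K)$. The lemma's chord form gives
\[
m_{K,x}(q_\alpha)\ge m^\star_{K,x}-4\sin(G/2).
\]
Assume $G<2\arcsin(m^\star_{K,x}/4)$. Then $G/2<\arcsin(m^\star_{K,x}/4)\le\pi/6$, and $\sin$ is strictly increasing on $[0,\pi/2]$. Therefore $4\sin(G/2)<m^\star_{K,x}$, which yields $m_{K,x}(q_\alpha)>0$. Proposition~\ref{prop:margin_recall} then gives $\mathrm{Recall@}K(x;q_\alpha)=1$.

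The only delicate step is the direction of monotonicity. It has to be applied to $\sin$ on $[0,\pi/2]$, not to $\arcsin$ in the wrong direction, and this is exactly why the bound $m^\star_{K,x}\le 2$ matters.

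\textbf{Simpler sufficient condition.} Use the weaker half of the lemma, $|m(q)-m(q')|\le 2\angle(q,q')$. If $G<m^\star_{K,x}/2$, then
\[
m_{K,x}(q_\alpha)\ge m^\star_{K,x}-2G>0,
\]
and the same proposition concludes. Alternatively, since $\sin t\le t$ gives $\arcsin y\ge y$, we have $m^\star_{K,x}/2\le 2\arcsin(m^\star_{K,x}/4)$. So this condition implies the sharper one and is indeed weaker.
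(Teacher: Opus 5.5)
Your proposal is correct and follows essentially the same route as the paper: bounding all cosine scores in $[-1,1]$ to get $m^\star_{K,x}\le 2$, applying the chord form of Lemma~\ref{lem:margin_lipschitz} with $q=q_\alpha(x)$ and $q'=q^\star_K(x)$ together with monotonicity of $\sin$ to obtain $m_{K,x}(q_\alpha(x))>0$, concluding via Proposition~\ref{prop:margin_recall}, and deriving the simpler condition from the coarser bound $2\,\angle(q,q')$. One small correction to your commentary: the monotonicity step does not depend on $m^\star_{K,x}\le 2$, because $G_{K,x}(\alpha)/2\in[0,\pi/2]$ and $\arcsin$ always takes values in $[-\pi/2,\pi/2]$; the bound only ensures that $\arcsin(m^\star_{K,x}/4)$ is defined, and $m^\star_{K,x}\le 4$ would already suffice for that.
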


\begin{proof}
Since all scores are cosine similarities, they lie in \([-1,1]\). Hence the
difference between any relevant score and any negative score is at most \(2\),
so
\begin{equation}
  0<m^\star_{K,x}\le 2.
\end{equation}
By Lemma~\ref{lem:margin_lipschitz}, applied with
\(q=q_\alpha(x)\) and \(q'=q^\star_K(x)\),
\begin{equation}
  m_{K,x}(q_\alpha(x))
  \ge
  m^\star_{K,x}
  -
  4\sin\!\left(
    \frac{\angle(q_\alpha(x),q^\star_K(x))}{2}
  \right).
\end{equation}
Using \(G_{K,x}(\alpha)=\angle(q_\alpha(x),q^\star_K(x))\), we get
\begin{equation}
  m_{K,x}(q_\alpha(x))
  \ge
  m^\star_{K,x}
  -
  4\sin\!\left(
    \frac{G_{K,x}(\alpha)}{2}
  \right).
\end{equation}
Therefore, if
\begin{equation}
  G_{K,x}(\alpha)
  <
  2\arcsin\!\left(\frac{m^\star_{K,x}}{4}\right),
\end{equation}
then
\begin{equation}
  4\sin\!\left(
    \frac{G_{K,x}(\alpha)}{2}
  \right)
  <
  m^\star_{K,x},
\end{equation}
and hence
\begin{equation}
  m_{K,x}(q_\alpha(x))>0.
\end{equation}
By Proposition~\ref{prop:margin_recall}, this implies
\begin{equation}
  \mathrm{Recall@}K(x;q_\alpha(x))=1.
\end{equation}

The simpler condition follows from the coarser inequality
\begin{equation}
  |m_{K,x}(q)-m_{K,x}(q')|
  \le
  2\,\angle(q,q').
\end{equation}
Thus
\begin{equation}
  m_{K,x}(q_\alpha(x))
  \ge
  m^\star_{K,x}-2G_{K,x}(\alpha),
\end{equation}
so \(G_{K,x}(\alpha)<m^\star_{K,x}/2\) also implies
\(m_{K,x}(q_\alpha(x))>0\).
\end{proof}

\section{Cross-Modal Zero-shot Classification}
\label{sec:classification}

We evaluate zero-shot classification as an auxiliary evaluation setting. 
Class text prototypes are encoded by the old-model text encoder using the template \texttt{``A photo of a <class\_name>''} and kept fixed. 
Image embeddings are obtained from the new model either directly via SVD alignment or by applying SLERP to the SVD-aligned embeddings, then compared against the fixed old-model prototypes.
Thus, only the image side changes, while the text-prototype side remains in the old-model space.

For classification, the Procrustes alignment is estimated on ImageNet-1K~\cite{deng2009imagenet} validation set using text-only support. 
The SLERP weight \(\hat{\alpha}\) is selected on ImageNet validation top-1 accuracy and then fixed for evaluation on Cars, Pets, Flowers, Aircraft, DTD, EuroSAT, Food101, SUN397, Caltech101, and UCF101. 
For SLERP, \(\alpha=0\) corresponds to old-model image embeddings, while \(\alpha=1\) corresponds to SVD-aligned new-model image embeddings. 
We also report \(\alpha^\star\), the dataset-specific oracle weight, as an upper-bound reference.

Tab.~\ref{tab:classification} shows that SVD alignment alone recovers a meaningful fraction of the old-model classifier’s accuracy, but remains lower on average. Applying SLERP to the SVD-aligned embeddings consistently improves this endpoint, often closing the gap to, or even surpassing, the old-model classifier. This trend mirrors the retrieval results: SVD provides a useful but approximate alignment between the two model representations, while SLERP yields a more effective representation in the old-model space.

\begin{table*}[t]
\centering
\small
\setlength{\tabcolsep}{5pt}
\renewcommand{\arraystretch}{1.06}
\caption{
Zero-shot classification accuracy (\%) with fixed old-model text prototypes.
For SLERP, the support-set-selected weight \(\hat{\alpha}\) is selected on the
ImageNet-1K validation set and fixed across target datasets, whereas
\(\alpha^\star\) is selected directly on each target dataset and is reported
only as an oracle upper bound.
Bold and underlined values denote the best and second-best classification
results, respectively, excluding the old- and new-model reference rows.
}
\label{tab:classification}

\begin{adjustbox}{width=\textwidth}
\begin{tabular}{l*{10}{c}c}
\toprule
Method
& \rotcol{Cars}
& \rotcol{Pets}
& \rotcol{Flowers}
& \rotcol{Aircraft}
& \rotcol{DTD}
& \rotcol{EuroSAT}
& \rotcol{Food101}
& \rotcol{SUN397}
& \rotcol{Caltech}
& \rotcol{UCF101}
& \rotcol{\textit{Average}} \\
\midrule

\pairrow
\multicolumn{12}{l}{
\textbf{CLIP ViT-L/14 $\rightarrow$ CLIP ViT-B/32}
\hspace{3pt}\textit{(same model family)}}\\

\oldrow
\textbf{\pairref Old model (CLIP ViT-B/32)}
& 60.18 & 87.46 & 66.50 & 18.96 & 44.15
& 45.21 & 80.42 & 62.06 & 91.36 & 63.57 & 61.99 \\

SVD
& 25.59 & 87.05 & 34.51 & 14.28 & 40.19
& 40.78 & 72.20 & 54.08 & 90.22 & 59.95 & 51.89 \\

\slerprow
\hspace{5pt}+SLERP ($\hat{\alpha}$)
& \underline{55.35}
& \underline{89.62}
& \underline{60.45}
& \underline{20.07}
& \underline{47.28}
& \underline{53.16}
& \underline{85.05}
& \underline{64.51}
& \underline{93.18}
& \underline{67.04}
& \underline{63.57} \\

\slerprow
\hspace{5pt}+SLERP ($\alpha^\star$)
& \textbf{62.23}
& \textbf{90.05}
& \textbf{67.97}
& \textbf{20.94}
& \textbf{47.34}
& \textbf{53.88}
& \textbf{85.44}
& \textbf{65.29}
& \textbf{93.39}
& \textbf{67.27}
& \textbf{65.38} \\

\newrow
\textbf{\pairref New model (CLIP ViT-L/14)}
& 76.91 & 93.46 & 79.46 & 32.64 & 53.01
& 60.28 & 90.91 & 67.68 & 95.17 & 74.97 & 72.45 \\

\midrule

\pairrow
\multicolumn{12}{l}{
\textbf{SigLIP2 $\rightarrow$ SigLIP1}
\hspace{3pt}\textit{(same model family)}}\\

\oldrow
\textbf{\pairref Old model (SigLIP1)}
& 88.30 & 95.28 & 91.72 & 59.89 & 62.17
& 60.36 & 93.65 & 75.58 & 98.38 & 81.79 & 80.71 \\

SVD
& 31.70 & 85.47 & 69.43 & 16.17 & 45.15
& 42.31 & \underline{84.00} & 64.73 & 97.48 & 71.45 & 60.79 \\

\slerprow
\hspace{5pt}+SLERP ($\hat{\alpha}$)
& \underline{87.27}
& \underline{94.99}
& \underline{90.17}
& \underline{58.93}
& \underline{61.05}
& \underline{58.05}
& \textbf{93.79}
& \underline{75.80}
& \underline{98.34}
& \underline{81.15}
& \underline{79.95} \\

\slerprow
\hspace{5pt}+SLERP ($\alpha^\star$)
& \textbf{88.30}
& \textbf{95.39}
& \textbf{91.72}
& \textbf{60.34}
& \textbf{62.17}
& \textbf{60.36}
& \textbf{93.79}
& \textbf{76.04}
& \textbf{98.54}
& \textbf{81.79}
& \textbf{80.84} \\

\newrow
\textbf{\pairref New model (SigLIP2)}
& 95.37 & 96.08 & 89.36 & 71.59 & 65.78
& 50.68 & 94.21 & 75.59 & 98.46 & 81.79 & 81.89 \\

\midrule

\pairrow
\multicolumn{12}{l}{
\textbf{SigLIP1 $\rightarrow$ CLIP ViT-H/14}
\hspace{3pt}\textit{(cross-family; new model not uniformly stronger than old model)}}\\

\oldrow
\textbf{\pairref Old model (CLIP ViT-H/14)}
& 93.12 & 94.55 & 80.63 & 42.42 & 62.77
& 68.51 & 90.30 & 75.09 & 97.24 & 78.27 & 78.29 \\

SVD
& 41.08 & 92.83 & 49.33 & 16.02 & 43.32
& 48.84 & 81.64 & 61.82 & \underline{96.96} & 64.71 & 59.66 \\

\slerprow
\hspace{5pt}+SLERP ($\hat{\alpha}$)
& \underline{90.55}
& \underline{94.74}
& \underline{75.03}
& \underline{37.44}
& \underline{59.99}
& \underline{66.69}
& \underline{90.73}
& \underline{73.65}
& \textbf{98.42}
& \underline{77.53}
& \underline{76.48} \\

\slerprow
\hspace{5pt}+SLERP ($\alpha^\star$)
& \textbf{93.15}
& \textbf{95.07}
& \textbf{80.63}
& \textbf{43.44}
& \textbf{63.24}
& \textbf{69.72}
& \textbf{91.11}
& \textbf{75.51}
& \textbf{98.42}
& \textbf{79.25}
& \textbf{78.95} \\

\newrow
\textbf{\pairref New model (SigLIP1)}
& 88.30 & 95.28 & 91.72 & 59.89 & 62.17
& 60.36 & 93.65 & 75.58 & 98.38 & 81.79 & 80.71 \\

\bottomrule
\end{tabular}
\end{adjustbox}
\end{table*}

\section{Additional SLERP Arc Results}
\label{sec:app_slerp_arcs}

Figures~\ref{fig:i2t-r1-three-datasets} and~\ref{fig:i2t-r1-three-datasets-siglip1-h14} extend the SLERP analysis to image-to-text Recall@1 on Flickr30k, COCO, and NoCaps. 
They complement the text-to-image curves (see Fig.~\ref{fig:t2i_slerp_curves}) in the main paper and use the same convention: \(\alpha=0\) is the old-model query and \(\alpha=1\) is the SVD-aligned new-model query.

\begin{figure*}[t]
    \centering
    \begin{subfigure}[b]{0.7\linewidth}
        \centering
        \includegraphics[width=\linewidth]{images/selected_slerp_ablation/legend.pdf}
    \end{subfigure}
    \hspace{-10pt}
    \begin{subfigure}[b]{0.325\textwidth}
        \centering
        \includegraphics[width=1.\textwidth]{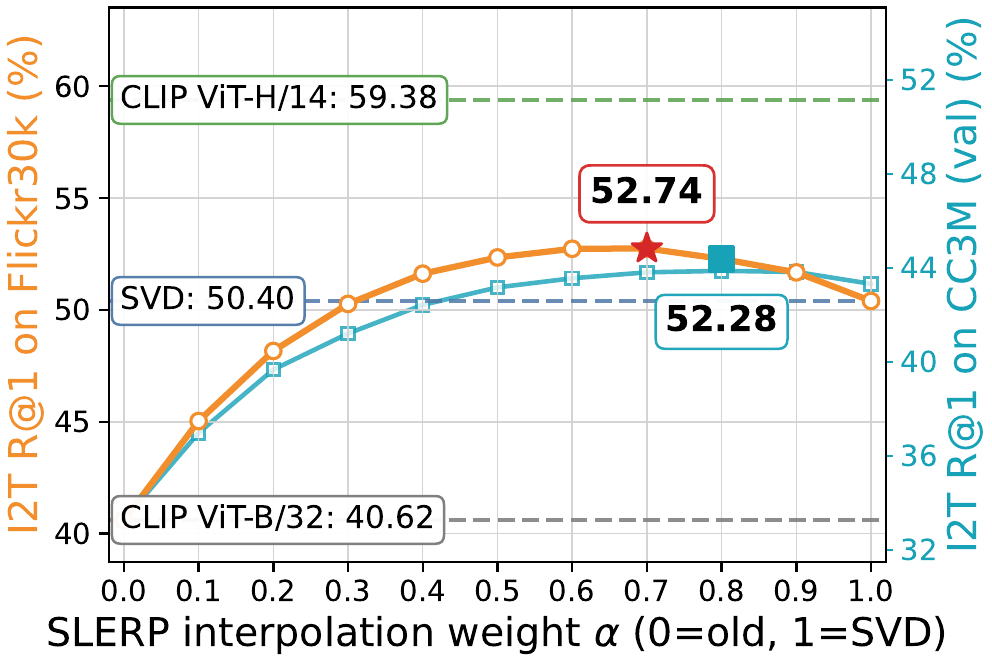}
        \caption{Flickr30k}
        \label{fig:i2t-r1-flickr}
    \end{subfigure}
    \begin{subfigure}[b]{0.325\textwidth}
        \centering
        \includegraphics[width=1.\textwidth]{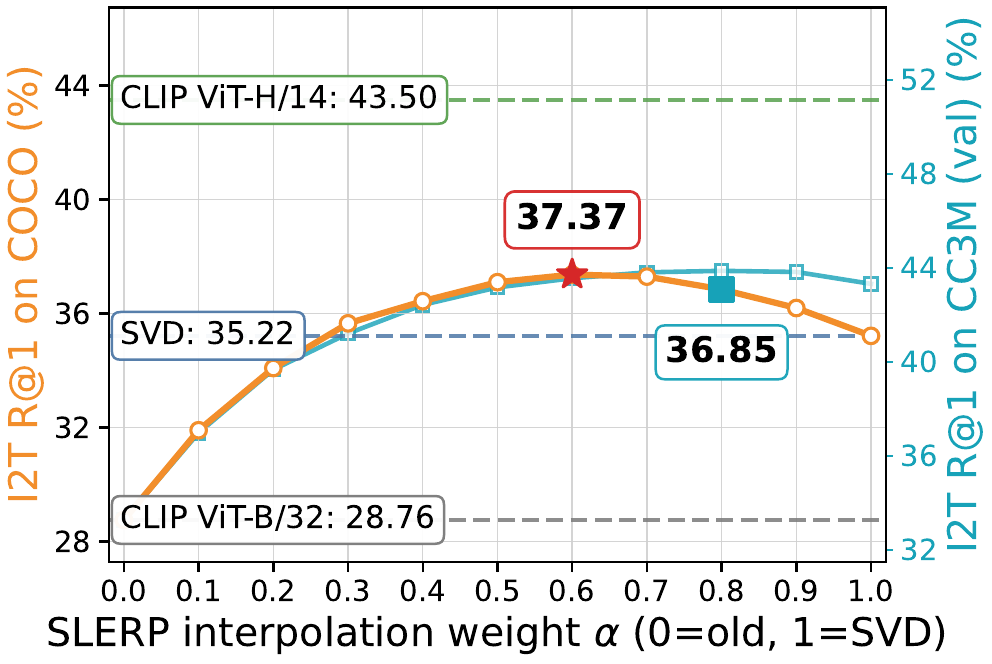}
        \caption{COCO}
        \label{fig:i2t-r1-coco}
    \end{subfigure}
    \begin{subfigure}[b]{0.325\textwidth}
        \centering
        \includegraphics[width=1.\textwidth]{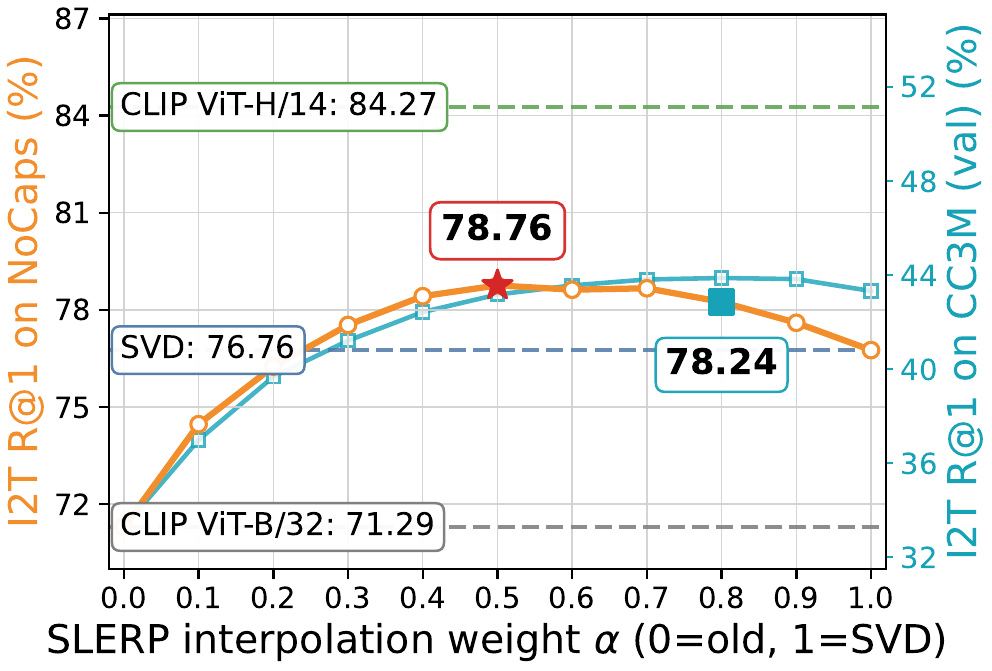}
        \caption{NoCaps}
        \label{fig:i2t-r1-nocaps}
    \end{subfigure}
    \caption{
    \textbf{Image-to-text Recall@1 along the SLERP path for CLIP ViT-H/14 \(\rightarrow\) CLIP ViT-B/32.}
    The text gallery is encoded by the old model, while image queries interpolate between the old-model query and the SVD-aligned new-model query.
    The case \(\alpha=0\) corresponds to old-model queries, while \(\alpha=1\) corresponds to SVD-aligned new-model queries.
    Markers show the CC3M-selected \(\hat{\alpha}\) and the dataset-specific test oracle \(\alpha^\star\); dashed lines show old-model, SVD, and new-model reference performance.
    }
    \label{fig:i2t-r1-three-datasets}
\end{figure*}

\begin{figure*}[t]
    \centering
    \begin{subfigure}[b]{0.7\linewidth}
        \centering
        \includegraphics[width=\linewidth]{images/selected_slerp_ablation/legend.pdf}
    \end{subfigure}
    \hspace{-10pt}
    \begin{subfigure}[b]{0.325\textwidth}
        \centering
        \includegraphics[width=1.\textwidth]{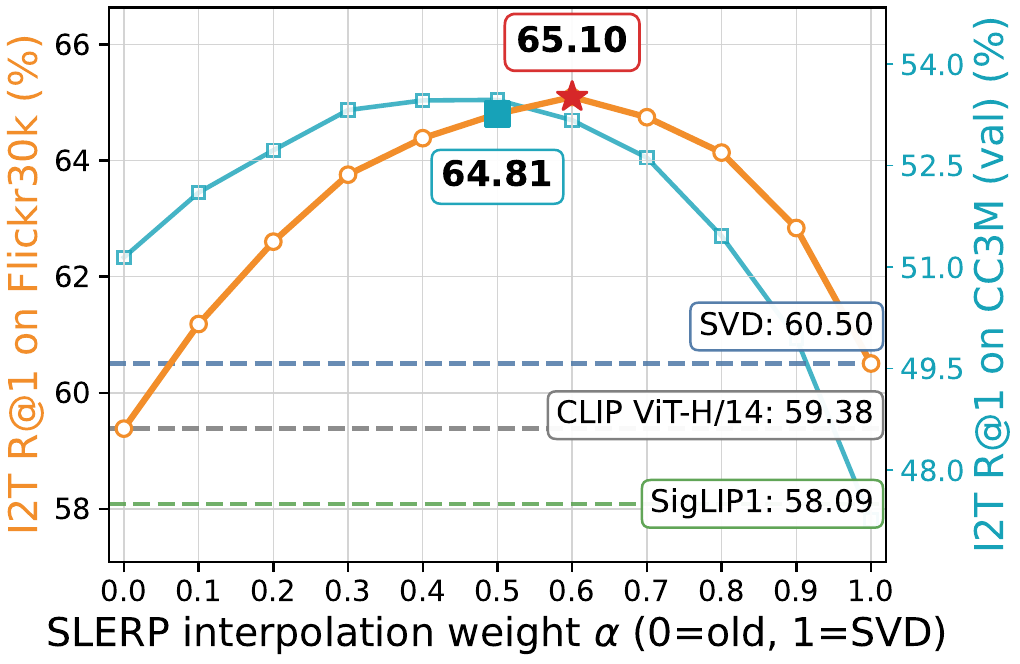}
        \caption{Flickr30k}
        \label{fig:i2t-r1-flickr-siglip1-h14}
    \end{subfigure}
    \begin{subfigure}[b]{0.325\textwidth}
        \centering
        \includegraphics[width=1.\textwidth]{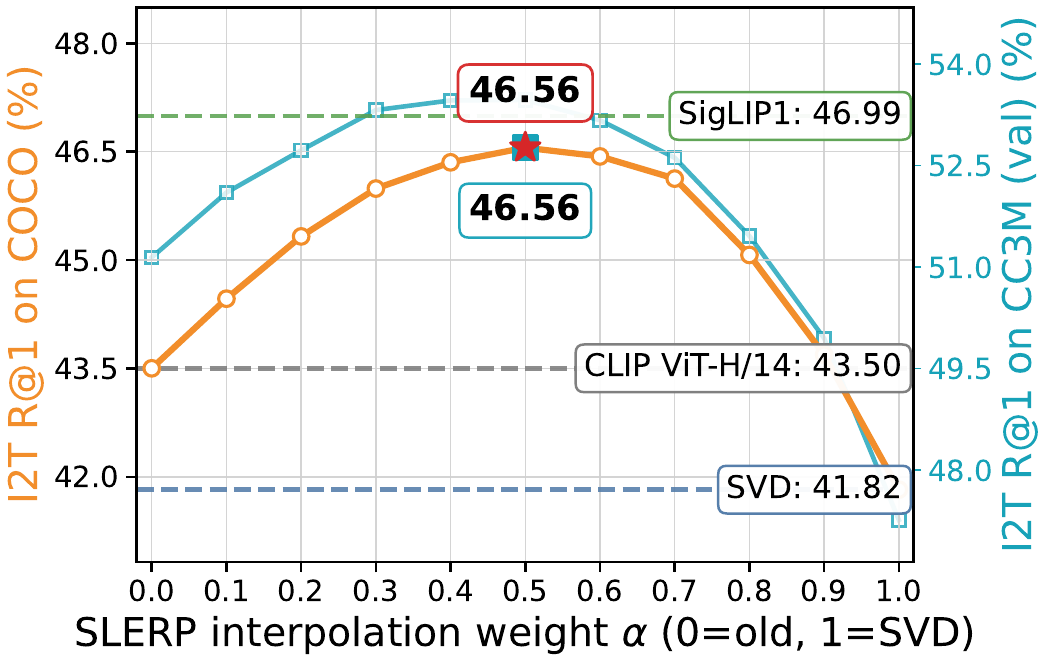}
        \caption{COCO}
        \label{fig:i2t-r1-coco-siglip1-h14}
    \end{subfigure}
    \begin{subfigure}[b]{0.325\textwidth}
        \centering
        \includegraphics[width=1.\textwidth]{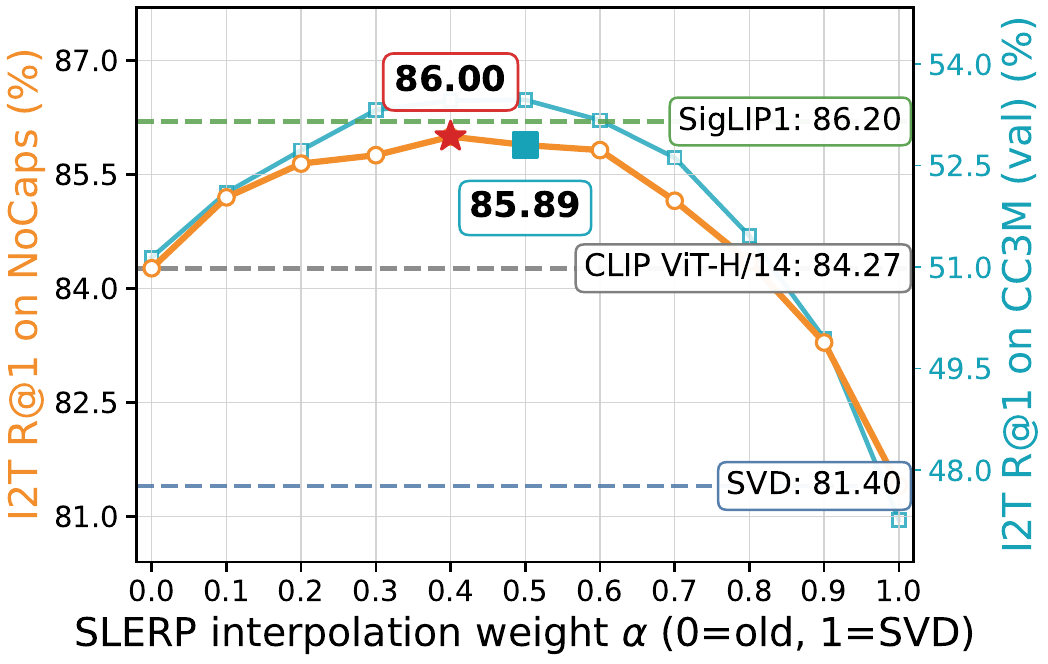}
        \caption{NoCaps}
        \label{fig:i2t-r1-nocaps-siglip1-h14}
    \end{subfigure}
    \caption{
    \textbf{Image-to-text Recall@1 along the SLERP path for SigLIP1 \(\rightarrow\) CLIP ViT-H/14.}
    The text gallery is encoded by the old model, while image queries interpolate between the old-model query and the SVD-aligned new-model query.
    The case \(\alpha=0\) corresponds to old-model queries, while \(\alpha=1\) corresponds to SVD-aligned new-model queries.
    Markers show the CC3M-selected \(\hat{\alpha}\) and the dataset-specific test oracle \(\alpha^\star\); dashed lines show old-model, SVD, and new-model reference performance.
    }
    \label{fig:i2t-r1-three-datasets-siglip1-h14}
\end{figure*}

The curves show the same behavior observed in Sec.~\ref{sec:results}. 
For both CLIP ViT-H/14 \(\rightarrow\) CLIP ViT-B/32 and SigLIP1 \(\rightarrow\) CLIP ViT-H/14, the best test performance is typically reached at an interior interpolation weight rather than at either endpoint. 
The CC3M validation set curve also follows the test-set curve closely, and the selected \(\hat{\alpha}\) is near the dataset-specific oracle \(\alpha^\star\). 
Thus, the benefit of SLERP is not restricted to text-to-image retrieval: the same support-set selected interpolation also transfers to image-to-text retrieval across datasets and model families.

\begin{table*}[p]
\caption{
\textbf{Cross-modal backward-compatible retrieval on Flickr30k.}
The \(\alpha\) column reports the interpolation weights for I2T/T2I retrieval.
The Sup.\ column indicates the support modality used to estimate the Procrustes map: text-only (T), image-only (I), or joint image--text (I+T).
The support-set-selected weight \(\hat{\alpha}\) is selected on CC3M, whereas
\(\alpha^\star\) is selected directly on Flickr30k and is reported only as an
oracle upper bound.
Checkmarks indicate backward compatibility; bold and underlined values denote
the best and second-best \(\mathrm{new}\!\to\!\mathrm{old}\) results, respectively.
}
\label{tab:flickr30k_no_reindex}
\centering
\footnotesize
\begin{adjustbox}{width=0.95\textwidth, max totalheight=0.88\textheight}
\begin{tabular}{lccccc ccc}
\toprule
& & & \multicolumn{3}{c}{I2T} & \multicolumn{3}{c}{T2I} \\
\cmidrule(lr){4-6}
\cmidrule(lr){7-9}
Method & Sup. & $\alpha$
& R@1 & R@5 & R@10
& R@1 & R@5 & R@10 \\
\midrule

\pairrow
\multicolumn{9}{l}{
\textbf{CLIP ViT-L/14 $\rightarrow$ CLIP ViT-B/32}
\hspace{3pt}\textit{(same model family)}}\\

\oldrow
\textbf{\pairref Old model (CLIP ViT-B/32)} & -- & --
& \plainaccur{40.62} & \plainaccur{64.75} & \plainaccur{73.72}
& \plainaccur{21.73} & \plainaccur{41.56} & \plainaccur{51.10} \\

XBT~\cite{jang2025towards} & -- & --
& 42.47\compyes & 66.04\compyes & 74.87\compyes
& 22.38\compyes & 42.71\compyes & 52.41\compyes \\

SVD & T & --
& 42.89\compyes & 66.79\compyes & 75.77\compyes
& 21.49\compno & 41.35\compno & 50.92\compno \\

\slerprow
\hspace{5pt}+SLERP ($\hat{\alpha}$) & T & 0.7/0.5
& \underline{48.00}\compyes & 71.84\compyes & \underline{80.34}\compyes
& \underline{23.33}\compyes & 43.77\compyes & 53.29\compyes \\

\slerprow
\hspace{5pt}+SLERP ($\alpha^\star$) & T & 0.5/0.5
& \textbf{48.61}\compyes & \textbf{72.64}\compyes & \textbf{80.83}\compyes
& \underline{23.33}\compyes & 43.77\compyes & 53.29\compyes \\

SVD & I & --
& 41.17\compyes & 65.64\compyes & 74.91\compyes
& 19.99\compno & 39.25\compno & 48.75\compno \\

\slerprow
\hspace{5pt}+SLERP ($\hat{\alpha}$) & I & 0.7/0.4
& 46.30\compyes & 70.62\compyes & 79.18\compyes
& 23.00\compyes & 43.38\compyes & 53.01\compyes \\

\slerprow
\hspace{5pt}+SLERP ($\alpha^\star$) & I & 0.5/0.4
& 47.39\compyes & 71.51\compyes & 79.93\compyes
& 23.00\compyes & 43.38\compyes & 53.01\compyes \\

SVD & I+T & --
& 42.44\compyes & 66.22\compyes & 75.21\compyes
& 21.80\compyes & 41.86\compyes & 51.52\compyes \\

\slerprow
\hspace{5pt}+SLERP ($\hat{\alpha}$) & I+T & 0.7/0.4
& 47.09\compyes & 70.77\compyes & 79.64\compyes
& \textbf{23.44}\compyes & \underline{43.99}\compyes & \underline{53.53}\compyes \\

\slerprow
\hspace{5pt}+SLERP ($\alpha^\star$) & I+T & 0.5/0.5
& 47.77\compyes & \underline{72.01}\compyes & 80.25\compyes
& \textbf{23.44}\compyes & \textbf{44.08}\compyes & \textbf{53.61}\compyes \\

\newrow
\textbf{\pairref New model (CLIP ViT-L/14)} & -- & --
& \plainaccur{48.72} & \plainaccur{72.93} & \plainaccur{81.22}
& \plainaccur{28.27} & \plainaccur{49.63} & \plainaccur{59.09} \\

\midrule

\pairrow
\multicolumn{9}{l}{
\textbf{CLIP ViT-H/14 $\rightarrow$ CLIP ViT-B/32}
\hspace{3pt}\textit{(same model family)}}\\

\oldrow
\textbf{\pairref Old model (CLIP ViT-B/32)} & -- & --
& \plainaccur{40.62} & \plainaccur{64.75} & \plainaccur{73.72}
& \plainaccur{21.73} & \plainaccur{41.56} & \plainaccur{51.10} \\

XBT~\cite{jang2025towards} & -- & --
& 41.09\compyes & 65.18\compyes & 74.32\compyes
& 23.70\compyes & 44.47\compyes & 54.19\compyes \\

SVD & T & --
& 50.40\compyes & 74.33\compyes & 82.23\compyes
& 25.45\compyes & 46.81\compyes & 56.45\compyes \\

\slerprow
\hspace{5pt}+SLERP ($\hat{\alpha}$) & T & 0.8/0.6
& \underline{52.28}\compyes & \underline{76.03}\compyes & \underline{83.58}\compyes
& 26.21\compyes & 47.74\compyes & 57.40\compyes \\

\slerprow
\hspace{5pt}+SLERP ($\alpha^\star$) & T & 0.7/0.7
& \textbf{52.74}\compyes & \textbf{76.28}\compyes & \textbf{83.90}\compyes
& 26.31\compyes & 47.77\compyes & 57.43\compyes \\

SVD & I & --
& 48.47\compyes & 72.18\compyes & 80.56\compyes
& 26.49\compyes & 47.90\compyes & 57.51\compyes \\

\slerprow
\hspace{5pt}+SLERP ($\hat{\alpha}$) & I & 0.6/0.6
& 50.95\compyes & 74.55\compyes & 82.39\compyes
& \textbf{28.34}\compyes & \textbf{50.15}\compyes & \textbf{59.69}\compyes \\

\slerprow
\hspace{5pt}+SLERP ($\alpha^\star$) & I & 0.6/0.6
& 50.95\compyes & 74.55\compyes & 82.39\compyes
& \textbf{28.34}\compyes & \textbf{50.15}\compyes & \textbf{59.69}\compyes \\

SVD & I+T & --
& 50.06\compyes & 73.42\compyes & 81.36\compyes
& 26.58\compyes & 48.00\compyes & 57.63\compyes \\

\slerprow
\hspace{5pt}+SLERP ($\hat{\alpha}$) & I+T & 0.5/0.7
& 52.03\compyes & 75.14\compyes & 83.02\compyes
& \underline{27.43}\compyes & \underline{49.14}\compyes & \underline{58.78}\compyes \\

\slerprow
\hspace{5pt}+SLERP ($\alpha^\star$) & I+T & 0.6/0.7
& 52.19\compyes & 75.44\compyes & 83.11\compyes
& \underline{27.43}\compyes & \underline{49.14}\compyes & \underline{58.78}\compyes \\

\newrow
\textbf{\pairref New model (CLIP ViT-H/14)} & -- & --
& \plainaccur{59.38} & \plainaccur{82.49} & \plainaccur{88.97}
& \plainaccur{43.07} & \plainaccur{65.99} & \plainaccur{74.31} \\

\midrule

\pairrow
\multicolumn{9}{l}{
\textbf{SigLIP2 $\rightarrow$ SigLIP1}
\hspace{3pt}\textit{(same model family)}}\\

\oldrow
\textbf{\pairref Old model (SigLIP1)} & -- & --
& \plainaccur{58.09} & \plainaccur{81.69} & \plainaccur{88.63}
& \plainaccur{39.40} & \plainaccur{62.46} & \plainaccur{71.13} \\

SVD & T & --
& 45.46\compno & 71.93\compno & 81.06\compno
& 46.28\compyes & 68.99\compyes & 76.97\compyes \\

\slerprow
\hspace{5pt}+SLERP ($\hat{\alpha}$) & T & 0.3/0.4
& \underline{58.63}\compyes & \underline{82.35}\compyes & 88.99\compyes
& 45.99\compyes & 69.07\compyes & 77.16\compyes \\

\slerprow
\hspace{5pt}+SLERP ($\alpha^\star$) & T & 0.2/0.7
& \textbf{58.70}\compyes & \textbf{82.40}\compyes & \textbf{89.07}\compyes
& \textbf{47.59}\compyes & \underline{70.25}\compyes & \underline{77.99}\compyes \\

SVD & I & --
& 50.66\compno & 76.19\compno & 84.43\compno
& 41.78\compyes & 65.03\compyes & 73.61\compyes \\

\slerprow
\hspace{5pt}+SLERP ($\hat{\alpha}$) & I & 0.2/0.3
& 58.59\compyes & 82.26\compyes & 88.96\compyes
& 43.84\compyes & 67.27\compyes & 75.66\compyes \\

\slerprow
\hspace{5pt}+SLERP ($\alpha^\star$) & I & 0.3/0.6
& 58.61\compyes & 82.28\compyes & \underline{89.01}\compyes
& 45.12\compyes & 68.41\compyes & 76.66\compyes \\

SVD & I+T & --
& 50.19\compno & 76.22\compno & 84.43\compno
& 46.39\compyes & 69.18\compyes & 77.19\compyes \\

\slerprow
\hspace{5pt}+SLERP ($\hat{\alpha}$) & I+T & 0.2/0.5
& 58.62\compyes & 82.18\compyes & 88.96\compyes
& 46.87\compyes & 69.87\compyes & 77.83\compyes \\

\slerprow
\hspace{5pt}+SLERP ($\alpha^\star$) & I+T & 0.2/0.7
& 58.62\compyes & 82.18\compyes & 88.96\compyes
& \underline{47.58}\compyes & \textbf{70.36}\compyes & \textbf{78.19}\compyes \\

\newrow
\textbf{\pairref New model (SigLIP2)} & -- & --
& \plainaccur{69.31} & \plainaccur{88.50} & \plainaccur{93.26}
& \plainaccur{51.29} & \plainaccur{72.97} & \plainaccur{80.20} \\

\midrule

\pairrow
\multicolumn{9}{l}{
\textbf{SigLIP2 $\rightarrow$ CLIP ViT-B/32}
\hspace{3pt}\textit{(cross-family)}}\\

\oldrow
\textbf{\pairref Old model (CLIP ViT-B/32)} & -- & --
& \plainaccur{40.62} & \plainaccur{64.75} & \plainaccur{73.72}
& \plainaccur{21.73} & \plainaccur{41.56} & \plainaccur{51.10} \\

SVD & T & --
& 42.80\compyes & 67.79\compyes & 77.23\compyes
& 23.56\compyes & 43.89\compyes & 53.57\compyes \\

\slerprow
\hspace{5pt}+SLERP ($\hat{\alpha}$) & T & 0.6/0.5
& 51.01\compyes & \underline{75.34}\compyes & \textbf{83.47}\compyes
& 25.79\compyes & 47.20\compyes & 56.75\compyes \\

\slerprow
\hspace{5pt}+SLERP ($\alpha^\star$) & T & 0.5/0.6
& \underline{51.34}\compyes & \textbf{75.43}\compyes & \textbf{83.47}\compyes
& 25.83\compyes & 47.21\compyes & 56.78\compyes \\

SVD & I & --
& 48.01\compyes & 71.85\compyes & 79.99\compyes
& 23.41\compyes & 44.39\compyes & 54.10\compyes \\

\slerprow
\hspace{5pt}+SLERP ($\hat{\alpha}$) & I & 0.7/0.4
& 51.33\compyes & 74.99\compyes & 82.73\compyes
& \underline{28.57}\compyes & \underline{50.66}\compyes & \underline{60.29}\compyes \\

\slerprow
\hspace{5pt}+SLERP ($\alpha^\star$) & I & 0.6/0.5
& \textbf{51.40}\compyes & 75.08\compyes & \underline{82.89}\compyes
& \textbf{28.88}\compyes & \textbf{50.99}\compyes & \textbf{60.70}\compyes \\

SVD & I+T & --
& 36.25\compno & 57.82\compno & 66.80\compno
& 23.93\compyes & 45.06\compyes & 54.87\compyes \\

\slerprow
\hspace{5pt}+SLERP ($\hat{\alpha}$) & I+T & 0.4/0.5
& 47.97\compyes & 71.34\compyes & 79.83\compyes
& 27.09\compyes & 48.73\compyes & 58.48\compyes \\

\slerprow
\hspace{5pt}+SLERP ($\alpha^\star$) & I+T & 0.4/0.6
& 47.97\compyes & 71.34\compyes & 79.83\compyes
& 27.10\compyes & 48.80\compyes & 58.50\compyes \\

\newrow
\textbf{\pairref New model (SigLIP2)} & -- & --
& \plainaccur{69.31} & \plainaccur{88.50} & \plainaccur{93.26}
& \plainaccur{51.29} & \plainaccur{72.97} & \plainaccur{80.20} \\

\midrule

\pairrow
\multicolumn{9}{l}{
\textbf{SigLIP1 $\rightarrow$ CLIP ViT-H/14}
\hspace{3pt}\textit{(cross-family; new model not uniformly stronger than old model)}}\\

\oldrow
\textbf{\pairref Old model (CLIP ViT-H/14)} & -- & --
& \plainaccur{59.38} & \plainaccur{82.49} & \plainaccur{88.97}
& \plainaccur{43.07} & \plainaccur{65.99} & \plainaccur{74.31} \\

SVD & T & --
& 60.50\compyes & 82.90\compyes & 89.17\compyes
& 29.91\compno & 52.61\compno & 62.07\compno \\

\slerprow
\hspace{5pt}+SLERP ($\hat{\alpha}$) & T & 0.5/0.1
& \underline{64.81}\compyes & \underline{86.50}\compyes & \underline{91.84}\compyes
& 43.38\compyes & 66.34\compyes & 74.56\compyes \\

\slerprow
\hspace{5pt}+SLERP ($\alpha^\star$) & T & 0.6/0.2
& \textbf{65.10}\compyes & \textbf{86.60}\compyes & \textbf{91.88}\compyes
& \underline{43.44}\compyes & \underline{66.40}\compyes & 74.69\compyes \\

SVD & I & --
& 48.93\compno & 75.55\compno & 84.23\compno
& 29.13\compno & 51.81\compno & 61.25\compno \\

\slerprow
\hspace{5pt}+SLERP ($\hat{\alpha}$) & I & 0.0/0.3
& 59.38\compno & 82.49\compno & 88.97\compno
& 43.25\compyes & 66.22\compyes & 74.63\compyes \\

\slerprow
\hspace{5pt}+SLERP ($\alpha^\star$) & I & 0.4/0.2
& 61.19\compyes & 84.16\compyes & 90.51\compyes
& \textbf{43.50}\compyes & \underline{66.40}\compyes & \underline{74.72}\compyes \\

SVD & I+T & --
& 58.54\compno & 81.21\compno & 87.88\compno
& 30.03\compno & 52.66\compno & 62.24\compno \\

\slerprow
\hspace{5pt}+SLERP ($\hat{\alpha}$) & I+T & 0.1/0.1
& 61.09\compyes & 83.80\compyes & 89.93\compyes
& 43.36\compyes & 66.34\compyes & 74.60\compyes \\

\slerprow
\hspace{5pt}+SLERP ($\alpha^\star$) & I+T & 0.6/0.2
& 64.74\compyes & 86.08\compyes & 91.55\compyes
& 43.43\compyes & \textbf{66.41}\compyes & \textbf{74.76}\compyes \\

\newrow
\textbf{\pairref New model (SigLIP1)} & -- & --
& \plainaccur{58.09} & \plainaccur{81.69} & \plainaccur{88.63}
& \plainaccur{39.40} & \plainaccur{62.46} & \plainaccur{71.13} \\

\bottomrule
\end{tabular}
\end{adjustbox}
\end{table*}

\begin{table*}[p]
\caption{
\textbf{Cross-modal backward-compatible retrieval on COCO.}
The \(\alpha\) column reports the interpolation weights for I2T/T2I retrieval.
The Sup.\ column indicates the support modality used to estimate the Procrustes map: text-only (T), image-only (I), or joint image--text (I+T).
The support-set-selected weight \(\hat{\alpha}\) is selected on CC3M, whereas
\(\alpha^\star\) is selected directly on COCO and is reported only as an
oracle upper bound.
Checkmarks indicate backward compatibility; bold and underlined values denote
the best and second-best \(\mathrm{new}\!\to\!\mathrm{old}\) results, respectively.
}
\label{tab:coco_no_reindex}
\centering
\footnotesize
\begin{adjustbox}{width=0.95\textwidth, max totalheight=0.88\textheight}
\begin{tabular}{lccccc ccc}
\toprule
& & & \multicolumn{3}{c}{I2T} & \multicolumn{3}{c}{T2I} \\
\cmidrule(lr){4-6}
\cmidrule(lr){7-9}
Method & Sup. & $\alpha$
& R@1 & R@5 & R@10
& R@1 & R@5 & R@10 \\
\midrule

\pairrow
\multicolumn{9}{l}{
\textbf{CLIP ViT-L/14 $\rightarrow$ CLIP ViT-B/32}
\hspace{3pt}\textit{(same model family)}}\\

\oldrow
\textbf{\pairref Old model (CLIP ViT-B/32)} & -- & --
& \plainaccur{28.76} & \plainaccur{50.34} & \plainaccur{59.86}
& \plainaccur{14.47} & \plainaccur{30.22} & \plainaccur{38.92} \\

XBT~\cite{jang2025towards} & -- & --
& 30.73\compyes & 52.58\compyes & 62.30\compyes
& \textbf{15.55}\compyes & \textbf{32.27}\compyes & \textbf{41.34}\compyes \\

SVD & T & --
& 30.27\compyes & 51.35\compyes & 61.07\compyes
& 14.03\compno & 29.71\compno & 38.33\compno \\

\slerprow
\hspace{5pt}+SLERP ($\hat{\alpha}$) & T & 0.7/0.5
& \underline{33.40}\compyes & \underline{55.44}\compyes & \underline{64.80}\compyes
& 15.26\compyes & 31.57\compyes & 40.34\compyes \\

\slerprow
\hspace{5pt}+SLERP ($\alpha^\star$) & T & 0.5/0.4
& \textbf{33.84}\compyes & \textbf{55.84}\compyes & \textbf{65.12}\compyes
& 15.27\compyes & 31.56\compyes & 40.36\compyes \\

SVD & I & --
& 28.71\compno & 49.27\compno & 58.58\compno
& 13.11\compno & 28.44\compno & 36.80\compno \\

\slerprow
\hspace{5pt}+SLERP ($\hat{\alpha}$) & I & 0.7/0.4
& 31.77\compyes & 53.46\compyes & 62.90\compyes
& 15.14\compyes & 31.39\compyes & 40.21\compyes \\

\slerprow
\hspace{5pt}+SLERP ($\alpha^\star$) & I & 0.5/0.3
& 32.51\compyes & 54.42\compyes & 63.69\compyes
& 15.15\compyes & 31.42\compyes & 40.19\compyes \\

SVD & I+T & --
& 29.95\compyes & 50.99\compyes & 60.34\compyes
& 14.16\compno & 30.17\compno & 38.75\compno \\

\slerprow
\hspace{5pt}+SLERP ($\hat{\alpha}$) & I+T & 0.7/0.4
& 33.05\compyes & 54.79\compyes & 63.98\compyes
& 15.30\compyes & 31.70\compyes & 40.57\compyes \\

\slerprow
\hspace{5pt}+SLERP ($\alpha^\star$) & I+T & 0.6/0.5
& 33.32\compyes & 55.25\compyes & 64.42\compyes
& \underline{15.35}\compyes & \underline{31.74}\compyes & \underline{40.61}\compyes \\

\newrow
\textbf{\pairref New model (CLIP ViT-L/14)} & -- & --
& \plainaccur{34.33} & \plainaccur{56.09} & \plainaccur{65.38}
& \plainaccur{18.68} & \plainaccur{35.85} & \plainaccur{44.40} \\

\midrule

\pairrow
\multicolumn{9}{l}{
\textbf{CLIP ViT-H/14 $\rightarrow$ CLIP ViT-B/32}
\hspace{3pt}\textit{(same model family)}}\\

\oldrow
\textbf{\pairref Old model (CLIP ViT-B/32)} & -- & --
& \plainaccur{28.76} & \plainaccur{50.34} & \plainaccur{59.86}
& \plainaccur{14.47} & \plainaccur{30.22} & \plainaccur{38.92} \\

XBT~\cite{jang2025towards} & -- & --
& 30.33\compyes & 51.28\compyes & 59.97\compyes
& 15.82\compyes & 32.79\compyes & 41.84\compyes \\

SVD & T & --
& 35.22\compyes & 57.40\compyes & 66.69\compyes
& 17.01\compyes & 34.49\compyes & 43.61\compyes \\

\slerprow
\hspace{5pt}+SLERP ($\hat{\alpha}$) & T & 0.8/0.6
& 36.85\compyes & 59.18\compyes & 68.52\compyes
& 17.37\compyes & 34.96\compyes & 44.17\compyes \\

\slerprow
\hspace{5pt}+SLERP ($\alpha^\star$) & T & 0.6/0.7
& \textbf{37.37}\compyes & \textbf{59.75}\compyes & \textbf{68.85}\compyes
& 17.38\compyes & 35.03\compyes & 44.27\compyes \\

SVD & I & --
& 33.67\compyes & 55.54\compyes & 64.91\compyes
& 17.13\compyes & 34.66\compyes & 43.84\compyes \\

\slerprow
\hspace{5pt}+SLERP ($\hat{\alpha}$) & I & 0.6/0.6
& 35.33\compyes & 57.91\compyes & 66.99\compyes
& \textbf{18.35}\compyes & \textbf{36.51}\compyes & \textbf{45.73}\compyes \\

\slerprow
\hspace{5pt}+SLERP ($\alpha^\star$) & I & 0.6/0.6
& 35.33\compyes & 57.91\compyes & 66.99\compyes
& \textbf{18.35}\compyes & \textbf{36.51}\compyes & \textbf{45.73}\compyes \\

SVD & I+T & --
& 35.50\compyes & 57.72\compyes & 66.74\compyes
& 17.35\compyes & 35.01\compyes & 44.15\compyes \\

\slerprow
\hspace{5pt}+SLERP ($\hat{\alpha}$) & I+T & 0.5/0.7
& 36.71\compyes & 59.26\compyes & 68.37\compyes
& \underline{17.90}\compyes & \underline{35.83}\compyes & \underline{45.05}\compyes \\

\slerprow
\hspace{5pt}+SLERP ($\alpha^\star$) & I+T & 0.6/0.7
& \underline{37.11}\compyes & \underline{59.43}\compyes & \underline{68.63}\compyes
& \underline{17.90}\compyes & \underline{35.83}\compyes & \underline{45.05}\compyes \\

\newrow
\textbf{\pairref New model (CLIP ViT-H/14)} & -- & --
& \plainaccur{43.50} & \plainaccur{66.34} & \plainaccur{74.76}
& \plainaccur{28.56} & \plainaccur{49.50} & \plainaccur{58.50} \\

\midrule

\pairrow
\multicolumn{9}{l}{
\textbf{SigLIP2 $\rightarrow$ SigLIP1}
\hspace{3pt}\textit{(same model family)}}\\

\oldrow
\textbf{\pairref Old model (SigLIP1)} & -- & --
& \plainaccur{46.99} & \plainaccur{70.23} & \plainaccur{78.19}
& \plainaccur{30.88} & \plainaccur{52.11} & \plainaccur{61.05} \\

SVD & T & --
& 36.30\compno & 59.91\compno & 69.34\compno
& 30.53\compno & 51.66\compno & 60.27\compno \\

\slerprow
\hspace{5pt}+SLERP ($\hat{\alpha}$) & T & 0.3/0.4
& 46.76\compno & 70.19\compno & 78.32\compyes
& 32.50\compyes & \underline{53.87}\compyes & \underline{62.53}\compyes \\

\slerprow
\hspace{5pt}+SLERP ($\alpha^\star$) & T & 0.1/0.5
& 47.18\compyes & 70.39\compyes & \textbf{78.55}\compyes
& \underline{32.56}\compyes & 53.83\compyes & 62.52\compyes \\

SVD & I & --
& 40.96\compno & 64.64\compno & 73.55\compno
& 28.18\compno & 49.17\compno & 58.26\compno \\

\slerprow
\hspace{5pt}+SLERP ($\hat{\alpha}$) & I & 0.2/0.3
& \textbf{47.30}\compyes & \textbf{70.62}\compyes & \underline{78.47}\compyes
& 32.02\compyes & 53.46\compyes & 62.28\compyes \\

\slerprow
\hspace{5pt}+SLERP ($\alpha^\star$) & I & 0.2/0.4
& \textbf{47.30}\compyes & \textbf{70.62}\compyes & \underline{78.47}\compyes
& 32.06\compyes & 53.48\compyes & 62.23\compyes \\

SVD & I+T & --
& 39.41\compno & 63.45\compno & 72.41\compno
& 30.78\compno & 51.95\compno & 60.77\compno \\

\slerprow
\hspace{5pt}+SLERP ($\hat{\alpha}$) & I+T & 0.2/0.5
& \underline{47.24}\compyes & \underline{70.47}\compyes & \underline{78.47}\compyes
& \textbf{32.67}\compyes & \textbf{54.12}\compyes & \textbf{62.75}\compyes \\

\slerprow
\hspace{5pt}+SLERP ($\alpha^\star$) & I+T & 0.2/0.5
& \underline{47.24}\compyes & \underline{70.47}\compyes & \underline{78.47}\compyes
& \textbf{32.67}\compyes & \textbf{54.12}\compyes & \textbf{62.75}\compyes \\

\newrow
\textbf{\pairref New model (SigLIP2)} & -- & --
& \plainaccur{51.06} & \plainaccur{73.27} & \plainaccur{80.73}
& \plainaccur{35.04} & \plainaccur{56.49} & \plainaccur{64.96} \\

\midrule

\pairrow
\multicolumn{9}{l}{
\textbf{SigLIP2 $\rightarrow$ CLIP ViT-B/32}
\hspace{3pt}\textit{(cross-family)}}\\

\oldrow
\textbf{\pairref Old model (CLIP ViT-B/32)} & -- & --
& \plainaccur{28.76} & \plainaccur{50.34} & \plainaccur{59.86}
& \plainaccur{14.47} & \plainaccur{30.22} & \plainaccur{38.92} \\

SVD & T & --
& 32.33\compyes & 54.63\compyes & 64.31\compyes
& 15.62\compyes & 32.39\compyes & 41.27\compyes \\

\slerprow
\hspace{5pt}+SLERP ($\hat{\alpha}$) & T & 0.6/0.5
& \textbf{37.35}\compyes & \textbf{60.36}\compyes & \textbf{69.34}\compyes
& 17.18\compyes & 34.75\compyes & 43.99\compyes \\

\slerprow
\hspace{5pt}+SLERP ($\alpha^\star$) & T & 0.6/0.6
& \textbf{37.35}\compyes & \textbf{60.36}\compyes & \textbf{69.34}\compyes
& 17.27\compyes & 34.83\compyes & 44.05\compyes \\

SVD & I & --
& 32.28\compyes & 54.30\compyes & 63.53\compyes
& 15.20\compyes & 31.77\compyes & 40.68\compyes \\

\slerprow
\hspace{5pt}+SLERP ($\hat{\alpha}$) & I & 0.7/0.4
& 34.86\compyes & 57.49\compyes & 66.59\compyes
& \underline{18.79}\compyes & \underline{37.30}\compyes & \underline{46.77}\compyes \\

\slerprow
\hspace{5pt}+SLERP ($\alpha^\star$) & I & 0.6/0.5
& \underline{35.05}\compyes & \underline{57.84}\compyes & \underline{66.78}\compyes
& \textbf{18.97}\compyes & \textbf{37.66}\compyes & \textbf{46.98}\compyes \\

SVD & I+T & --
& 28.16\compno & 49.46\compno & 59.02\compno
& 15.71\compyes & 32.77\compyes & 41.88\compyes \\

\slerprow
\hspace{5pt}+SLERP ($\hat{\alpha}$) & I+T & 0.4/0.5
& 34.81\compyes & 57.37\compyes & 66.57\compyes
& 17.82\compyes & 35.92\compyes & 45.23\compyes \\

\slerprow
\hspace{5pt}+SLERP ($\alpha^\star$) & I+T & 0.5/0.6
& 34.86\compyes & 57.38\compyes & 66.60\compyes
& 17.84\compyes & 36.00\compyes & 45.34\compyes \\

\newrow
\textbf{\pairref New model (SigLIP2)} & -- & --
& \plainaccur{51.06} & \plainaccur{73.27} & \plainaccur{80.73}
& \plainaccur{35.04} & \plainaccur{56.49} & \plainaccur{64.96} \\

\midrule

\pairrow
\multicolumn{9}{l}{
\textbf{SigLIP1 $\rightarrow$ CLIP ViT-H/14}
\hspace{3pt}\textit{(cross-family; new model not uniformly stronger than old model)}}\\

\oldrow
\textbf{\pairref Old model (CLIP ViT-H/14)} & -- & --
& \plainaccur{43.50} & \plainaccur{66.34} & \plainaccur{74.76}
& \plainaccur{28.56} & \plainaccur{49.50} & \plainaccur{58.50} \\

SVD & T & --
& 41.82\compno & 64.87\compno & 73.80\compno
& 23.18\compno & 43.28\compno & 52.61\compno \\

\slerprow
\hspace{5pt}+SLERP ($\hat{\alpha}$) & T & 0.5/0.1
& \textbf{46.56}\compyes & \textbf{69.50}\compyes & \textbf{77.46}\compyes
& 28.79\compyes & 49.78\compyes & 58.81\compyes \\

\slerprow
\hspace{5pt}+SLERP ($\alpha^\star$) & T & 0.5/0.2
& \textbf{46.56}\compyes & \textbf{69.50}\compyes & \textbf{77.46}\compyes
& 28.88\compyes & 49.94\compyes & 59.00\compyes \\

SVD & I & --
& 26.43\compno & 45.13\compno & 53.39\compno
& 23.21\compno & 43.11\compno & 52.32\compno \\

\slerprow
\hspace{5pt}+SLERP ($\hat{\alpha}$) & I & 0.0/0.3
& 43.50\compno & 66.34\compno & 74.76\compno
& \textbf{29.04}\compyes & \textbf{50.12}\compyes & \textbf{59.21}\compyes \\

\slerprow
\hspace{5pt}+SLERP ($\alpha^\star$) & I & 0.2/0.3
& 44.00\compyes & 67.02\compyes & 75.42\compyes
& \textbf{29.04}\compyes & \textbf{50.12}\compyes & \textbf{59.21}\compyes \\

SVD & I+T & --
& 34.47\compno & 55.69\compno & 64.66\compno
& 23.58\compno & 43.83\compno & 53.12\compno \\

\slerprow
\hspace{5pt}+SLERP ($\hat{\alpha}$) & I+T & 0.1/0.1
& 44.33\compyes & 67.38\compyes & 75.58\compyes
& 28.80\compyes & 49.80\compyes & 58.83\compyes \\

\slerprow
\hspace{5pt}+SLERP ($\alpha^\star$) & I+T & 0.4/0.3
& \underline{45.65}\compyes & \underline{68.65}\compyes & \underline{76.89}\compyes
& \underline{28.93}\compyes & \underline{50.07}\compyes & \underline{59.19}\compyes \\

\newrow
\textbf{\pairref New model (SigLIP1)} & -- & --
& \plainaccur{46.99} & \plainaccur{70.23} & \plainaccur{78.19}
& \plainaccur{30.88} & \plainaccur{52.11} & \plainaccur{61.05} \\

\bottomrule
\end{tabular}
\end{adjustbox}
\end{table*}

\begin{table*}[p]
\caption{
\textbf{Cross-modal backward-compatible retrieval on NoCaps.}
The \(\alpha\) column reports the interpolation weights for I2T/T2I retrieval.
The Sup.\ column indicates the support modality used to estimate the Procrustes map: text-only (T), image-only (I), or joint image--text (I+T).
The support-set-selected weight \(\hat{\alpha}\) is selected on CC3M, whereas
\(\alpha^\star\) is selected directly on NoCaps and is reported only as an
oracle upper bound.
Checkmarks indicate backward compatibility; bold and underlined values denote
the best and second-best \(\mathrm{new}\!\to\!\mathrm{old}\) results, respectively.
}
\label{tab:nocaps_no_reindex}
\centering
\footnotesize
\begin{adjustbox}{width=0.95\textwidth, max totalheight=0.88\textheight}
\begin{tabular}{lccccc ccc}
\toprule
& & & \multicolumn{3}{c}{I2T} & \multicolumn{3}{c}{T2I} \\
\cmidrule(lr){4-6}
\cmidrule(lr){7-9}
Method & Sup. & $\alpha$
& R@1 & R@5 & R@10
& R@1 & R@5 & R@10 \\
\midrule

\pairrow
\multicolumn{9}{l}{
\textbf{CLIP ViT-L/14 $\rightarrow$ CLIP ViT-B/32}
\hspace{3pt}\textit{(same model family)}}\\

\oldrow
\textbf{\pairref Old model (CLIP ViT-B/32)} & -- & --
& \plainaccur{71.29} & \plainaccur{91.93} & \plainaccur{96.20}
& \plainaccur{45.24} & \plainaccur{74.97} & \plainaccur{84.51} \\

XBT~\cite{jang2025towards} & -- & --
& \underline{75.02}\compyes & 93.27\compyes & 97.31\compyes
& \textbf{48.02}\compyes & \textbf{79.00}\compyes & \textbf{88.21}\compyes \\

SVD & T & --
& 69.22\compno & 91.18\compno & 96.04\compno
& 43.92\compno & 74.22\compno & 83.96\compno \\

\slerprow
\hspace{5pt}+SLERP ($\hat{\alpha}$) & T & 0.7/0.5
& 73.98\compyes & 93.33\compyes & 96.91\compyes
& 46.59\compyes & 76.47\compyes & 85.54\compyes \\

\slerprow
\hspace{5pt}+SLERP ($\alpha^\star$) & T & 0.4/0.4
& \textbf{75.62}\compyes & \textbf{93.96}\compyes & \textbf{97.47}\compyes
& 46.65\compyes & 76.52\compyes & 85.56\compyes \\

SVD & I & --
& 65.96\compno & 89.24\compno & 94.91\compno
& 42.16\compno & 72.72\compno & 82.77\compno \\

\slerprow
\hspace{5pt}+SLERP ($\hat{\alpha}$) & I & 0.7/0.4
& 71.40\compyes & 92.71\compyes & 96.87\compyes
& 46.49\compyes & 76.24\compyes & 85.64\compyes \\

\slerprow
\hspace{5pt}+SLERP ($\alpha^\star$) & I & 0.4/0.3
& 73.78\compyes & 93.31\compyes & \underline{97.36}\compyes
& 46.51\compyes & 76.22\compyes & 85.56\compyes \\

SVD & I+T & --
& 69.13\compno & 90.89\compno & 96.18\compno
& 44.41\compno & 74.95\compno & 84.64\compyes \\

\slerprow
\hspace{5pt}+SLERP ($\hat{\alpha}$) & I+T & 0.7/0.4
& 73.56\compyes & 93.20\compyes & 96.96\compyes
& \underline{46.98}\compyes & \underline{76.90}\compyes & \underline{85.86}\compyes \\

\slerprow
\hspace{5pt}+SLERP ($\alpha^\star$) & I+T & 0.5/0.4
& 74.78\compyes & \underline{93.69}\compyes & 97.29\compyes
& \underline{46.98}\compyes & \underline{76.90}\compyes & \underline{85.86}\compyes \\

\newrow
\textbf{\pairref New model (CLIP ViT-L/14)} & -- & --
& \plainaccur{73.36} & \plainaccur{93.42} & \plainaccur{97.38}
& \plainaccur{47.84} & \plainaccur{76.55} & \plainaccur{85.11} \\

\midrule

\pairrow
\multicolumn{9}{l}{
\textbf{CLIP ViT-H/14 $\rightarrow$ CLIP ViT-B/32}
\hspace{3pt}\textit{(same model family)}}\\

\oldrow
\textbf{\pairref Old model (CLIP ViT-B/32)} & -- & --
& \plainaccur{71.29} & \plainaccur{91.93} & \plainaccur{96.20}
& \plainaccur{45.24} & \plainaccur{74.97} & \plainaccur{84.51} \\

XBT~\cite{jang2025towards} & -- & --
& 76.62\compyes & 94.11\compyes & 97.58\compyes
& 51.14\compyes & 80.82\compyes & 88.93\compyes \\

SVD & T & --
& 76.76\compyes & 93.56\compyes & 97.33\compyes
& 50.38\compyes & 79.64\compyes & 87.76\compyes \\

\slerprow
\hspace{5pt}+SLERP ($\hat{\alpha}$) & T & 0.8/0.6
& 78.24\compyes & 94.22\compyes & \underline{97.84}\compyes
& 51.09\compyes & 80.24\compyes & 88.33\compyes \\

\slerprow
\hspace{5pt}+SLERP ($\alpha^\star$) & T & 0.5/0.7
& \textbf{78.76}\compyes & \textbf{94.84}\compyes & \textbf{97.98}\compyes
& 51.27\compyes & 80.40\compyes & 88.36\compyes \\

SVD & I & --
& 75.04\compyes & 93.58\compyes & 97.07\compyes
& 50.62\compyes & 80.38\compyes & 88.57\compyes \\

\slerprow
\hspace{5pt}+SLERP ($\hat{\alpha}$) & I & 0.6/0.6
& 77.51\compyes & 94.53\compyes & 97.73\compyes
& \textbf{52.79}\compyes & \textbf{81.86}\compyes & \textbf{89.56}\compyes \\

\slerprow
\hspace{5pt}+SLERP ($\alpha^\star$) & I & 0.6/0.6
& 77.51\compyes & 94.53\compyes & 97.73\compyes
& \textbf{52.79}\compyes & \textbf{81.86}\compyes & \textbf{89.56}\compyes \\

SVD & I+T & --
& 76.40\compyes & 93.73\compyes & 97.20\compyes
& 51.58\compyes & 80.79\compyes & 88.83\compyes \\

\slerprow
\hspace{5pt}+SLERP ($\hat{\alpha}$) & I+T & 0.5/0.7
& 78.42\compyes & 94.49\compyes & 97.76\compyes
& \underline{52.55}\compyes & \underline{81.60}\compyes & \underline{89.34}\compyes \\

\slerprow
\hspace{5pt}+SLERP ($\alpha^\star$) & I+T & 0.6/0.7
& \underline{78.58}\compyes & \underline{94.64}\compyes & \underline{97.84}\compyes
& \underline{52.55}\compyes & \underline{81.60}\compyes & \underline{89.34}\compyes \\

\newrow
\textbf{\pairref New model (CLIP ViT-H/14)} & -- & --
& \plainaccur{84.27} & \plainaccur{97.24} & \plainaccur{98.98}
& \plainaccur{63.53} & \plainaccur{87.65} & \plainaccur{92.93} \\

\midrule

\pairrow
\multicolumn{9}{l}{
\textbf{SigLIP2 $\rightarrow$ SigLIP1}
\hspace{3pt}\textit{(same model family)}}\\

\oldrow
\textbf{\pairref Old model (SigLIP1)} & -- & --
& \plainaccur{86.20} & \plainaccur{97.78} & \plainaccur{99.44}
& \plainaccur{64.18} & \plainaccur{87.62} & \plainaccur{92.86} \\

SVD & T & --
& 75.00\compno & 93.84\compno & 97.49\compno
& 64.64\compyes & 87.86\compyes & 93.10\compyes \\

\slerprow
\hspace{5pt}+SLERP ($\hat{\alpha}$) & T & 0.3/0.4
& 85.87\compno & 97.60\compno & \textbf{99.31}\compno
& 66.59\compyes & 89.07\compyes & 94.02\compyes \\

\slerprow
\hspace{5pt}+SLERP ($\alpha^\star$) & T & 0.1/0.5
& 86.49\compyes & \underline{97.78}\compno & 99.22\compno
& \underline{66.72}\compyes & 89.13\compyes & 94.05\compyes \\

SVD & I & --
& 80.00\compno & 96.16\compno & 98.51\compno
& 62.69\compno & 86.95\compno & 92.64\compno \\

\slerprow
\hspace{5pt}+SLERP ($\hat{\alpha}$) & I & 0.2/0.3
& \textbf{86.69}\compyes & \textbf{97.84}\compyes & \textbf{99.31}\compno
& 66.08\compyes & 88.97\compyes & 93.94\compyes \\

\slerprow
\hspace{5pt}+SLERP ($\alpha^\star$) & I & 0.2/0.5
& \textbf{86.69}\compyes & \textbf{97.84}\compyes & \textbf{99.31}\compno
& 66.38\compyes & 89.12\compyes & 94.07\compyes \\

SVD & I+T & --
& 77.93\compno & 95.16\compno & 98.18\compno
& 64.98\compyes & 88.35\compyes & 93.42\compyes \\

\slerprow
\hspace{5pt}+SLERP ($\hat{\alpha}$) & I+T & 0.2/0.5
& 86.49\compyes & 97.73\compno & \underline{99.29}\compno
& \textbf{67.02}\compyes & \textbf{89.41}\compyes & \underline{94.20}\compyes \\

\slerprow
\hspace{5pt}+SLERP ($\alpha^\star$) & I+T & 0.1/0.6
& \underline{86.60}\compyes & 97.76\compno & \textbf{99.31}\compno
& \textbf{67.02}\compyes & \underline{89.38}\compyes & \textbf{94.23}\compyes \\

\newrow
\textbf{\pairref New model (SigLIP2)} & -- & --
& \plainaccur{89.18} & \plainaccur{98.64} & \plainaccur{99.56}
& \plainaccur{69.82} & \plainaccur{90.85} & \plainaccur{95.10} \\

\midrule

\pairrow
\multicolumn{9}{l}{
\textbf{SigLIP2 $\rightarrow$ CLIP ViT-B/32}
\hspace{3pt}\textit{(cross-family)}}\\

\oldrow
\textbf{\pairref Old model (CLIP ViT-B/32)} & -- & --
& \plainaccur{71.29} & \plainaccur{91.93} & \plainaccur{96.20}
& \plainaccur{45.24} & \plainaccur{74.97} & \plainaccur{84.51} \\

SVD & T & --
& 72.16\compyes & 92.69\compyes & 96.82\compyes
& 46.41\compyes & 75.39\compyes & 84.29\compno \\

\slerprow
\hspace{5pt}+SLERP ($\hat{\alpha}$) & T & 0.6/0.5
& \underline{78.56}\compyes & \textbf{95.33}\compyes & \textbf{98.33}\compyes
& 50.59\compyes & 79.65\compyes & 87.64\compyes \\

\slerprow
\hspace{5pt}+SLERP ($\alpha^\star$) & T & 0.5/0.6
& \textbf{78.80}\compyes & \underline{95.24}\compyes & \underline{98.31}\compyes
& 50.61\compyes & 79.60\compyes & 87.51\compyes \\

SVD & I & --
& 72.42\compyes & 92.71\compyes & 97.04\compyes
& 46.47\compyes & 75.94\compyes & 85.20\compyes \\

\slerprow
\hspace{5pt}+SLERP ($\hat{\alpha}$) & I & 0.7/0.4
& 76.18\compyes & 94.31\compyes & 97.82\compyes
& \underline{53.86}\compyes & \underline{82.62}\compyes & \underline{90.09}\compyes \\

\slerprow
\hspace{5pt}+SLERP ($\alpha^\star$) & I & 0.5/0.5
& 77.62\compyes & 94.42\compyes & 97.82\compyes
& \textbf{54.37}\compyes & \textbf{82.88}\compyes & \textbf{90.31}\compyes \\

SVD & I+T & --
& 66.24\compno & 88.96\compno & 94.20\compno
& 48.66\compyes & 78.19\compyes & 87.19\compyes \\

\slerprow
\hspace{5pt}+SLERP ($\hat{\alpha}$) & I+T & 0.4/0.5
& 77.40\compyes & 94.38\compyes & 97.67\compyes
& 53.04\compyes & 81.99\compyes & 89.60\compyes \\

\slerprow
\hspace{5pt}+SLERP ($\alpha^\star$) & I+T & 0.4/0.6
& 77.40\compyes & 94.38\compyes & 97.67\compyes
& 53.23\compyes & 82.16\compyes & 89.76\compyes \\

\newrow
\textbf{\pairref New model (SigLIP2)} & -- & --
& \plainaccur{89.18} & \plainaccur{98.64} & \plainaccur{99.56}
& \plainaccur{69.82} & \plainaccur{90.85} & \plainaccur{95.10} \\

\midrule

\pairrow
\multicolumn{9}{l}{
\textbf{SigLIP1 $\rightarrow$ CLIP ViT-H/14}
\hspace{3pt}\textit{(cross-family; new model not uniformly stronger than old model)}}\\

\oldrow
\textbf{\pairref Old model (CLIP ViT-H/14)} & -- & --
& \plainaccur{84.27} & \plainaccur{97.24} & \plainaccur{98.98}
& \plainaccur{63.53} & \plainaccur{87.65} & \plainaccur{92.93} \\

SVD & T & --
& 81.40\compno & 96.69\compno & 98.80\compno
& 56.13\compno & 82.11\compno & 89.27\compno \\

\slerprow
\hspace{5pt}+SLERP ($\hat{\alpha}$) & T & 0.5/0.1
& \underline{85.89}\compyes & \textbf{97.98}\compyes & \textbf{99.38}\compyes
& 63.89\compyes & 87.82\compyes & 93.12\compyes \\

\slerprow
\hspace{5pt}+SLERP ($\alpha^\star$) & T & 0.4/0.2
& \textbf{86.00}\compyes & \underline{97.96}\compyes & \underline{99.31}\compyes
& \underline{63.96}\compyes & \underline{87.89}\compyes & 93.22\compyes \\

SVD & I & --
& 70.31\compno & 92.96\compno & 97.13\compno
& 55.05\compno & 81.70\compno & 89.21\compno \\

\slerprow
\hspace{5pt}+SLERP ($\hat{\alpha}$) & I & 0.0/0.3
& 84.27\compno & 97.24\compno & 98.98\compno
& 63.80\compyes & \underline{87.89}\compyes & \textbf{93.35}\compyes \\

\slerprow
\hspace{5pt}+SLERP ($\alpha^\star$) & I & 0.2/0.2
& 84.93\compyes & 97.27\compyes & 99.13\compyes
& 63.91\compyes & 87.87\compyes & \underline{93.30}\compyes \\

SVD & I+T & --
& 77.40\compno & 95.29\compno & 98.02\compno
& 56.60\compno & 82.68\compno & 89.72\compno \\

\slerprow
\hspace{5pt}+SLERP ($\hat{\alpha}$) & I+T & 0.1/0.1
& 85.04\compyes & 97.36\compyes & 99.16\compyes
& 63.87\compyes & 87.83\compyes & 93.14\compyes \\

\slerprow
\hspace{5pt}+SLERP ($\alpha^\star$) & I+T & 0.3/0.3
& 85.76\compyes & 97.47\compyes & 99.27\compyes
& \textbf{64.00}\compyes & \textbf{87.94}\compyes & 93.27\compyes \\

\newrow
\textbf{\pairref New model (SigLIP1)} & -- & --
& \plainaccur{86.20} & \plainaccur{97.78} & \plainaccur{99.44}
& \plainaccur{64.18} & \plainaccur{87.62} & \plainaccur{92.86} \\

\bottomrule
\end{tabular}
\end{adjustbox}
\end{table*}

\section{Detailed Retrieval Performance}
\label{sec:app_detailed_performance}

Tabs.~\ref{tab:flickr30k_no_reindex}, \ref{tab:coco_no_reindex}, and~\ref{tab:nocaps_no_reindex} provide the full Recall@1/5/10 metrics for the retrieval settings of the experimental results in Sec.~\ref{sec:results}. 
For each dataset, we report both image-to-text (I2T) and text-to-image (T2I) retrieval, for all support modalities used to estimate the Procrustes alignment.
We compare SVD alignment alone, XBT (when available), and SLERP applied to the SVD-aligned embeddings with both the support-set selected weight \(\hat{\alpha}\) and the dataset-specific oracle \(\alpha^\star\). 
The tables also include the CLIP ViT-H/14 \(\rightarrow\) CLIP ViT-B/32 setting, which has been adopted by \cite{jang2025towards}.

The detailed results confirm the trend of Tables \ref{tab:all_datasets_r1} and \ref{tab:all_datasets_r1_siglip}. 
SVD provides a strong new-to-old alignment with no model retraining or gradient-based optimization, but it does not consistently achieve compatibility across model pairs, datasets, and support modalities. 
SLERP improves the corresponding SVD endpoint in nearly all settings, and the gains extend beyond Recall@1 to Recall@5 and Recall@10. 
This shows that interpolation not only improves the top-ranked item, but also produces a more compatible ranking over the old-model gallery.

The same pattern holds across same-family and cross-family pairs. 
In the CLIP same-family settings, SLERP is consistently competitive with the training-based XBT baseline while requiring no compatibility training. 
In the cross-family settings, including the difficult SigLIP1 \(\rightarrow\) CLIP ViT-H/14 case, SLERP often turns incompatible SVD rows into compatible ones. 
Across support modalities, text-only support remains the most stable overall, while image-only and joint support can be competitive for specific retrieval directions (I2T or T2I).

\section{Re-indexing}
\label{sec:reindexing}

\begin{table*}[t]
\caption{
Reindexed-gallery retrieval for same-family model upgrades.
For each dataset, the \(\alpha\) column reports the interpolation weights for
I2T/T2I retrieval.
The Sup.\ column indicates the support modality used to estimate the Procrustes map: text-only (T), image-only (I), or joint image--text (I+T).
The support-set-selected weight \(\hat{\alpha}\) is selected on CC3M and fixed
across test datasets, whereas \(\alpha^\star\) is selected directly on each
test dataset and is reported only as an oracle upper bound.
Bold and underlined values denote the best and second-best results,
respectively, excluding the old- and new-model reference rows.
}
\label{tab:all_datasets_r1_reindex}
\centering
\footnotesize
\begin{adjustbox}{width=\textwidth}
\begin{tabular}{lcccc ccc ccc}
\toprule
& &
\multicolumn{3}{c}{Flickr30k} &
\multicolumn{3}{c}{COCO} &
\multicolumn{3}{c}{NoCaps} \\
\cmidrule(lr){3-5}
\cmidrule(lr){6-8}
\cmidrule(lr){9-11}
Method & Sup.
& $\alpha$ & I2T@1 & T2I@1
& $\alpha$ & I2T@1 & T2I@1
& $\alpha$ & I2T@1 & T2I@1 \\
\midrule

\pairrow
\multicolumn{11}{l}{
\textbf{CLIP ViT-L/14 $\rightarrow$ CLIP ViT-B/32}
\hspace{3pt}\textit{(same model family)}}\\

\oldrow
\textbf{\pairref Old model (CLIP ViT-B/32)} & --
& -- & 40.62 & 21.73
& -- & 28.76 & 14.47
& -- & 71.29 & 45.24 \\

XBT~\cite{jang2025towards} & --
& -- & 43.50 & \textbf{39.59}
& -- & 33.32 & \textbf{27.61}
& -- & \underline{77.22} & \textbf{63.48} \\

SVD & T
& -- & 48.72 & 28.27
& -- & 34.33 & 18.68
& -- & 73.36 & 47.84 \\

\slerprow
\hspace{5pt}+SLERP ($\hat{\alpha}$) & T
& 0.7/0.7 & 52.96 & \underline{31.57}
& 0.7/0.7 & \underline{36.79} & \underline{20.36}
& 0.7/0.7 & 76.47 & 51.34 \\

\slerprow
\hspace{5pt}+SLERP ($\alpha^\star$) & T
& 0.6/0.7 & \textbf{53.23} & \underline{31.57}
& 0.7/0.7 & \underline{36.79} & \underline{20.36}
& 0.5/0.6 & 77.09 & \underline{51.81} \\

SVD & I
& -- & 48.72 & 28.27
& -- & 34.33 & 18.68
& -- & 73.36 & 47.84 \\

\slerprow
\hspace{5pt}+SLERP ($\hat{\alpha}$) & I
& 0.9/0.7 & 49.51 & 31.46
& 0.9/0.7 & 34.79 & 20.23
& 0.9/0.7 & 73.71 & 51.25 \\

\slerprow
\hspace{5pt}+SLERP ($\alpha^\star$) & I
& 0.9/0.7 & 49.51 & 31.46
& 0.9/0.7 & 34.79 & 20.23
& 0.8/0.6 & 73.93 & 51.60 \\

SVD & I+T
& -- & 48.72 & 28.27
& -- & 34.33 & 18.68
& -- & 73.36 & 47.84 \\

\slerprow
\hspace{5pt}+SLERP ($\hat{\alpha}$) & I+T
& 0.7/0.7 & \underline{53.16} & 31.04
& 0.7/0.7 & \textbf{36.80} & 20.12
& 0.7/0.7 & 76.84 & 51.22 \\

\slerprow
\hspace{5pt}+SLERP ($\alpha^\star$) & I+T
& 0.7/0.6 & \underline{53.16} & 31.04
& 0.7/0.7 & \textbf{36.80} & 20.12
& 0.5/0.5 & \textbf{77.47} & 51.60 \\

\newrow
\textbf{\pairref New model (CLIP ViT-L/14)} & --
& -- & 48.72 & 28.27
& -- & 34.33 & 18.68
& -- & 73.36 & 47.84 \\

\midrule

\pairrow
\multicolumn{11}{l}{
\textbf{SigLIP2 $\rightarrow$ SigLIP1}
\hspace{3pt}\textit{(same model family)}}\\

\oldrow
\textbf{\pairref Old model (SigLIP1)} & --
& -- & 58.09 & 39.40
& -- & 46.99 & 30.88
& -- & 86.20 & 64.18 \\

SVD & T
& -- & 69.31 & 51.29
& -- & 51.06 & 35.04
& -- & \textbf{89.18} & 69.82 \\

\slerprow
\hspace{5pt}+SLERP ($\hat{\alpha}$) & T
& 0.5/0.5 & 69.26 & 50.97
& 0.5/0.5 & 52.05 & 35.16
& 0.5/0.5 & \underline{88.98} & 69.51 \\

\slerprow
\hspace{5pt}+SLERP ($\alpha^\star$) & T
& 0.8/0.8 & \textbf{71.21} & \underline{52.46}
& 0.7/0.7 & \textbf{52.56} & 35.71
& 1.0/0.8 & \textbf{89.18} & 70.41 \\

SVD & I
& -- & 69.31 & 51.29
& -- & 51.06 & 35.04
& -- & \textbf{89.18} & 69.82 \\

\slerprow
\hspace{5pt}+SLERP ($\hat{\alpha}$) & I
& 0.6/0.5 & 67.05 & 50.58
& 0.6/0.5 & 49.04 & 35.28
& 0.6/0.5 & 86.62 & 69.79 \\

\slerprow
\hspace{5pt}+SLERP ($\alpha^\star$) & I
& 0.9/0.8 & 69.64 & 52.36
& 1.0/0.7 & 51.06 & \underline{35.86}
& 1.0/0.8 & \textbf{89.18} & \underline{70.48} \\

SVD & I+T
& -- & 69.31 & 51.29
& -- & 51.06 & 35.04
& -- & \textbf{89.18} & 69.82 \\

\slerprow
\hspace{5pt}+SLERP ($\hat{\alpha}$) & I+T
& 0.6/0.5 & 70.38 & 51.05
& 0.6/0.5 & 52.39 & 35.53
& 0.6/0.5 & 88.93 & 69.83 \\

\slerprow
\hspace{5pt}+SLERP ($\alpha^\star$) & I+T
& 0.8/0.8 & \underline{71.05} & \textbf{52.48}
& 0.7/0.7 & \underline{52.48} & \textbf{35.94}
& 1.0/0.7 & \textbf{89.18} & \textbf{70.60} \\

\newrow
\textbf{\pairref New model (SigLIP2)} & --
& -- & 69.31 & 51.29
& -- & 51.06 & 35.04
& -- & 89.18 & 69.82 \\

\bottomrule
\end{tabular}
\end{adjustbox}
\end{table*}

\begin{table*}[t]
\caption{
Reindexed-gallery retrieval for cross-family model upgrades.
For each dataset, the \(\alpha\) column reports the interpolation weights for
I2T/T2I retrieval.
The Sup.\ column indicates the support modality used to estimate the Procrustes map: text-only (T), image-only (I), or joint image--text (I+T).
The support-set-selected weight \(\hat{\alpha}\) is selected on CC3M and fixed
across all test datasets, whereas \(\alpha^\star\) is selected directly on each
test dataset and is reported only as an oracle upper bound.
Bold and underlined values denote the best and second-best results,
respectively, excluding the old- and new-model reference rows.
}
\label{tab:all_datasets_r1_siglip_reindex}
\centering
\footnotesize
\begin{adjustbox}{width=\textwidth}
\begin{tabular}{lcccc ccc ccc}
\toprule
& &
\multicolumn{3}{c}{Flickr30k} &
\multicolumn{3}{c}{COCO} &
\multicolumn{3}{c}{NoCaps} \\
\cmidrule(lr){3-5}
\cmidrule(lr){6-8}
\cmidrule(lr){9-11}
Method & Sup.
& $\alpha$ & I2T@1 & T2I@1
& $\alpha$ & I2T@1 & T2I@1
& $\alpha$ & I2T@1 & T2I@1 \\
\midrule

\pairrow
\multicolumn{11}{l}{
\textbf{SigLIP2 $\rightarrow$ CLIP ViT-B/32}
\hspace{3pt}\textit{(cross-family)}}\\

\oldrow
\textbf{\pairref Old model (CLIP ViT-B/32)} & --
& -- & 40.62 & 21.73
& -- & 28.76 & 14.47
& -- & 71.29 & 45.24 \\

SVD & T
& -- & 69.31 & 51.29
& -- & 51.06 & 35.04
& -- & \underline{89.18} & 69.82 \\

\slerprow
\hspace{5pt}+SLERP ($\hat{\alpha}$) & T
& 0.8/0.9 & \underline{70.37} & 51.72
& 0.8/0.9 & 51.67 & 35.33
& 0.8/0.9 & \textbf{89.36} & 70.34 \\

\slerprow
\hspace{5pt}+SLERP ($\alpha^\star$) & T
& 0.8/0.9 & \underline{70.37} & 51.72
& 0.9/0.9 & \textbf{51.91} & 35.33
& 0.8/0.9 & \textbf{89.36} & 70.34 \\

SVD & I
& -- & 69.31 & 51.29
& -- & 51.06 & 35.04
& -- & \underline{89.18} & 69.82 \\

\slerprow
\hspace{5pt}+SLERP ($\hat{\alpha}$) & I
& 0.9/0.8 & 68.33 & 51.61
& 0.9/0.8 & 50.34 & 35.23
& 0.9/0.8 & 88.09 & \underline{70.55} \\

\slerprow
\hspace{5pt}+SLERP ($\alpha^\star$) & I
& 1.0/0.9 & 69.31 & \textbf{52.01}
& 1.0/0.9 & 51.06 & \textbf{35.57}
& 1.0/0.9 & \underline{89.18} & \textbf{70.61} \\

SVD & I+T
& -- & 69.31 & 51.29
& -- & 51.06 & 35.04
& -- & \underline{89.18} & 69.82 \\

\slerprow
\hspace{5pt}+SLERP ($\hat{\alpha}$) & I+T
& 0.8/0.8 & \textbf{71.16} & 51.59
& 0.8/0.8 & 51.69 & 35.12
& 0.8/0.8 & \underline{89.18} & 70.46 \\

\slerprow
\hspace{5pt}+SLERP ($\alpha^\star$) & I+T
& 0.8/0.9 & \textbf{71.16} & \underline{51.86}
& 0.9/0.9 & \underline{51.84} & \underline{35.41}
& 1.0/0.8 & \underline{89.18} & 70.46 \\

\newrow
\textbf{\pairref New model (SigLIP2)} & --
& -- & 69.31 & 51.29
& -- & 51.06 & 35.04
& -- & 89.18 & 69.82 \\

\midrule

\pairrow
\multicolumn{11}{l}{
\textbf{SigLIP1 $\rightarrow$ CLIP ViT-H/14}
\hspace{3pt}\textit{(cross-family; new model not uniformly stronger than old model)}}\\

\oldrow
\textbf{\pairref Old model (CLIP ViT-H/14)} & --
& -- & 59.38 & 43.07
& -- & 43.50 & 28.56
& -- & 84.27 & 63.53 \\

SVD & T
& -- & 58.09 & 39.40
& -- & 46.99 & 30.88
& -- & 86.20 & 64.18 \\

\slerprow
\hspace{5pt}+SLERP ($\hat{\alpha}$) & T
& 0.5/0.6 & 64.56 & \underline{47.62}
& 0.5/0.6 & 48.03 & \underline{33.02}
& 0.5/0.6 & 86.69 & \textbf{67.42} \\

\slerprow
\hspace{5pt}+SLERP ($\alpha^\star$) & T
& 0.4/0.5 & \underline{64.68} & \textbf{48.04}
& 0.7/0.7 & \underline{48.63} & \textbf{33.06}
& 0.9/0.6 & \textbf{87.27} & \textbf{67.42} \\

SVD & I
& -- & 58.09 & 39.40
& -- & 46.99 & 30.88
& -- & 86.20 & 64.18 \\

\slerprow
\hspace{5pt}+SLERP ($\hat{\alpha}$) & I
& 0.9/0.6 & 60.12 & 46.98
& 0.9/0.6 & 47.52 & 32.89
& 0.9/0.6 & 86.11 & \underline{67.31} \\

\slerprow
\hspace{5pt}+SLERP ($\alpha^\star$) & I
& 0.6/0.5 & 61.34 & 47.40
& 0.9/0.7 & 47.52 & 32.96
& 1.0/0.6 & 86.20 & \underline{67.31} \\

SVD & I+T
& -- & 58.09 & 39.40
& -- & 46.99 & 30.88
& -- & 86.20 & 64.18 \\

\slerprow
\hspace{5pt}+SLERP ($\hat{\alpha}$) & I+T
& 0.7/0.7 & 64.32 & 46.22
& 0.7/0.7 & \textbf{49.27} & 32.95
& 0.7/0.7 & 86.62 & 67.19 \\

\slerprow
\hspace{5pt}+SLERP ($\alpha^\star$) & I+T
& 0.4/0.5 & \textbf{65.64} & 47.49
& 0.7/0.7 & \textbf{49.27} & 32.95
& 0.8/0.6 & \underline{87.07} & \underline{67.31} \\

\newrow
\textbf{\pairref New model (SigLIP1)} & --
& -- & 58.09 & 39.40
& -- & 46.99 & 30.88
& -- & 86.20 & 64.18 \\

\bottomrule
\end{tabular}
\end{adjustbox}
\end{table*}

Tabs.~\ref{tab:all_datasets_r1_reindex} and~\ref{tab:all_datasets_r1_siglip_reindex} report the re-indexing setting, where the gallery is allowed to be re-indexed. 
This setting differs from the compatibility evaluation in Eq.~\ref{eq:empirical_compatibility_general}, since the gallery is no longer fixed in the old-model space. 
Instead, both query and gallery embeddings can be interpolated using the same SLERP weight: \(\alpha=0\) corresponds to the old-model representations, whereas \(\alpha=1\) corresponds to the SVD-aligned new-model representations.

In this setting, the SVD endpoint is equivalent to evaluating the new model directly. 
Because the orthogonal Procrustes map is applied to both query and gallery embeddings, it preserves all pairwise inner products and therefore leaves the retrieval ranking unchanged relative to the new model's original representation space. 
Consequently, the SVD results coincide with the new-model results.

SLERP often improves over this endpoint, indicating that interpolation can be beneficial even when re-indexing is allowed. 
These gains are particularly notable because SLERP requires no model retraining and, in several cases, matches or exceeds XBT, which requires compatibility training. 
Overall, these results show that SLERP not only restores compatibility with an existing old-model index, but can also produce stronger interpolated representations when re-indexing is permitted.

\section{Target-Set Budget Sensitivity}
\label{sec:app_validation_budget}

\begin{figure}
    \centering
    \includegraphics[width=0.9\linewidth]{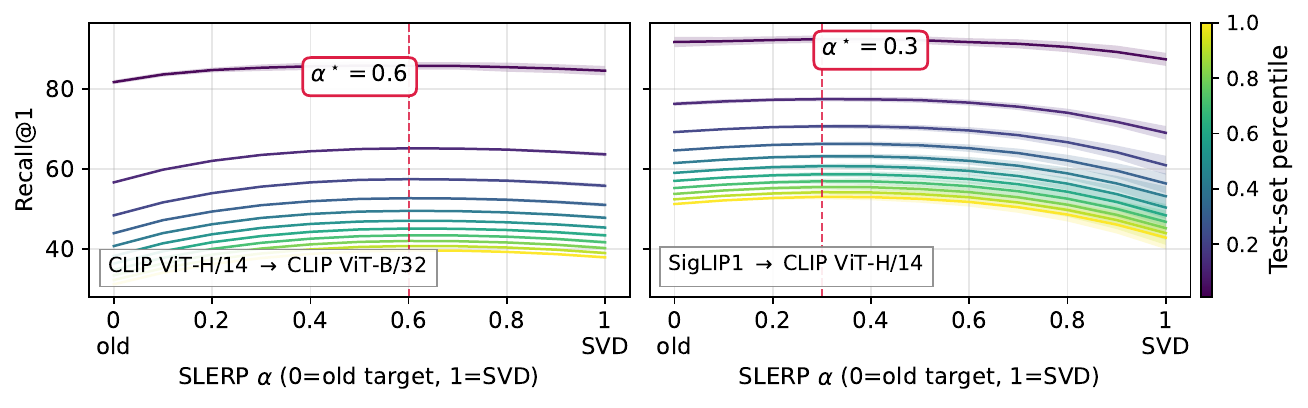}  
    \caption{
    Recall@1 along the SLERP path under different target-set fractions.
    Solid curves report the average over three random seeds; shaded regions denote \(\pm 1\sigma\) across seeds.
    }
    \label{fig:thresh-val-curves}
\end{figure}

Fig.~\ref{fig:thresh-val-curves} examines how much target-distribution data is required to estimate a dataset-specific SLERP weight. 
For each target-set fraction, we subsample the target benchmark using three random seeds and compute Recall@1 along the SLERP path. 
The solid curves show the mean Recall@1 across seeds, and the shaded regions indicate one standard deviation.

Across both model pairs, even small target subsets identify the same high-performing region of the SLERP curve. 
Reducing the target-set budget increases the variance of the Recall@1 estimates, but the maximizer remains stable and close to that obtained using the full target set. 
This suggests that a reliable dataset-specific interpolation weight can be estimated from a modest subset of the target distribution, without requiring evaluation on the full benchmark.

This analysis is separate from the protocol used in the main results, where \(\hat{\alpha}\) is selected once on the CC3M validation set and then fixed for evaluation on Flickr30k, COCO, and NoCaps. 
Its purpose is to assess a complementary scenario in which a small subset from the deployment distribution is available. 
In this setting, the results indicate that such a subset is sufficient to estimate a reliable dataset-specific SLERP weight.

\section{Additional Flip Analyses}
\label{sec:app_flips}

\begin{figure*}[t]
    \centering
    \begin{subfigure}[b]{0.7\linewidth}
        \centering
        \includegraphics[width=\linewidth]{images/flips/legend_flips.pdf}
    \end{subfigure}
    \hspace{-10pt}
    \begin{subfigure}[b]{0.485\textwidth}
        \centering
        \includegraphics[width=\textwidth]{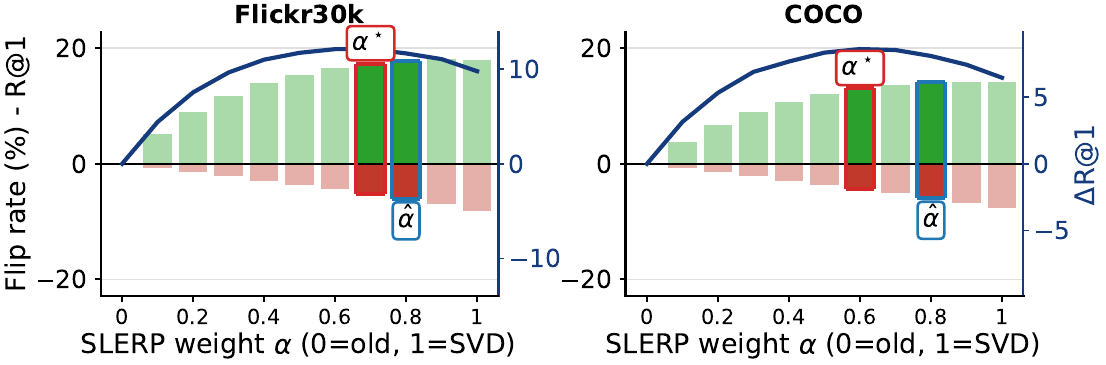}
        \caption{CLIP ViT-H/14 $\rightarrow$ CLIP ViT-B/32}
        \label{fig:i2t-r1-h14-b32}
    \end{subfigure}
    \begin{subfigure}[b]{0.485\textwidth}
        \centering
        \includegraphics[width=\textwidth]{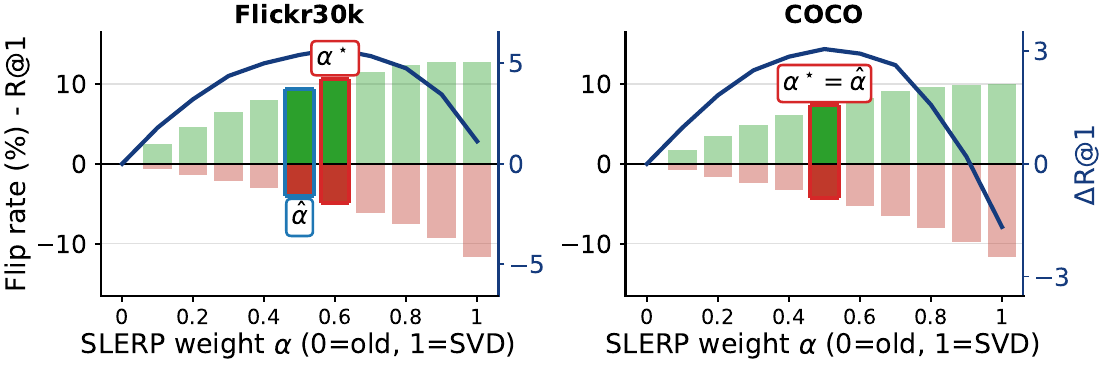}
        \caption{SigLIP1 $\rightarrow$ CLIP ViT-H/14}
        \label{fig:i2t-r1-siglip-h14}
    \end{subfigure}
    \caption{
    \textbf{I2T Recall@1 flip-rate trade-off on Flickr30k and COCO.}
    Bars show positive and negative flip rates relative to the old-to-old evaluation.
    The blue curve reports \(\Delta\mathrm{R@1}=\mathrm{PFR}-\mathrm{NFR}\).
    Here \(\alpha=0\) corresponds to old-model queries and \(\alpha=1\) to SVD-aligned new-model queries.
    Red and blue markers denote the dataset-specific oracle \(\alpha^\star\) and the CC3M-selected weight \(\hat{\alpha}\), respectively.
    }
    \label{fig:app_flips_i2t_flickr_coco}
\end{figure*}

\begin{figure*}[t]
    \centering
    \begin{subfigure}[b]{0.7\linewidth}
        \centering
        \includegraphics[width=\linewidth]{images/flips/legend_flips.pdf}
    \end{subfigure}
    \hspace{-10pt}
    \begin{subfigure}[b]{0.485\textwidth}
        \centering
        \includegraphics[width=\textwidth]{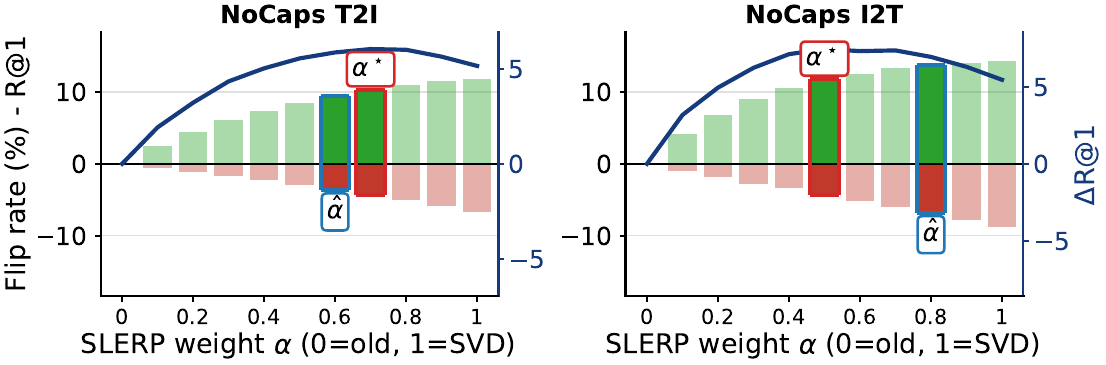}
        \caption{CLIP ViT-H/14 $\rightarrow$ CLIP ViT-B/32}
        \label{fig:nocaps-r1-h14-b32}
    \end{subfigure}
    \begin{subfigure}[b]{0.485\textwidth}
        \centering
        \includegraphics[width=\textwidth]{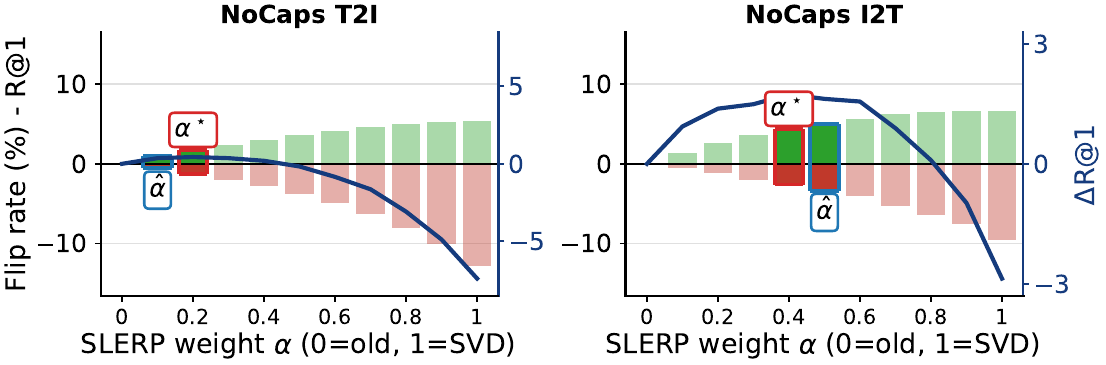}
        \caption{SigLIP1 $\rightarrow$ CLIP ViT-H/14}
        \label{fig:nocaps-r1-siglip-h14}
    \end{subfigure}
    \caption{
    \textbf{NoCaps Recall@1 flip-rate trade-off.}
    We report both T2I and I2T flip rates for the same old--new pairs used in the main paper.
    Bars show positive and negative flip rates relative to the old-to-old evaluation, and the blue curve reports \(\Delta\mathrm{R@1}=\mathrm{PFR}-\mathrm{NFR}\).
    Here \(\alpha=0\) corresponds to old-model queries and \(\alpha=1\) to SVD-aligned new-model queries.
    Red and blue markers denote the dataset-specific oracle \(\alpha^\star\) and the CC3M-selected weight \(\hat{\alpha}\), respectively.
    }
    \label{fig:app_flips_nocaps}
\end{figure*}

Figures~\ref{fig:app_flips_i2t_flickr_coco} and~\ref{fig:app_flips_nocaps} extend the flip analysis from Sec.~\ref{sec:results}. 
Fig.~\ref{fig:app_flips_i2t_flickr_coco} presents I2T Recall@1 flips on Flickr30k and COCO for CLIP ViT-H/14 \(\rightarrow\) CLIP ViT-B/32 and SigLIP1 \(\rightarrow\) CLIP ViT-H/14. 
Fig.~\ref{fig:app_flips_nocaps} presents the corresponding NoCaps analysis for both T2I and I2T retrieval. 
In all plots, \(\alpha=0\) corresponds to the old-model query representation, whereas \(\alpha=1\) corresponds to the SVD-aligned new-model query representation.

These additional results are consistent with the trend observed in Fig.~\ref{fig:slerp-flips}. 
The optimal interpolation weight is governed by the balance between positive and negative flips, rather than by the positive-flip rate alone. 
Across retrieval directions and datasets, SLERP improves Recall@1 when it corrects more previously incorrect queries than it causes previously correct queries to fail.

\section{Per-Query Oracle Analysis}
\label{sec:per_query_oracle}

The geometric characterization in Sec.~\ref{sec:geometry_to_retrieval} is formulated with respect to an idealized retrieval-optimal direction \(q^\ast\), which is not observable in practice.
We therefore complement it with an empirical analysis at the level of individual queries, asking whether interior points of the evaluated SLERP arc provide retrieval benefits that neither endpoint provides.
This analysis uses query-level test labels and is intended only as an oracle upper bound; it does not constitute a deployable weight-selection procedure.

\begin{figure}[t]
    \centering
    \begin{subfigure}[b]{0.32\linewidth}
        \centering
        \includegraphics[width=\linewidth]{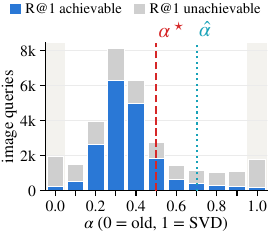}
        \caption{Per-query oracle weights.}
        \label{fig:per_query_oracle_hist}
    \end{subfigure}\hfill
    \begin{subfigure}[b]{0.32\linewidth}
        \centering
        \includegraphics[width=\linewidth]{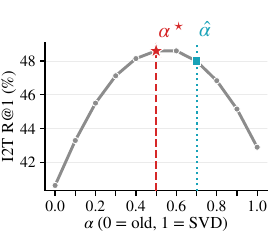}
        \caption{Fixed-weight Recall@1.}
        \label{fig:per_query_oracle_r1}
    \end{subfigure}\hfill
    \begin{subfigure}[b]{0.32\linewidth}
        \centering
        \includegraphics[width=\linewidth]{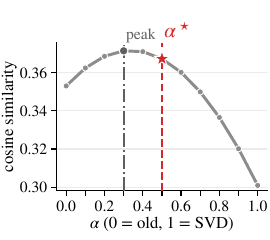}
        \caption{Relevant-item similarity.}
        \label{fig:per_query_oracle_sim}
    \end{subfigure}
\caption{
Per-query oracle analysis on Flickr30k image-to-text retrieval for CLIP ViT-L/14 \(\rightarrow\) CLIP ViT-B/32 with text-only alignment.
\textbf{(a)}~Distribution of per-query oracle interpolation weights, where each query is assigned the weight attaining the best rank of its highest-ranked relevant item, with ties broken by the highest relevant-item similarity and then by proximity to \(\alpha=0.5\).
Queries are divided into those for which Recall@1 is achievable at some interpolation weight (blue, \(n=18{,}344\)) and those for which it is not (gray, \(n=12{,}670\)); shaded columns mark the two endpoints of the interpolation path.
\textbf{(b)}~Recall@1 obtained with a single fixed interpolation weight applied to all queries, maximized at the dataset-level optimum \(\alpha^\star=0.5\).
\textbf{(c)}~Cosine similarity between the interpolated query and its highest-scoring relevant item at each weight, averaged over Recall@1-achievable queries.
Red dashed and cyan dotted lines mark the dataset oracle \(\alpha^\star=0.5\) and the CC3M-selected weight \(\hat{\alpha}=0.7\), respectively.
Because most successful queries attain rank one over many weights, the oracle distribution follows the relevant-item similarity and peaks at \(\alpha=0.3\), rather than at the Recall@1-optimal fixed weight \(\alpha^\star\).
}
    \label{fig:per_query_oracle}
\end{figure}

We consider Flickr30k image-to-text retrieval for CLIP ViT-L/14 \(\rightarrow\) CLIP ViT-B/32 with text-only Procrustes alignment, evaluated on the grid \(\mathcal A=\{0,0.1,\ldots,1\}\), where \(\alpha=0\) is the old-model query and \(\alpha=1\) is the SVD-aligned new-model query.
For each query \(i\in\mathcal Q\) and weight \(\alpha\in\mathcal A\), let \(c_i(\alpha)\in\{0,1\}\) indicate whether a relevant caption is retrieved at rank one, and let \(s_i(\alpha)\) denote the cosine similarity between the interpolated query and its highest-scoring relevant caption.
We distinguish two quantities: \(c_i(\alpha)\) measures retrieval success and depends on all gallery items, whereas \(s_i(\alpha)\) measures proximity to the relevant item alone.

The per-query oracle Recall@1 selects the best weight independently for each query,
\begin{equation}
\mathrm{R@1}_{\mathrm{PQ}}
=
\frac{1}{|\mathcal Q|}
\sum_{i\in\mathcal Q}
\max_{\alpha\in\mathcal A} c_i(\alpha).
\end{equation}
Among the \(31{,}014\) Flickr30k image queries, \(18{,}344\) achieve Recall@1 for at least one weight, giving \(\mathrm{R@1}_{\mathrm{PQ}}=59.15\%\).
To isolate the contribution of the interior of the arc, we compare it with an endpoint-only oracle that, for each query, may choose only between the old-model query and the SVD-aligned query, \(\max\{c_i(0),c_i(1)\}\).
This endpoint-only oracle reaches \(54.61\%\) (\(16{,}936\) queries).
The remaining \(1{,}408\) queries achieve Recall@1 exclusively at interior weights, so interior points account for a gain of \(4.54\) Recall@1 points that no per-query choice between the two endpoints can recover.
Beyond rank one, for \(6{,}647\) queries (\(21.43\%\)) the best relevant-item rank attained at an interior weight is strictly better than the best rank attained at either endpoint.

As Recall@1 is a thresholded event, most successful queries remain successful over a wide range of weights: \(96.57\%\) of the Recall@1-achievable queries are retrieved at rank one for more than one weight (\(8.53\) of the \(11\) weights on average), including \(8{,}873\) queries retrieved at rank one for every weight.
For such queries, Recall@1 alone does not identify a preferred position along the arc.

The relevant-item similarity \(s_i(\alpha)\) provides a direct observable counterpart of Theorem~\ref{thm:main}.
Instantiating \(q^\ast\) as the normalized embedding of a relevant caption \(g\), Theorem~\ref{thm:main} gives \(\langle q_\alpha,g\rangle=\rho\cos(\alpha\theta-\psi)\).
Therefore, if the similarity to \(g\) at some interior grid weight strictly exceeds its values at both endpoints, the in-plane projection of \(g\) lies in the relative interior of the minor arc from \(u\) to \(v\).
We observe this for \(99.87\%\) of all queries: for nearly every query, the similarity to at least one relevant caption is maximized strictly inside the arc.
Averaged over Recall@1-achievable queries, \(s_i(\alpha)\) peaks at \(\alpha=0.3\), whereas the fixed-weight Recall@1 peaks at \(\alpha^\star=0.5\) (Figs.~\ref{fig:per_query_oracle_r1} and~\ref{fig:per_query_oracle_sim}), since retrieval success also depends on how the similarities to non-relevant items change along the arc.

To assign a single oracle weight to each query, we select the weight with the best relevant-item rank; ties are broken by the highest relevant-item similarity \(s_i(\alpha)\), and any remaining tie in favor of the weight closest to \(\alpha=0.5\).
Fig.~\ref{fig:per_query_oracle_hist} shows the resulting distribution.
Because most successful queries are tied in rank over many weights, the similarity criterion determines their oracle weight, and the distribution follows the relevant-item similarity: it peaks at \(\alpha=0.3\) rather than at \(\alpha^\star=0.5\) or at the CC3M-selected \(\hat\alpha=0.7\).
Under this rule, \(17{,}912\) of the \(18{,}344\) Recall@1-achievable queries (\(97.65\%\)) and \(88.1\%\) of all queries are assigned an interior weight.
These fractions reflect where the relevant-item similarity is maximized among rank-optimal weights and should not be read as the fraction of queries that require interpolation to succeed, which is given by the \(1{,}408\) interior-only queries above.

\begin{wraptable}{r}{0.48\columnwidth}
\centering
\caption{Flickr30k I2T Recall@1 (\%) for CLIP ViT-L/14
\(\rightarrow\) ViT-B/32 with text-only alignment.}
\label{tab:per_query_oracle}
\small
\setlength{\tabcolsep}{3pt}
\begin{tabular}{@{}lc@{}}
\toprule
Method & R@1 \\
\midrule
\rowcolor{gray!8}
\textbf{\pairref Old model} & 40.62 \\
SVD & 42.89 \\
SLERP (\(\hat{\alpha}=0.7\)) & 48.00 \\
SLERP (\(\alpha^\star=0.5\)) & 48.61 \\
\rowcolor{gray!8}
\textbf{\pairref New model} & 48.72 \\
\midrule
Endpoints only (per-query oracle)  & 54.61 \\
\rowcolor{orange!9}
SLERP (per-query oracle) & \textbf{59.15} \\
\bottomrule
\end{tabular}
\end{wraptable}
Tab.~\ref{tab:per_query_oracle} summarizes the resulting upper bounds.
The per-query oracle reaches \(59.15\%\) I2T Recall@1, compared with \(40.62\%\) for the old model, \(42.89\%\) for SVD alignment alone, \(48.00\%\) for the fixed CC3M-selected weight, and \(48.61\%\) for the dataset-level oracle weight.
Part of this gain comes from choosing, per query, between the two endpoints (\(54.61\%\)); the remaining \(4.54\) points are attainable only at interior weights.
The per-query oracle also exceeds the \(48.72\%\) obtained by fully re-indexing with CLIP ViT-L/14; this comparison should not be interpreted as a deployable advantage, since the oracle uses query-level test labels to select a different weight for each query.

This experiment does not recover \(q^\ast\) itself.
It tests two observable counterparts of Theorem~\ref{thm:main}.
First, instantiating \(q^\ast\) as a relevant item, the interior condition of the theorem holds for \(99.87\%\) of queries, indicating that the residual direction between the old-model and aligned new-model queries consistently moves the query closer to its relevant items.
Second, at the level of the discrete retrieval event, interior weights yield Recall@1 successes unavailable at either endpoint for \(1{,}408\) queries and strictly improve the best relevant-item rank for \(21.43\%\) of queries.
Whether a higher relevant-item similarity translates into a retrieval gain depends on the margin to non-relevant items, consistent with the margin-based certification in Appendix~\ref{app:retrieval_margin}.
Finally, the gap between the per-query oracle and the fixed support-set-selected weight indicates that the preferred interpolation position varies across queries.

\section{Alignment and Endpoint-Combination Baselines}
\label{sec:app_baseline_ablation}

To identify the source of the improvements observed in the main experiments, we decompose the proposed approach into its three components: the alignment map, the operator that combines the two query representations, and the selection of the interpolation position.
We compare orthogonal Procrustes with non-isometric alignment maps, and SLERP with alternative ways of combining the old-model query \(u\) and the aligned new-model query \(v\).
The old-model gallery is always left unchanged throughout the analysis.

All results cover the five model pairs, three retrieval datasets, three alignment-support modalities, and both retrieval directions of Sec.~\ref{sec:experiments}, i.e., \(45\) configurations and \(90\) compatibility evaluations.
Every alignment map, regularization parameter, and interpolation parameter is estimated or selected only on the CC3M validation split used as alignment support set, separately for I2T and T2I Recall@1, and then used for evaluation on Flickr30k, COCO, and NoCaps.
Interpolation coefficients are selected over the grid \(\{0,0.1,\ldots,1\}\).
Compatibility is assessed with the empirical protocol of Eq.~\ref{eq:empirical_compatibility_general}.

\subsection{Alignment baselines}
Let \(\bar V\in\mathbb R^{N_a\times d_{\mathrm{new}}}\) and \(U\in\mathbb R^{N_a\times d_{\mathrm{old}}}\) be the support-set embeddings of Sec.~\ref{sec:geometry_to_retrieval}.
Each map below replaces the orthogonal Procrustes map \(R^\star\) used in our approach; the resulting query is normalized before retrieval, and no interpolation is applied.

\paragraph{Affine alignment.}
An unconstrained linear map with intercept is fitted by least squares,
\begin{equation}
(A^\star,b^\star)
=
\arg\min_{A,b}
\big\|\bar V A+\mathbf 1 b^{\top}-U\big\|_F^2 ,
\label{eq:affine_baseline}
\end{equation}
and the normalized aligned query is \(v=(\bar v A^\star+b^{\star\top})/\|\bar v A^\star+b^{\star\top}\|_2\).

\paragraph{Ridge alignment \cite{hoerl1970ridge}.}
The same map is fitted with Tikhonov regularization \cite{tikhonov1963solution},
\begin{equation}
(A^\star,b^\star)
=
\arg\min_{A,b}
\big\|\bar V A+\mathbf 1 b^{\top}-U\big\|_F^2
+\lambda\|A\|_F^2 ,
\label{eq:ridge_baseline}
\end{equation}
with \(\lambda\in\{10^{-6},10^{-4}\}\) selected directly on the support set.

\paragraph{One-sided CCA.}
The new-model embeddings are projected onto \(d_{\mathrm{old}}\) canonical directions with covariance regularization \(\epsilon\in\{10^{-6},10^{-4}\}\), while the old-model side is kept in its original coordinates so that the gallery remains unchanged. This is a special case of standard CCA \cite{hardoon2004canonical}.
With an untransformed target, the canonical projection followed by the least-squares map to the old coordinates reduces to regularized regression; one-sided CCA therefore coincides with ridge alignment, and indeed produces identical results in all \(45\) configurations.

\paragraph{Whitened Procrustes.}
Both embedding sets are centered and whitened with \(\epsilon\)-regularized covariances before solving the orthogonal Procrustes problem, and the aligned query is mapped back to the old-model coordinates and normalized; \(\epsilon\in\{10^{-4},10^{-2},10^{-1},1\}\) is selected on the support set.

\subsection{Endpoint-combination baselines}
All combination methods use the same orthogonally aligned endpoint \(v\) obtained with Procrustes.

\paragraph{Fixed normalized midpoint.}
The simplest combination uses no weight selection:
\begin{equation}
q_{\mathrm{mid}}
=
\frac{u+v}{\|u+v\|_2}
=
\operatorname{slerp}(u,v;0.5),
\label{eq:midpoint_baseline}
\end{equation}
where the second equality holds for unit, non-antipodal endpoints.
This baseline isolates the effect of combining the old and aligned-new representations from the effect of validation-based position selection.

\paragraph{Normalized linear interpolation.}
We obtain
\begin{equation}
q_{\beta}^{\mathrm{nlerp}}
=
\frac{(1-\beta)u+\beta v}
{\|(1-\beta)u+\beta v\|_2},
\qquad
\beta\in[0,1],
\label{eq:nlerp_baseline}
\end{equation}
with \(\beta\) selected on the support set using the same protocol as the SLERP weight.
As shown in Appendix~\ref{app:nlerp}, NLERP and SLERP traverse the same minor geodesic between \(u\) and \(v\) under a monotone reparameterization: SLERP is linear in angular displacement, whereas the angular position associated with an NLERP coefficient depends on the query-specific endpoint angle.

\paragraph{Score interpolation and score ensembling.}
For an old-model gallery embedding \(g_j\), interpolating the scores of the two endpoints gives
\begin{equation}
s_j^{\mathrm{score}}(\beta)
=
(1-\beta)\langle u,g_j\rangle
+
\beta\langle v,g_j\rangle
=
\big\langle (1-\beta)u+\beta v,\,g_j\big\rangle .
\label{eq:score_baseline}
\end{equation}
The ensemble of old-model scores and aligned-new-model scores is the same computation.
For a fixed query, the normalization in Eq.~\ref{eq:nlerp_baseline} rescales all gallery scores by the same positive constant, so score interpolation, score ensembling, and NLERP at the same coefficient induce identical rankings.

\paragraph{Query-adaptive NLERP.}
To test whether a query-dependent position helps, the coefficient is predicted from the endpoint agreement of each query,
\begin{equation}
\beta_i=\sigma\big(b_m+s_m\langle u_i,v_i\rangle\big),
\label{eq:adaptive_baseline}
\end{equation}
where \(\sigma\) is the logistic function and \((b_m,s_m)\) are selected separately for image and text queries on CC3M over \(b_m\in\{-3,-1.5,0,1.5,3\}\) and \(s_m\in\{-3,0,3,6\}\).
The rule uses no test labels: at inference it only requires the cosine between the two endpoints, which is available at negligible cost.

\subsection{Discussion}

\begin{wraptable}{r}{0.56\linewidth}
\vspace{-1.0\baselineskip}
\centering
\caption{Aggregate Recall@1 ablation over \(45\) configurations (\(90\) compatibility evaluations) reported in Tabs.~\ref{tab:flickr30k_no_reindex}, \ref{tab:coco_no_reindex}, and \ref{tab:nocaps_no_reindex}. \(\Delta_{\mathrm{SVD}}\) and \(\Delta_{\mathrm{old}}\) are averaged over both retrieval directions; Comp.\ counts evaluations satisfying Eq.~\ref{eq:empirical_compatibility_general}.}
\label{tab:endpoint_combination_ablation}
\footnotesize
\setlength{\tabcolsep}{2.5pt}
\begin{tabular}{@{}lccccc@{}}
\toprule
Method & I2T & T2I & \(\Delta_{\mathrm{SVD}}\) & \(\Delta_{\mathrm{old}}\) & Comp. \\
\midrule
\rowcolor{gray!8}
\textbf{\pairref Old-model query} & 53.36 & 34.26 & +1.06 & 0.00 & -- \\
\midrule
\multicolumn{6}{@{}l}{\textit{Alignment only}} \\
SVD (Procrustes) & 51.72 & 33.77 & 0.00 & \(-1.06\) & 45/90 \\
Affine & 31.69 & 23.10 & \(-15.35\) & \(-16.42\) & 11/90 \\
Ridge & 33.96 & 28.89 & \(-11.32\) & \(-12.38\) & 14/90 \\
One-sided CCA & 33.96 & 28.89 & \(-11.32\) & \(-12.38\) & 14/90 \\
Whitened Procrustes & 49.06 & 32.78 & \(-1.83\) & \(-2.89\) & 37/90 \\
\midrule
\multicolumn{6}{@{}l}{\textit{SVD + endpoint combination}} \\
Midpoint & \underline{57.76} & 37.23 & +4.75 & +3.68 & 72/90 \\
NLERP & 57.73 & \underline{37.38} & \underline{+4.81} & \underline{+3.74} & 84/90 \\
Score interp. & 57.73 & \underline{37.38} & \underline{+4.81} & \underline{+3.74} & 84/90 \\
Adaptive NLERP & \textbf{57.86} & \textbf{37.42} & \textbf{+4.89} & \textbf{+3.83} & \textbf{88/90} \\
\rowcolor{orange!9}
SLERP & 57.72 & \underline{37.38} & +4.80 & \underline{+3.74} & \underline{85/90} \\
\bottomrule
\end{tabular}
\vspace{-0.5\baselineskip}
\end{wraptable}

Tab.~\ref{tab:endpoint_combination_ablation} shows that orthogonal Procrustes is the strongest alignment method among the evaluated ones.
The non-isometric maps, although more flexible on the support set, generalize substantially worse to the target benchmarks: affine alignment loses \(15.35\) mean Recall@1 points relative to Procrustes, ridge regression and one-sided CCA lose \(11.32\), and whitened Procrustes loses \(1.83\), with compatibility dropping from \(45/90\) to \(11\), \(14\), \(14\), and \(37/90\), respectively.
Because these maps rescale or shear the new-model representation, they distort the inner-product geometry that retrieval relies on, whereas an isometry preserves it.
The table also shows that alignment alone is not sufficient: SVD remains below the unchanged old-model query by \(1.64\) I2T and \(0.49\) T2I Recall@1 points on average, which is why it satisfies compatibility in only half of the evaluations.

Replacing the SVD endpoint by the fixed normalized midpoint raises mean I2T and T2I Recall@1 from \(51.72\) and \(33.77\) to \(57.76\) and \(37.23\), a mean improvement of \(4.75\) points over SVD.
Relative to the unchanged old-model query, it improves I2T and T2I Recall@1 by \(4.40\) and \(2.97\) points, and it increases compatibility from \(45/90\) to \(72/90\).
Since this baseline has no tunable weight and uses no validation data, this gain is attributable to the combination of the two endpoints.

Support-set selected interpolation substantially improves compatibility robustness.
SLERP improves mean Recall@1 over SVD by \(4.80\) points, only \(0.05\) more than the midpoint, while raising compatibility from \(72/90\) to \(84/90\); relative to the old-model query, it improves I2T and T2I Recall@1 by \(4.36\) and \(3.12\) points.
Support-set selected NLERP reaches \(4.81\) points and \(84/90\), and score interpolation matches it, as expected from their ranking equivalence.\footnote{In one configuration the selected coefficient coincides with the old-model endpoint, where floating-point ties change the outcome of a single query; we count it as non-compatible, consistent with NLERP.}
Query-adaptive NLERP gives the best results, \(4.89\) points and \(88/90\), but improves mean Recall@1 by only \(0.09\) points over SLERP while requiring four additional calibrated parameters; its higher compatibility nevertheless indicates that query-dependent positions are a promising direction, consistent with the per-query oracle analysis in Appendix~\ref{sec:per_query_oracle}.

The ablation separates the two sources of improvement. Combining the old-model query with the orthogonally aligned new-model query accounts for most of the mean Recall@1 improvement, while validation-based position selection makes these improvements consistently satisfy the backward-compatibility criterion across model pairs, datasets, support modalities, and retrieval directions.
Similar improvements are observed with SLERP, NLERP, and score interpolation, indicating that they arise from exploiting the residual post-alignment geometry rather than from a particular combination operator.
We adopt SLERP as the canonical parameterization because \(\angle(u,q_\alpha^{\mathrm{slerp}})=\alpha\,\angle(u,v)\), so \(\alpha\) represents the same fraction of the angular displacement from the old-model query toward the aligned new-model query for every query. This constant-angular-speed property is precisely what the characterization in Sec.~\ref{sec:geometry_to_retrieval} relies on. Under NLERP or score interpolation, by contrast, the angular position associated with a fixed coefficient depends on the query-specific angle between the endpoints.

\section{Limitations}
\label{app:limit}

\paragraph{Selection of the interpolation weight.}
SLERP requires selecting an interpolation weight along the geodesic arc between the old-model query embedding and the SVD-aligned new-model query embedding.
The optimal value is not known a priori and must be estimated from data, either from the deployment distribution or from a separate support set.
In the main experiments, we avoid target-set tuning by selecting \(\hat{\alpha}\) on the CC3M alignment support set and applying the same value to Flickr30k, COCO, and NoCaps.
The target-set budget analysis in Appendix~\ref{sec:app_validation_budget} shows that, when a small labeled subset from the deployment distribution is available, a reliable interpolation weight can be estimated from only a modest number of examples.
At the same time, the retrieval results in Sec.~\ref{sec:experiments} and the zero-shot classification results in Appendix~\ref{sec:classification} show that SLERP is robust to this choice: support-set selected weights transfer well across target datasets and remain close to the best dataset-specific performance.

\paragraph{Query-side computation.}
SLERP requires no compatibility training and leaves the deployed gallery unchanged, but it needs both endpoint representations at inference time: the old-model query embedding and the SVD-aligned new-model query embedding.
Compared with using a single encoder, SLERP therefore adds query-side computation and requires both encoders, or suitable cached representations, to be available during deployment.
This overhead is independent of the gallery size: the existing gallery remains indexed in the old-model space, and only incoming queries require the additional computation.
A full model upgrade, by contrast, requires re-encoding and re-indexing the entire gallery before the new model can be used directly.
In practice, SLERP is best viewed as a compatibility mechanism for the migration period (Appendix~\ref{app:migration}) following a model update, keeping the old index usable while full gallery re-indexing proceeds offline.

\paragraph{Dependence on endpoint quality.}
Our method requires neither model retraining nor gradient-based optimization: it does not update either encoder, learn dataset-specific correction layers, or optimize compatibility losses on the target benchmarks. Its only fitted components are the closed-form Procrustes map and the interpolation weight, both estimated on the alignment support set.
Consequently, its performance depends on the quality and complementarity of the old-model query embedding, the new-model query embedding, and the estimated Procrustes map.
If both endpoint representations are weak for a deployment domain, interpolation alone cannot introduce new task-specific information.
The same property that limits the method also supports its transfer: because nothing is fitted to a specific target benchmark, the same support-set-selected weight carries over across retrieval benchmarks and to zero-shot classification with fixed old-model text prototypes.

\section{Deployment Cost and Migration-Period Analysis}
\label{app:migration}

This appendix quantifies the practical cost of the proposed approach.
We first analyze the query-side computation and latency required by SLERP queries against the old-model gallery, and relate them to the cost of full gallery re-indexing.
We then evaluate SLERP during the migration period, in which the gallery is progressively re-encoded with the new aligned model, and compare it with partial gallery backfilling. Unless otherwise stated, we consider the CLIP ViT-L/14 \(\rightarrow\) CLIP ViT-B/32 upgrade with text-only Procrustes support.
 
\subsection{Query-Side Cost and Re-indexing Trade-off}
\label{app:migration_cost}
 
\begin{table}[t]
\centering
\caption{Per-query serving cost for CLIP ViT-L/14 \(\rightarrow\) CLIP ViT-B/32 with a gallery of \(10^7\) items on one NVIDIA A100.
Encoder compute is taken from the OpenCLIP profiles and excludes the Procrustes projection (\(\approx 0.79\) MFLOPs) and the search.
Latency is the sum of independently measured median (p50) encoder and IVF-PQ search components; in the two-GPU setting, the old and new encoders are executed concurrently on separate devices.}
\label{tab:deploy_latency}
\small
\setlength{\tabcolsep}{4.5pt}
\begin{tabular}{@{}lcccccc@{}}
\toprule
& & & \multicolumn{2}{c}{Encoder compute (GFLOPs)} & \multicolumn{2}{c}{Latency (ms)} \\
\cmidrule(lr){4-5}\cmidrule(lr){6-7}
Serving configuration & Gallery & Query encoders & I2T & T2I & I2T & T2I \\
\midrule
Old model & $\phi_{\mathrm{old}}$ & $\phi_{\mathrm{old}}$ & 8.82 & 5.96 & 19.93 & 19.95 \\
Full re-indexing & $\phi_{\mathrm{new}}$ & $\phi_{\mathrm{new}}$ & 162.03 & 13.30 & 25.59 & 20.40 \\
\midrule
SLERP, one GPU & $\phi_{\mathrm{old}}$ & $\phi_{\mathrm{old}} + \phi_{\mathrm{new}}$ & 170.85 & 19.26 & 31.04 & 25.76 \\
SLERP, two GPUs & $\phi_{\mathrm{old}}$ & $\phi_{\mathrm{old}} + \phi_{\mathrm{new}}$ & 170.85 & 19.26 & 25.87 & 20.70 \\
SLERP, cached old query & $\phi_{\mathrm{old}}$ & $\phi_{\mathrm{new}}$ & 162.03 & 13.30 & 25.87 & 20.68 \\
\bottomrule
\end{tabular}
\end{table}
 
SLERP requires both the old-model query embedding \(u\) and the SVD-aligned new-model query embedding \(v\) for every incoming query.
Its main cost is therefore the evaluation of the two encoders, whereas the remaining operations are negligible.
According to the OpenCLIP profiles, the ViT-B/32 image and text encoders require \(8.82\) and \(5.96\) GFLOPs per input, respectively, while the corresponding ViT-L/14 encoders require \(162.03\) and \(13.30\) GFLOPs.
By comparison, the \(768\times512\) Procrustes projection requires \(393{,}216\) multiply-accumulate operations, or approximately \(0.79\) MFLOPs, and the spherical interpolation operation is linear in the embedding dimension.
The gallery, its index, and the complexity of approximate nearest-neighbor search are left unchanged, since each query remains a single \(d_{\mathrm{old}}\)-dimensional unit vector searched against the old-model index.
Aggregated over \(10^5\) queries and a gallery of \(10^7\) items, the encoder compute of SLERP amounts to \(17.09\) PFLOPs for I2T and \(1.93\) PFLOPs for T2I retrieval, compared with \(0.88\) and \(0.60\) PFLOPs for old-model queries.
Full re-indexing followed by new-model queries instead requires \(149.20\) PFLOPs and \(1.62\) EFLOPs, respectively, the latter being dominated by the re-encoding of the image gallery.

Tab.~\ref{tab:deploy_latency} reports the corresponding query latency, measured on an NVIDIA A100 with an IVF-PQ index built over \(10^7\) synthetic gallery vectors.
When the two encoders are evaluated sequentially on a single GPU, SLERP increases the latency by approximately \(5.4\) ms relative to the fully re-indexed system.
Since \(u\) and \(v\) are computed independently, the two encoders can be executed concurrently, reducing the encoding latency from \(\phi_{\mathrm{old}}+\phi_{\mathrm{new}}\) to \(\max(\phi_{\mathrm{old}},\phi_{\mathrm{new}})\).
With the encoders placed on separate GPUs, or when the old-model query embedding is cached, the gap to the fully re-indexed system reduces to approximately \(0.3\) ms.
In all configurations, the projection and interpolation stages contribute less than \(1\) ms, and the overhead is dominated by encoder inference.

\subsection{Migration Period with Partial Backfilling}
\label{app:migration_backfill}
 
\begin{figure}[t]
\centering
\includegraphics[width=\linewidth]{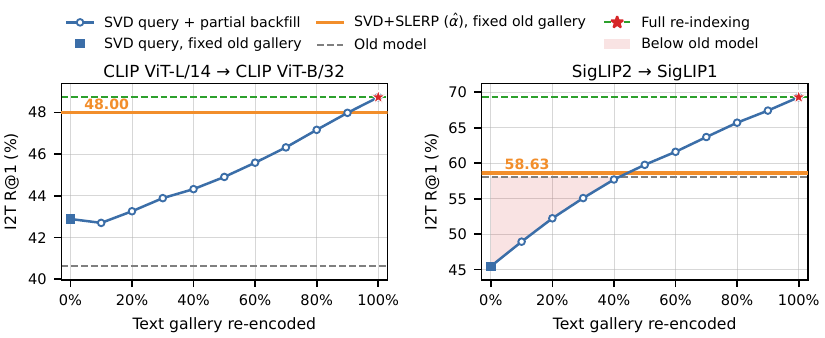}
\caption{Image-to-text Recall@1 during partial backfilling of the text gallery on Flickr30k, with text-only Procrustes support.
The partial-backfill baseline issues SVD-aligned new-model queries against a mixed gallery in which an increasing fraction of old-model caption embeddings is replaced by new-model embeddings.
At \(0\%\), the baseline coincides with the SVD query on the fixed old-model gallery (square), and at \(100\%\) with full re-indexing (star).
SVD+SLERP uses the CC3M-selected \(\hat{\alpha}\) and leaves the gallery unchanged.
Shaded regions mark backfilling fractions at which the baseline falls below the old-model system and thus violates Eq.~\ref{eq:empirical_compatibility_general}.}
\label{fig:partial_backfill}
\end{figure}
 
In practice, large galleries are rarely re-encoded at once; instead, they are backfilled progressively while the system remains in service~\cite{ramanujan2022forward,jaecklefastfill}.
We therefore compare SLERP with partial gallery backfilling for image-to-text retrieval on Flickr30k.
In the partial-backfill baseline, a random fraction \(f\in\{0\%,10\%,\ldots,100\%\}\) of the captions is re-encoded with the new model, while the remaining captions retain their old-model embeddings, and queries are computed with the aligned new-model encoder.
Consequently, \(f=0\%\) recovers the SVD baseline with a fixed old-model gallery, and \(f=100\%\) recovers full re-indexing.
SLERP, instead, uses the CC3M-selected \(\hat{\alpha}\) of Tab.~\ref{tab:all_datasets_r1} and leaves the gallery entirely unchanged. We report the results of the analysis in Fig.~\ref{fig:partial_backfill} for CLIP ViT-L/14 \(\rightarrow\) CLIP ViT-B/32 and SigLIP2 \(\rightarrow\) SigLIP1 model pairs.

For CLIP ViT-L/14 \(\rightarrow\) CLIP ViT-B/32, SLERP matches or exceeds partial backfilling at every evaluated fraction below full re-indexing.
Without re-encoding any gallery item, SLERP reaches \(48.00\) Recall@1, whereas partial backfilling requires re-encoding \(90\%\) of the text gallery (\(139{,}563\) captions) to reach \(47.98\).
Relative to the old model, SLERP thus recovers \(91.1\%\) of the improvement obtained by full re-indexing (\(48.72\)).
For SigLIP2 \(\rightarrow\) SigLIP1, the SVD-aligned query falls far below the old model (\(45.46\) vs.\ \(58.09\)), and partial backfilling exceeds the old model only once \(50\%\) of the text gallery has been re-encoded.
SLERP, in contrast, satisfies the compatibility criterion (\(58.63\)) with an unchanged gallery, and partial backfilling surpasses it only once between \(40\%\) and \(50\%\) of the captions have been re-encoded.

As backfilling progresses, direct retrieval with the new model becomes preferable, and once re-indexing is complete, the old encoder can be retired.
Combining SLERP queries for non-backfilled items with new-model queries for backfilled items is a natural extension of this analysis, also supported by \cite{seo2025metric}.

\clearpage
\section*{NeurIPS Paper Checklist}

\begin{enumerate}

\item {\bf Claims}
    \item[] Question: Do the main claims made in the abstract and introduction accurately reflect the paper's contributions and scope?
    \item[] Answer: \answerYes{}{} %
    \item[] Justification: The abstract and introduction accurately describe the paper's contributions and scope: a training-free query-side SLERP method after orthogonal Procrustes alignment for backward-compatible VLM retrieval. The stated theoretical claims are developed in Sec.~\ref{sec:geometry_to_retrieval} and Appendix \ref{app:retrieval_margin}, and the empirical claims are supported by experiments across CLIP, SigLIP, and SigLIP2 on Flickr30k, COCO, and NoCaps in Sec.~\ref{sec:experiments}.
    \item[] Guidelines:
    \begin{itemize}
        \item The answer \answerNA{} means that the abstract and introduction do not include the claims made in the paper.
        \item The abstract and/or introduction should clearly state the claims made, including the contributions made in the paper and important assumptions and limitations. A \answerNo{} or \answerNA{} answer to this question will not be perceived well by the reviewers. 
        \item The claims made should match theoretical and experimental results, and reflect how much the results can be expected to generalize to other settings. 
        \item It is fine to include aspirational goals as motivation as long as it is clear that these goals are not attained by the paper. 
    \end{itemize}

\item {\bf Limitations}
    \item[] Question: Does the paper discuss the limitations of the work performed by the authors?
    \item[] Answer: \answerYes{} %
    \item[] Justification: The paper includes an introduction to the limitations discussion in Sec.~\ref{sec:conclusion} with a dedicated Appendix \ref{app:limit}, covering interpolation-weight selection, the need to compute both old and new query embeddings at inference time, and the dependence of the training-free method on the quality of the endpoint representations and Procrustes alignment.
    \item[] Guidelines:
    \begin{itemize}
        \item The answer \answerNA{} means that the paper has no limitation while the answer \answerNo{} means that the paper has limitations, but those are not discussed in the paper. 
        \item The authors are encouraged to create a separate ``Limitations'' section in their paper.
        \item The paper should point out any strong assumptions and how robust the results are to violations of these assumptions (e.g., independence assumptions, noiseless settings, model well-specification, asymptotic approximations only holding locally). The authors should reflect on how these assumptions might be violated in practice and what the implications would be.
        \item The authors should reflect on the scope of the claims made, e.g., if the approach was only tested on a few datasets or with a few runs. In general, empirical results often depend on implicit assumptions, which should be articulated.
        \item The authors should reflect on the factors that influence the performance of the approach. For example, a facial recognition algorithm may perform poorly when image resolution is low or images are taken in low lighting. Or a speech-to-text system might not be used reliably to provide closed captions for online lectures because it fails to handle technical jargon.
        \item The authors should discuss the computational efficiency of the proposed algorithms and how they scale with dataset size.
        \item If applicable, the authors should discuss possible limitations of their approach to address problems of privacy and fairness.
        \item While the authors might fear that complete honesty about limitations might be used by reviewers as grounds for rejection, a worse outcome might be that reviewers discover limitations that aren't acknowledged in the paper. The authors should use their best judgment and recognize that individual actions in favor of transparency play an important role in developing norms that preserve the integrity of the community. Reviewers will be specifically instructed to not penalize honesty concerning limitations.
    \end{itemize}

\item {\bf Theory assumptions and proofs}
    \item[] Question: For each theoretical result, does the paper provide the full set of assumptions and a complete (and correct) proof?
    \item[] Answer: \answerYes{}{} %
    \item[] Justification: The paper states the assumptions for the geometric analysis in Sec.~\ref{sec:geometry_to_retrieval}, including normalized embeddings and the non-degenerate condition. The main theoretical results are demonstrated in Appendix \ref{sec:proof_theo_retrieval_gap}, and Appendices~\ref{sec:proof_lemma} and \ref{app:retrieval_margin}.
    \item[] Guidelines:
    \begin{itemize}
        \item The answer \answerNA{} means that the paper does not include theoretical results. 
        \item All the theorems, formulas, and proofs in the paper should be numbered and cross-referenced.
        \item All assumptions should be clearly stated or referenced in the statement of any theorems.
        \item The proofs can either appear in the main paper or the supplemental material, but if they appear in the supplemental material, the authors are encouraged to provide a short proof sketch to provide intuition. 
        \item Inversely, any informal proof provided in the core of the paper should be complemented by formal proofs provided in appendix or supplemental material.
        \item Theorems and Lemmas that the proof relies upon should be properly referenced. 
    \end{itemize}

    \item {\bf Experimental result reproducibility}
    \item[] Question: Does the paper fully disclose all the information needed to reproduce the main experimental results of the paper to the extent that it affects the main claims and/or conclusions of the paper (regardless of whether the code and data are provided or not)?
    \item[] Answer: \answerYes{}
    \item[] Justification: The paper uses closed-form alignment and interpolation procedures, and provides the details needed to reproduce the main experiments in Sec.~\ref{sec:setup} and Sec.~\ref{sec:protocol}. These include the public model checkpoints, datasets, alignment support set, support-modality choices, Procrustes/SVD alignment procedure, SLERP grid search for $\hat{\alpha}$, baselines, metrics, and compatibility criterion. The main results in Sec.~\ref{sec:results} are further supported by detailed Recall@1/5/10 results and additional analyses in Appendices~\ref{sec:app_detailed_performance}, \ref{sec:classification}, \ref{sec:reindexing}, and \ref{sec:app_validation_budget}.
    \item[] Guidelines:
    \begin{itemize}
        \item The answer \answerNA{} means that the paper does not include experiments.
        \item If the paper includes experiments, a \answerNo{} answer to this question will not be perceived well by the reviewers: Making the paper reproducible is important, regardless of whether the code and data are provided or not.
        \item If the contribution is a dataset and\slash or model, the authors should describe the steps taken to make their results reproducible or verifiable. 
        \item Depending on the contribution, reproducibility can be accomplished in various ways. For example, if the contribution is a novel architecture, describing the architecture fully might suffice, or if the contribution is a specific model and empirical evaluation, it may be necessary to either make it possible for others to replicate the model with the same dataset, or provide access to the model. In general. releasing code and data is often one good way to accomplish this, but reproducibility can also be provided via detailed instructions for how to replicate the results, access to a hosted model (e.g., in the case of a large language model), releasing of a model checkpoint, or other means that are appropriate to the research performed.
        \item While NeurIPS does not require releasing code, the conference does require all submissions to provide some reasonable avenue for reproducibility, which may depend on the nature of the contribution. For example
        \begin{enumerate}
            \item If the contribution is primarily a new algorithm, the paper should make it clear how to reproduce that algorithm.
            \item If the contribution is primarily a new model architecture, the paper should describe the architecture clearly and fully.
            \item If the contribution is a new model (e.g., a large language model), then there should either be a way to access this model for reproducing the results or a way to reproduce the model (e.g., with an open-source dataset or instructions for how to construct the dataset).
            \item We recognize that reproducibility may be tricky in some cases, in which case authors are welcome to describe the particular way they provide for reproducibility. In the case of closed-source models, it may be that access to the model is limited in some way (e.g., to registered users), but it should be possible for other researchers to have some path to reproducing or verifying the results.
        \end{enumerate}
    \end{itemize}

\item {\bf Open access to data and code}
    \item[] Question: Does the paper provide open access to the data and code, with sufficient instructions to faithfully reproduce the main experimental results, as described in supplemental material?
    \item[] Answer: \answerNo{} %
    \item[] Justification: The experiments use publicly available datasets and model checkpoints, and the paper describes the closed-form Procrustes/SVD alignment and SLERP evaluation protocol in Sec.~\ref{sec:setup} and Sec.~\ref{sec:protocol}. A minimal reference implementation of the proposed method is publicly available at \url{https://github.com/miccunifi/SLERP_backward_compatibility}. However, exact run commands and full reproduction scripts for the proposed method and baselines are not yet included; they will be released in the same repository.
    \item[] Guidelines:
    \begin{itemize}
        \item The answer \answerNA{} means that paper does not include experiments requiring code.
        \item Please see the NeurIPS code and data submission guidelines (\url{https://neurips.cc/public/guides/CodeSubmissionPolicy}) for more details.
        \item While we encourage the release of code and data, we understand that this might not be possible, so \answerNo{} is an acceptable answer. Papers cannot be rejected simply for not including code, unless this is central to the contribution (e.g., for a new open-source benchmark).
        \item The instructions should contain the exact command and environment needed to run to reproduce the results. See the NeurIPS code and data submission guidelines (\url{https://neurips.cc/public/guides/CodeSubmissionPolicy}) for more details.
        \item The authors should provide instructions on data access and preparation, including how to access the raw data, preprocessed data, intermediate data, and generated data, etc.
        \item The authors should provide scripts to reproduce all experimental results for the new proposed method and baselines. If only a subset of experiments are reproducible, they should state which ones are omitted from the script and why.
        \item At submission time, to preserve anonymity, the authors should release anonymized versions (if applicable).
        \item Providing as much information as possible in supplemental material (appended to the paper) is recommended, but including URLs to data and code is permitted.
    \end{itemize}

\item {\bf Experimental setting/details}
    \item[] Question: Does the paper specify all the training and test details (e.g., data splits, hyperparameters, how they were chosen, type of optimizer) necessary to understand the results?
    \item[] Answer: \answerYes{} %
    \item[] Justification: The paper specifies the experimental setting in Sec.~\ref{sec:setup} and Sec.~\ref{sec:protocol}, including the evaluated model pairs, public checkpoints, datasets, CC3M alignment/validation split, support-modality choices, baselines, Recall@K metrics, compatibility criterion, and the SLERP grid used to select $\hat{\alpha}$. Since the proposed method is training-free and uses closed-form Procrustes/SVD alignment, no optimizer or training hyperparameters are required for the main method.
    \item[] Guidelines:
    \begin{itemize}
        \item The answer \answerNA{} means that the paper does not include experiments.
        \item The experimental setting should be presented in the core of the paper to a level of detail that is necessary to appreciate the results and make sense of them.
        \item The full details can be provided either with the code, in appendix, or as supplemental material.
    \end{itemize}

\item {\bf Experiment statistical significance}
    \item[] Question: Does the paper report error bars suitably and correctly defined or other appropriate information about the statistical significance of the experiments?
    \item[] Answer: \answerNo{} %
    \item[] Justification: Most of the main experiments are deterministic evaluations based on frozen pretrained models and closed-form Procrustes/SVD alignment, so the paper reports point estimates for the main retrieval and classification results. The only analysis involving randomness is the target-set budget sensitivity study in Appendix~\ref{sec:app_validation_budget}, where target subsets are sampled with three random seeds and the paper reports the mean Recall@1 curve with shaded $\pm 1\sigma$ bands (see Fig.~\ref{fig:thresh-val-curves}).
    \item[] Guidelines:
    \begin{itemize}
        \item The answer \answerNA{} means that the paper does not include experiments.
        \item The authors should answer \answerYes{} if the results are accompanied by error bars, confidence intervals, or statistical significance tests, at least for the experiments that support the main claims of the paper.
        \item The factors of variability that the error bars are capturing should be clearly stated (for example, train/test split, initialization, random drawing of some parameter, or overall run with given experimental conditions).
        \item The method for calculating the error bars should be explained (closed form formula, call to a library function, bootstrap, etc.)
        \item The assumptions made should be given (e.g., Normally distributed errors).
        \item It should be clear whether the error bar is the standard deviation or the standard error of the mean.
        \item It is OK to report 1-sigma error bars, but one should state it. The authors should preferably report a 2-sigma error bar than state that they have a 96\% CI, if the hypothesis of Normality of errors is not verified.
        \item For asymmetric distributions, the authors should be careful not to show in tables or figures symmetric error bars that would yield results that are out of range (e.g., negative error rates).
        \item If error bars are reported in tables or plots, the authors should explain in the text how they were calculated and reference the corresponding figures or tables in the text.
    \end{itemize}

\item {\bf Experiments compute resources}
    \item[] Question: For each experiment, does the paper provide sufficient information on the computer resources (type of compute workers, memory, time of execution) needed to reproduce the experiments?
    \item[] Answer: \answerNo{} %
    \item[] Justification: The paper reports the experimental protocol, datasets, model checkpoints, and evaluation metrics, but it does not currently provide detailed compute-resource information such as GPU/CPU type or memory.
    \item[] Guidelines:
    \begin{itemize}
        \item The answer \answerNA{} means that the paper does not include experiments.
        \item The paper should indicate the type of compute workers CPU or GPU, internal cluster, or cloud provider, including relevant memory and storage.
        \item The paper should provide the amount of compute required for each of the individual experimental runs as well as estimate the total compute. 
        \item The paper should disclose whether the full research project required more compute than the experiments reported in the paper (e.g., preliminary or failed experiments that didn't make it into the paper). 
    \end{itemize}
    
\item {\bf Code of ethics}
    \item[] Question: Does the research conducted in the paper conform, in every respect, with the NeurIPS Code of Ethics \url{https://neurips.cc/public/EthicsGuidelines}?
    \item[] Answer: \answerYes{} %
    \item[] Justification: The research is conducted according to the NeurIPS Code of Ethics.
    \item[] Guidelines:
    \begin{itemize}
        \item The answer \answerNA{} means that the authors have not reviewed the NeurIPS Code of Ethics.
        \item If the authors answer \answerNo, they should explain the special circumstances that require a deviation from the Code of Ethics.
        \item The authors should make sure to preserve anonymity (e.g., if there is a special consideration due to laws or regulations in their jurisdiction).
    \end{itemize}

\item {\bf Broader impacts}
    \item[] Question: Does the paper discuss both potential positive societal impacts and negative societal impacts of the work performed?
    \item[] Answer: \answerNo{}
    \item[] Justification: The paper focuses on a technical method for backward-compatible vision-language retrieval and discusses practical limitations in Appendix~\ref{app:limit}, but it does not separately discuss both positive and negative societal impacts. Potential impacts include reducing the cost of model upgrades for retrieval systems, while possible risks include improving retrieval systems used in sensitive applications such as surveillance or biased content search.
    \item[] Guidelines:
    \begin{itemize}
        \item The answer \answerNA{} means that there is no societal impact of the work performed.
        \item If the authors answer \answerNA{} or \answerNo, they should explain why their work has no societal impact or why the paper does not address societal impact.
        \item Examples of negative societal impacts include potential malicious or unintended uses (e.g., disinformation, generating fake profiles, surveillance), fairness considerations (e.g., deployment of technologies that could make decisions that unfairly impact specific groups), privacy considerations, and security considerations.
        \item The conference expects that many papers will be foundational research and not tied to particular applications, let alone deployments. However, if there is a direct path to any negative applications, the authors should point it out. For example, it is legitimate to point out that an improvement in the quality of generative models could be used to generate Deepfakes for disinformation. On the other hand, it is not needed to point out that a generic algorithm for optimizing neural networks could enable people to train models that generate Deepfakes faster.
        \item The authors should consider possible harms that could arise when the technology is being used as intended and functioning correctly, harms that could arise when the technology is being used as intended but gives incorrect results, and harms following from (intentional or unintentional) misuse of the technology.
        \item If there are negative societal impacts, the authors could also discuss possible mitigation strategies (e.g., gated release of models, providing defenses in addition to attacks, mechanisms for monitoring misuse, mechanisms to monitor how a system learns from feedback over time, improving the efficiency and accessibility of ML).
    \end{itemize}
    
\item {\bf Safeguards}
    \item[] Question: Does the paper describe safeguards that have been put in place for responsible release of data or models that have a high risk for misuse (e.g., pre-trained language models, image generators, or scraped datasets)?
    \item[] Answer: \answerNA{} %
    \item[] Justification: The paper does not release new pretrained models, image generators, scraped datasets, or other assets with high misuse risk. Our work uses existing public pretrained vision-language models and benchmark datasets.
    \item[] Guidelines:
    \begin{itemize}
        \item The answer \answerNA{} means that the paper poses no such risks.
        \item Released models that have a high risk for misuse or dual-use should be released with necessary safeguards to allow for controlled use of the model, for example by requiring that users adhere to usage guidelines or restrictions to access the model or implementing safety filters. 
        \item Datasets that have been scraped from the Internet could pose safety risks. The authors should describe how they avoided releasing unsafe images.
        \item We recognize that providing effective safeguards is challenging, and many papers do not require this, but we encourage authors to take this into account and make a best faith effort.
    \end{itemize}

\item {\bf Licenses for existing assets}
    \item[] Question: Are the creators or original owners of assets (e.g., code, data, models), used in the paper, properly credited and are the license and terms of use explicitly mentioned and properly respected?
    \item[] Answer: \answerNo{}
    \item[] Justification: The paper uses existing assets, including public pretrained model checkpoints and benchmark datasets, and credits their original sources in Sec.~\ref{sec:setup}. However, the current version does not explicitly list the licenses or terms of use for each dataset, model checkpoint, or code asset used in the experiments.
    \item[] Guidelines:
    \begin{itemize}
        \item The answer \answerNA{} means that the paper does not use existing assets.
        \item The authors should cite the original paper that produced the code package or dataset.
        \item The authors should state which version of the asset is used and, if possible, include a URL.
        \item The name of the license (e.g., CC-BY 4.0) should be included for each asset.
        \item For scraped data from a particular source (e.g., website), the copyright and terms of service of that source should be provided.
        \item If assets are released, the license, copyright information, and terms of use in the package should be provided. For popular datasets, \url{paperswithcode.com/datasets} has curated licenses for some datasets. Their licensing guide can help determine the license of a dataset.
        \item For existing datasets that are re-packaged, both the original license and the license of the derived asset (if it has changed) should be provided.
        \item If this information is not available online, the authors are encouraged to reach out to the asset's creators.
    \end{itemize}

\item {\bf New assets}
    \item[] Question: Are new assets introduced in the paper well documented and is the documentation provided alongside the assets?
    \item[] Answer: \answerNA{}
    \item[] Justification: The paper does not introduce or release new assets.
    \item[] Guidelines:
    \begin{itemize}
        \item The answer \answerNA{} means that the paper does not release new assets.
        \item Researchers should communicate the details of the dataset\slash code\slash model as part of their submissions via structured templates. This includes details about training, license, limitations, etc. 
        \item The paper should discuss whether and how consent was obtained from people whose asset is used.
        \item At submission time, remember to anonymize your assets (if applicable). You can either create an anonymized URL or include an anonymized zip file.
    \end{itemize}

\item {\bf Crowdsourcing and research with human subjects}
    \item[] Question: For crowdsourcing experiments and research with human subjects, does the paper include the full text of instructions given to participants and screenshots, if applicable, as well as details about compensation (if any)? 
    \item[] Answer: \answerNA{} %
    \item[] Justification: The paper does not involve crowdsourcing nor research with human subjects.
    \item[] Guidelines:
    \begin{itemize}
        \item The answer \answerNA{} means that the paper does not involve crowdsourcing nor research with human subjects.
        \item Including this information in the supplemental material is fine, but if the main contribution of the paper involves human subjects, then as much detail as possible should be included in the main paper. 
        \item According to the NeurIPS Code of Ethics, workers involved in data collection, curation, or other labor should be paid at least the minimum wage in the country of the data collector. 
    \end{itemize}

\item {\bf Institutional review board (IRB) approvals or equivalent for research with human subjects}
    \item[] Question: Does the paper describe potential risks incurred by study participants, whether such risks were disclosed to the subjects, and whether Institutional Review Board (IRB) approvals (or an equivalent approval/review based on the requirements of your country or institution) were obtained?
    \item[] Answer: \answerNA{} %
    \item[] Justification: The paper does not involve crowdsourcing nor research with human subjects.
    \item[] Guidelines:
    \begin{itemize}
        \item The answer \answerNA{} means that the paper does not involve crowdsourcing nor research with human subjects.
        \item Depending on the country in which research is conducted, IRB approval (or equivalent) may be required for any human subjects research. If you obtained IRB approval, you should clearly state this in the paper. 
        \item We recognize that the procedures for this may vary significantly between institutions and locations, and we expect authors to adhere to the NeurIPS Code of Ethics and the guidelines for their institution. 
        \item For initial submissions, do not include any information that would break anonymity (if applicable), such as the institution conducting the review.
    \end{itemize}

\item {\bf Declaration of LLM usage}
    \item[] Question: Does the paper describe the usage of LLMs if it is an important, original, or non-standard component of the core methods in this research? Note that if the LLM is used only for writing, editing, or formatting purposes and does \emph{not} impact the core methodology, scientific rigor, or originality of the research, declaration is not required.
    \item[] Answer: \answerNA{}
    \item[] Justification: The use of LLMs was limited to writing, editing, or formatting.
    \item[] Guidelines:
    \begin{itemize}
        \item The answer \answerNA{} means that the core method development in this research does not involve LLMs as any important, original, or non-standard components.
        \item Please refer to our LLM policy in the NeurIPS handbook for what should or should not be described.
    \end{itemize}

\end{enumerate}

\end{document}